\theoremstyle{plain}
\theoremstyle{definition}
\theoremstyle{remark}
\def\eqref#1{equation~\ref{#1}}
\def\1{\bm{1}}
\def\vu{{\bm{u}}}
\DeclareMathAlphabet{\mathsfit}{\encodingdefault}{\sfdefault}{m}{sl}
\SetMathAlphabet{\mathsfit}{bold}{\encodingdefault}{\sfdefault}{bx}{n}
\newcommand{\E}{\mathbb{E}}
\newcommand{\R}{\mathbb{R}}
\theoremstyle{remark}
\def\tco{{\it train-clean-100}}
\def\devclean{{\it dev-clean}}
\def\devother{{\it dev-other}}
\def\testclean{{\it test-clean}}
\def\testother{{\it test-other}}
\newcommand{\librispeech}{{LibriSpeech}}
\newcommand{\sigmareparam}{$\sigma$Reparam\xspace}
\newcommand{\hparams}{hyperparameters\xspace}
\newcommand{\hparam}{hyperparameter\xspace}
\newcommand{\Hparams}{Hyperparameters\xspace}
\definecolor{ApplePrimaryChartRed}{RGB}{227,94,105}
\global\long\def\R{\mathbb{R}}%
\global\long\def\b\sigma{\boldsymbol{\sigma}}%
\title{Stabilizing Transformer Training by Preventing Attention Entropy Collapse}
\author{
  Shuangfei Zhai\thanks{Equal contribution}~~, Tatiana Likhomanenko$^*$, Etai Littwin$^*$, Dan Busbridge$^*$, Jason Ramapuram$^*$, \\ {\bf Yizhe Zhang, Jiatao Gu, Josh Susskind} \\
  Apple \\
  \texttt{\{szhai,antares,elittwin,dbusbridge,jramapuram,yizzhang,jgu32,jsusskind\}@apple.com} \\
\\
}
\begin{document}
\maketitle

\begin{abstract}
Training stability is of great importance to Transformers. 
In this work, we investigate the training dynamics of Transformers by examining the evolution of the attention layers. 
In particular, we track the attention entropy for each attention head during the course of training, which is a proxy for model sharpness. 
We identify a common pattern across different architectures and tasks, where low attention entropy is accompanied by high training instability, which can take the form of oscillating loss or divergence. 
We denote the pathologically low attention entropy, corresponding to highly concentrated attention scores, as \emph{entropy collapse}. 
As a remedy, we propose \sigmareparam, a simple and efficient solution where we reparametrize all linear layers with spectral normalization and an additional learned scalar. 
We demonstrate that \sigmareparam successfully prevents entropy collapse in the attention layers, promoting more stable training. Additionally, we prove a tight lower bound of the attention entropy, which decreases exponentially fast with the spectral norm of the attention logits, providing additional motivation for our approach.
We conduct experiments with \sigmareparam on image classification, image self-supervised learning, machine translation, speech recognition, and language modeling tasks.
We show that \sigmareparam provides stability and robustness with respect to the choice of hyperparameters, going so far as enabling training 
(a) a Vision Transformer {to competitive performance} without warmup, weight decay, layer normalization or adaptive optimizers; 
(b) deep architectures in machine translation and 
(c) speech recognition to competitive performance without warmup and adaptive optimizers.
Code is available at {\small \url{https://github.com/apple/ml-sigma-reparam}}.

\end{abstract}

\keywords{Transformers, self-attention, optimization, stability, spectral normalization, self-supervised learning, vision, speech, language, contrastive learning}

\startcontents[sections]
\printcontents[sections]{l}{1}

\begin{figure}[ht!]
\centering
\begin{minipage}{.48\textwidth}
  \input{newfig1/old_fig1.tex}
\end{minipage}%
\hspace{0.3cm}
\begin{minipage}{.48\textwidth}
  \input{newfig1/fig1-intervention.tex}
\end{minipage}
\end{figure}

\section{Introduction}
Transformers~\citep{vaswani2017attention} are state-of-the-art models in many application domains.
Despite their empirical success and wide adoption, great care often needs to be taken in order to achieve good training stability and convergence. In the original paper~\citep{vaswani2017attention}, residual connections and Layer Normalizations (LNs)~\citep{ba2016layer} are extensively used for each attention and MLP block (specifically, in the post-LN fashion). There has since been various works attempting to promote better training stability and robustness. For example, the pre-LN~\citep{radford2019language} scheme has gained wide popularity, where one moves the placement of LNs to the beginning of each residual block. Others have argued that it is important to properly condition the residual connections. \citet{bachlechner2021rezero} proposes to initialize the residual connections to zero to promote better signal propagation. \citet{zhang2018fixup, huang2020improving} remove LNs with carefully designed initializations. 

In this work, we study the training instability of Transformers through the lens of training dynamics. We start by monitoring the entropy of the attention maps averaged over all query positions, heads and examples.
We have found that the attention entropy is tightly correlated with the model's stability and convergence. 
In particular, small attention entropy is often accompanied with slow convergence, fluctuations in training loss and, in the worst case, divergence. As a motivator, we plot the attention entropy curves of a highly optimized Vision Transformer (ViT)~\citep{dosovitskiy2020image,touvron2021training} in Figure \ref{fig:vit_entropy}. We observe an initial loss oscillation happening at the same time with sharp dips of the attention entropy curves. When doubling the default learning rate, all attention entropy collapses to near zero and training diverges. 
In addition, we show in \Cref{fig:ssl-stats-40,fig:mt-stability-short} two sets of experiments of baseline Transformers models with training instability occurring at the same time of entropy collapse. And more generally, similar observations can be made in a wide range of model/task settings if hyperparameters such as learning rate, warmup, initialization are not carefully tuned. 

To further demonstrate this connection, we modify the Transformer to have a global temperature by dividing the pre-softmax (logits) matrix of each attention mechanism by a scalar quantity whose default value is 1.
Modifying the temperature gives direct control over the attention entropy, enabling the investigation of a causal connection between entropy collapse and training instability (see \Cref{fig:interventions} and \Cref{fig:causal-temp-start,fig:causal-temp-value} in Appendix~\ref{sec:app-causal}).
Here we train a ViT-B/16 on ImageNet1k. At an \emph{intervention epoch} we modify the temperature from its default value to 0.1.
We see that when performing this intervention during warmup, attention entropy drops to near zero and training becomes unstable. 
A late intervention also causes a drop in entropy and accuracy curves, however, the model is able to recover to a higher attention entropy regime, although yielding a lower accuracy than non-intervened training.

To further understand these phenomena, we computed the \emph{sharpness} -- the largest singular value of the Hessian (the second order derivative of the loss with respect to the model parameters), as its magnitude has implications for training stability
\cite{
DBLP:conf/icml/GhorbaniKX19,
DBLP:conf/bigdataconf/YaoGKM20,
cohen2021gradient,
DBLP:journals/corr/abs-2207-14484,
DBLP:journals/corr/abs-2110-04369}.
When sharpness exceeds an algorithm-dependent stability threshold, training iterations diverge
\cite{cohen2021gradient,DBLP:journals/corr/abs-2207-14484}.
We see that interventions inducing the largest drop in attention entropy result in the sharpness exceeding the stability threshold, whereas the later interventions do not cause the threshold to be crossed, explaining how they can recover. 
For details on the empirical setup and additional results see \Cref{sec:app-causal}.

The empirical correlation of entropy collapse and training instability leads to the following questions: 
1) How do we prevent entropy collapse? 
2) Can we improve training stability by doing so? 
We answer these by showing that {entropy collapse can be effectively prevented by controlling the spectral norms of the query and key projections. 
In particular, we prove a tight lower bound on the attention entropy, which decreases exponentially fast with the growth of the spectral norm of the attention matrix logits. This bound suggests that entropy collapse can occur swiftly when letting the spectral norm of the weights increase uncontrollably}. We then provide a simple fix, \sigmareparam, which reparameterizes all weight matrices by sequentially applying Spectral Normalization~\citep{miyato2018spectral} and a learned multiplicative scalar. Intuitively, $\sigma$Reparam decouples the update of the spectral norms of weights from their dimensionality, which allows them to update smoothly and in a controlled way. 
Also note that \sigmareparam does not change the model space, which allows one to learn an equally expressive model. 

We evaluate five tasks: image classification, self-supervised learning (SSL), machine translation, automatic speech recognition (Appendix~\ref{sec:asr}), and language modeling (Appendix~\ref{sec:language-modeling}). We highlight the empirical results as follows:

    1. We show that entropy collapse is commonly observed in the baseline models of various benchmarks.
    
    2. Image classification: \sigmareparam enables a drastically simplified ViT training recipe by removing pre-LN, learning rate warmup, weight decay and not requiring adaptive optimizers. This recipe leads to equivalent (or slightly better) model performance against baseline training strategies, all the while reducing training duration by 16\% .
    
    3. Self-supervised learning: \sigmareparam helps to drastically improve the stability and robustness of the SimCLR training, improving upon existing baselines.
    
    4. Machine translation: \sigmareparam allows us to stabilize very deep post-LN architectures up to 100L-100L encoder-decoder layers.
    
    5.  Speech recognition: \sigmareparam allows us to improve training stability and simplify 
    the training recipe for post-LN Transformer by removing learning rate warmup and adaptive optimization.
    
    6. Language modeling: \sigmareparam is compatible with causal Transformer architectures, and achieves results competitive with state-of-the-art without using post-LN.

\section{Related Works}
Transformers have relied heavily on LNs to achieve training stability. Besides the popular post-LN and pre-LN configurations, other variants have been proposed~\citep{wang2022deepnet,shleifer2021normformer,liu2020admin}. On the one hand, we show empirically that entropy collapse (and its accompanied training instability) happens even equipped with extensive use of normalization layers. On the other hand, \sigmareparam does not rely on specific normalization layers and can even work in the absence of it, while effectively smoothing the attention entropy curves. 

There have also been numerous attempts to design better Transformer initialization schemes, including~\citet{zhang2018fixup,huang2020improving,yang2022tensor,bachlechner2021rezero}. While proper initializations are indeed crucial to stable and fast training, we argue that the training dynamics (affected by the optimizer and training hyperparameters) is equally important. \sigmareparam in this sense is an orthogonal approach that specifically targets the entropy collapse problem, which makes it compatible with standard initialization methods and provides robust performance. 

\sigmareparam is a special case of weight reparameterization, which has found wide adoption in deep learning. WeightNorm (WN)~\citep{salimans2016weight} is a well known example of such methods, but its effectiveness in Transformers is limited. In ConvNets, simple additive weight reparameterization~\citep{ding2021repvgg} has been demonstrated useful in speeding up training convergence. To the best of our knowledge, \sigmareparam is the first simple reparameterization technique that provides competitive performance with well optimized baseline models. Normalizing weights by its spectral norm is also inspired by SpectralNorm~\citep{miyato2018spectral}, with the key difference that SpectralNorm explicitly constrains the model's capacity, which brings significant performance loss.

Another related line of work is the rank collapse of Transformer training, first identified by~\citep{dong2021attention}. Rank collapse refers to the degenerate state of attention where its output converges to a rank 1 matrix, where all tokens share the same representation. This analysis is further followed up by~\citep{noci2022signal} suggesting that rank collapse causes vanishing gradient of the attention query and keys. Entropy collapse, on the other hand, characterizes a different failure pattern, where the attention matrix remains high rank, and it tends to introduce high gradient norms rather than vanishing gradients (see Figure \ref{fig:ssl-stats-40}).

\section{Method}
\label{sec:method}
\subsection{Attention Entropy}
At the core of Transformers is dot-product attention. 
Let $X \in \R^{T \times d}$ denote an input sequence to an attention layer (we assume self-attention for simplicity of presentation), where $T,d$ are the number of tokens and the token dimension, respectively; and let $W_K,W_Q \in \R^{d \times n_a},W_V \in \R^{d \times n_v}$ denote the key, query and value matrices. 
A simple attention layer then computes $\text{Att}(X) = AXW_V$ where $A = \psi(a), a = XW_KW_Q^\top X^\top$ and $\psi$ is the row-wise softmax function. We define the attention entropy of a row $i$ of $A$ by $
\text{Ent}(A_i) = -\sum_{j=1}^T A_{i,j}\log(A_{i,j})$. Let $\text{Ent}(A) = \frac{1}{T}\sum_{i=1}^T\text{Ent}(A_i)$ denote the average attention entropy of $A$.
Our goal is to alleviate the entropy collapse problem and achieve a smooth evolution of the attention entropy through training.

We next investigate the properties of attention entropy. We show in the next theorem that $\text{Ent}(A)$ is directly connected to the spectral norm (the largest singular value) of $W_KW_Q^\top$. 

\vspace{0.2cm}
\begin{restatable}[Attention entropy lower bound]{thm}{spectral}\label{thm:bound}
Let 
$\sigma = \|W_KW_Q^\top\|_2$,
$\sigma_x = \|XX^\top\|_2$, 
$\b\sigma = \sigma\sigma_x$ and 
$\beta = \exp\left(-\b\sigma\sqrt{\tfrac{T}{T-1}}\right)$.
Then it holds that:
\begin{align}\label{eqn:bound}
\text{Ent}(A_i) &\geq \log\left(1 + (T-1)\,\beta\right)
+ \frac{\b\sigma\sqrt{T(T-1)}\,\beta}{1 + (T-1)\,\beta}.
\end{align}
Moreover, there exist inputs $X$ and weights $W_K,W_Q$ for which the lower bound in \cref{eqn:bound} is tight.

\end{restatable}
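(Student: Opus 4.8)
The plan is to reduce the theorem to a single scalar optimization: minimize the Shannon entropy of a softmax distribution over all logit vectors whose \emph{centered} Euclidean norm is at most $\b\sigma$, show the minimizer is a single ``spike'' over a uniform background, and then substitute. Throughout write $M = W_KW_Q^\top$, so $a = XMX^\top$ and $A = \psi(a)$.

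First I would control the logits. Because $\psi$ is invariant to adding a constant to a row, $\text{Ent}(A_i)$ depends only on the centered row $\tilde a_i = e_i^\top a\,(I - \tfrac1T\vone\vone^\top)$. Using $\sigma_x = \|XX^\top\|_2 = \|X\|_2^2$ and submultiplicativity, $\|a\|_2 = \|XMX^\top\|_2 \le \|X\|_2^2\|M\|_2 = \sigma_x\sigma = \b\sigma$; since centering is an orthogonal projection of norm $1$ and $e_i$ is a unit vector, $\|\tilde a_i\| \le \|a\|_2 \le \b\sigma$. Hence every centered row lies in $K = \{z\in\R^T : \vone^\top z = 0,\ \|z\|_2 \le \b\sigma\}$, and it suffices to lower bound $h(z) := \text{Ent}(\psi(z))$ on $K$. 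Next I would reduce to the boundary sphere: a direct computation gives $\partial h/\partial z_j = -p_j(z_j - \bar z_p)$ with $p = \psi(z)$, $\bar z_p = \sum_k p_k z_k$, so $\tfrac{d}{dt}h(tz) = -\tfrac1t\Var_p(z) \le 0$. Thus $h$ is nonincreasing along rays from the origin and its minimum over $K$ is attained on the compact set $\{\|z\| = \b\sigma\}\cap\{\vone^\top z = 0\}$.

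The crux is identifying the minimizer on this set. The Lagrange/KKT condition reads $-p_j(z_j - \bar z_p) = \lambda z_j + \mu$ for all $j$, so each coordinate $z_j$ is a root of $G(t) := -\tfrac{e^t}{Z}(t - \bar z_p) - \lambda t - \mu$ with $Z = \sum_k e^{z_k}$. Since $G''(t) = -\tfrac{e^t}{Z}(t - \bar z_p + 2)$ changes sign exactly once, $G'$ is unimodal, so $G$ has at most two critical points and at most three distinct roots; the minimizing coordinate multiset is therefore drawn from at most three values. A second-order/boundary comparison then collapses this to a two-valued pattern with the larger value of multiplicity one. I expect this step to be the main obstacle: promoting ``at most three distinct logit values'' to the exact single-spike configuration, and in particular excluding the mirror pattern (one small value and $T-1$ equal large ones), which is more spread out and must be shown to have strictly larger entropy. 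This is where the symmetry and concavity of softmax entropy have to be used with care.

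Finally I would evaluate and construct. Parametrize the spike as $z = ((T-1)c, -c, \dots, -c)$; the constraint $\|z\| = \b\sigma$ forces $c = \b\sigma/\sqrt{T(T-1)}$, hence $Tc = \b\sigma\sqrt{T/(T-1)}$ and $\psi(z) = \tfrac1Z(1, \beta, \dots, \beta)$ with $Z = 1 + (T-1)\beta$ and $\beta = e^{-Tc}$ exactly as defined. Substituting into $h = \log Z - \bar z_p$ and using $(T-1)\sqrt{T/(T-1)} = \sqrt{T(T-1)}$ reproduces the right-hand side of \cref{eqn:bound}, proving the inequality. For tightness I would realize this extremal $a$ exactly: take $X = \sqrt{\sigma_x}\,U$ with $U\in\R^{T\times d}$ having orthonormal rows ($UU^\top = I_T$, available once $d \ge T$), set $\hat v = \tfrac{1}{\sqrt{T(T-1)}}(T-1, -1, \dots, -1)$ (a unit vector orthogonal to $\vone$), and let $M = \sigma\,(U^\top e_i)(U^\top \hat v)^\top = W_KW_Q^\top$ (rank one, $\|M\|_2 = \sigma$). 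Then $a = \sigma_x\,UMU^\top = \b\sigma\, e_i\hat v^\top$, whose $i$-th row is the single-spike vector of norm $\b\sigma$, turning every inequality above into an equality.
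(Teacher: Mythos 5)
Your skeleton is the same as the paper's: bound the logit row by $\|a\|_2\le\sigma\sigma_x=\b\sigma$, reduce the theorem to minimizing softmax entropy over a norm ball in $\R^T$, analyze first-order conditions, and evaluate at the single-spike configuration. Several of the steps you did complete are in fact \emph{more} careful than the paper's proof: the paper silently assumes the norm constraint is active at the minimum, whereas your ray computation $\tfrac{d}{dt}h(tz)=-t\,\Var_{\psi(tz)}(z)\le 0$ justifies it; and the paper never actually constructs the tight instance claimed in the theorem, whereas your rank-one choice $M=\sigma\,(U^\top e_i)(U^\top\hat v)^\top$ with $X=\sqrt{\sigma_x}\,U$, $UU^\top=I_T$ (so $d\ge T$) realizes $a=\b\sigma\,e_i\hat v^\top$ exactly and makes every inequality an equality. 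Your closed-form evaluation of the spike entropy, with $c=\b\sigma/\sqrt{T(T-1)}$ and $\beta=e^{-Tc}$, reproduces the right-hand side of \cref{eqn:bound} correctly.

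The genuine gap is exactly the step you flagged. Your unimodality argument ($G''$ changes sign once, so $G'$ has at most two zeros and $G$ at most three roots) only yields ``at most three distinct logit values'' with unconstrained multiplicities, and nothing in your proposal excludes three-valued stationary points, $k\ge 2$ positive spikes, or the mirror pattern. The paper gets strictly further here with a different elementary device: it first rules out any zero coordinate (a zero coordinate forces $\E u=0$, which via the stationarity conditions forces all probabilities on nonzero coordinates to be equal and hence $u=0$, contradicting $\|u\|=\b\sigma$); then, after translating so that at least one coordinate is negative and $\E u>0$, it rewrites stationarity as $e^{u_{j_1}}(1-\E u/u_{j_1})=e^{u_{j_2}}(1-\E u/u_{j_2})$ and uses monotonicity of $f(x)=e^x(1-\gamma/x)$, $\gamma=\E u>0$, on $(-\infty,0)$ and on $[\gamma,\infty)$ (positive coordinates are shown to exceed $\E u$) to force all negative coordinates to coincide and all positive coordinates to coincide --- i.e., \emph{exactly} two distinct values. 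Grafting that argument in place of your three-root lemma would cut your candidate set down to two-valued patterns. Be aware, however, that even the paper then merely \emph{asserts} the multiplicity split (one component $\alpha>0$ and $T-1$ components $\beta<0$) without comparing against configurations with $k\ge2$ positive components; a fully airtight proof still needs that comparison, e.g., by writing the centered two-valued entropy at fixed norm $\b\sigma$ as an explicit function $h_k$ of the multiplicity $k$ and showing it is increasing in $k$. Your skewness heuristic (at fixed second moment, minimal entropy favors maximal third moment, hence $k=1$) is the right intuition but is a small-$\b\sigma$ expansion, not a proof. In short: correct reduction, correct endpoints, and an honest hole in the middle --- one the paper fills only partially, via a sharper classification lemma that you could adopt, plus an unproven multiplicity claim that both of you would need to close.
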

Therefore, for large $\sigma,T$, the minimum attainable entropy behaves like $\Omega(T\sigma e^{-\b\sigma})$, hence decreasing exponentially fast with $\b\sigma$. We note that the bound on the entropy in \cref{thm:bound} is tight in a sense that it is achievable for some inputs $X$. Proofs for Theorem~\ref{thm:bound} and the following Proposition are provided in Appendix \ref{sec:proof}.

{\textbf{Entropy collapse and training stability.}  Transformers are hard to train, requiring a careful tuning of a variety of hyperparameters. Notably, transformers can exhibit stages of training instability, with loss values oscillating uncontrollably, to the point of divergence. From a loss geometry perspective, we hypothesize that these regions of instability are caused when the weights enter a region of high curvature, a hypothesis supported by \cite{Chen2021WhenVT}, which showed that transformer models tend to converge to extremely sharp local minima. In this paper however, we step away from the loss geometry perspective and identify a novel empirical observation unique to the Transformer architecture. We observe that training instability and attention entropy collapse appear in tandem. Moreover, this observation is consistent across multiple settings and modalities (see \cref{fig:speech:pre-ln-collapse,fig:speech:lars,fig:mt-stability-short,fig:mt:postln,fig:ssl-stats-40}). Equipped with this observation, we might ask whether preventing attention collapse might in turn prevent training instability. We highlight that the affirmative answer provided in this paper could prove extremely practical, as attention entropy is easier to compute and potentially manipulate then directly tackling the loss geometry, which typically involves computing second derivatives, as in \cite{Foret2020SharpnessAwareMF}}. We next describe out method for preventing entropy collapse through a simple reparameterization scheme.

\subsection{\texorpdfstring{\sigmareparam}{Lg}}

\sigmareparam is a method to reparameterize the weights of a linear layer with:
\begin{equation} \label{eqn:sigmareparam}
    \widehat{W}=\frac{\gamma}{\sigma(W)}W,
\end{equation}
where $\sigma(W)\in\mathbb{R}$ is the spectral norm of $W$ and $\gamma\in\mathbb{R}$ is a learnable parameter, initialized to 1. In practice, $\sigma(W)$ can be computed via power iteration~\citep{mises1929praktische} as in SpectralNorm (SN)~\citep{miyato2018spectral}, see Algorithm~\ref{alg:sigmareparam} in Appendix~\ref{sec:app:impl} for a sketch implementation. Note that \sigmareparam brings little extra overhead as the power iteration mainly consists of two matrix vector products and is only performed on the parameters rather than activations. During inference, one can compute $\widehat{W}$ once and freeze it, which has the same cost of a regular linear layer.

\textbf{\sigmareparam decouples the update rate of spectral norm from the dimensionality of weights.}
{As is the case with other reparameterization techniques, \sigmareparam leaves the representational capacity of the network intact, however forces a different optimization dynamic. This property makes it distinct from SN, which explicitly constrains the model space. By absorbing the spectral norm $\sigma$ into a single parameter $\gamma$, \sigmareparam effectively forces the updates for $\gamma$ to be dimensionality independent.
This property is in contrast to the naive parameterization, where the spectral norm of weight matrices grows rapidly for large weight matrices when equipped with adaptive optimizers.}
To illustrate this, we adopt common assumptions in stochastic optimization, and model the stochastic gradients at some point in the optimization by $g = \mu + \epsilon \in \R^{w \times w}$, where $\mu$ is the mean and $\epsilon$ is a random variable with $\E [\epsilon] = \mathbf{0},\E [\epsilon^2] = n^2 \in \R^{w \times w}$. A typical Adam optimizer update attempts to approximate the following ideal update: $\Delta = \frac{\E [g]}{\sqrt{\E [g^2]}}$. The following proposition lower bounds the spectral norm of the ideal update $\sigma(\Delta)$:
\begin{restatable}[]{prop}{proposition}\label{prop}
It holds that:
\begin{align}\label{eqn:adaptive_bound}
\sigma(\Delta) \geq  \sqrt{w}\sqrt{1 - \frac{1}{w^2}\sum_{i,j=1}^w \frac{n_{i,j}^2}{\mu_{i,j}^2 + n_{i,j}^2}}.
\end{align}
\end{restatable}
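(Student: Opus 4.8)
The plan is to reduce the spectral-norm lower bound to an entrywise computation combined with the standard comparison between the operator norm and the Frobenius norm. First I would unpack the ideal update $\Delta = \E[g]/\sqrt{\E[g^2]}$ coordinatewise, reading the division and square root as entrywise operations (as in Adam's preconditioner). Since $g_{ij} = \mu_{ij} + \epsilon_{ij}$ with $\E[\epsilon_{ij}] = 0$ and $\E[\epsilon_{ij}^2] = n_{ij}^2$, we get $\E[g_{ij}] = \mu_{ij}$ and $\E[g_{ij}^2] = \mu_{ij}^2 + n_{ij}^2$, so that
\begin{equation*}
\Delta_{ij} = \frac{\mu_{ij}}{\sqrt{\mu_{ij}^2 + n_{ij}^2}}.
\end{equation*}

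The next ingredient is the inequality $\sigma(\Delta) \geq \|\Delta\|_F/\sqrt{w}$, which holds because $\Delta \in \R^{w\times w}$ has at most $w$ singular values, each bounded by the largest, giving $\|\Delta\|_F^2 = \sum_k \sigma_k(\Delta)^2 \leq w\,\sigma(\Delta)^2$. I would then evaluate the Frobenius norm directly from the closed form above, using the identity $\mu^2/(\mu^2+n^2) = 1 - n^2/(\mu^2+n^2)$:
\begin{equation*}
\|\Delta\|_F^2 = \sum_{i,j=1}^w \frac{\mu_{ij}^2}{\mu_{ij}^2 + n_{ij}^2} = w^2 - \sum_{i,j=1}^w \frac{n_{ij}^2}{\mu_{ij}^2 + n_{ij}^2}.
\end{equation*}
Substituting into $\sigma(\Delta) \geq \|\Delta\|_F/\sqrt{w}$ and pulling $w^2$ out of the square root produces exactly the stated bound $\sqrt{w}\sqrt{1 - \tfrac{1}{w^2}\sum_{i,j} n_{ij}^2/(\mu_{ij}^2+n_{ij}^2)}$.

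I expect no genuine obstacle: the argument is a two-line combination of an entrywise identity with the Frobenius/operator-norm comparison. The only point requiring care is the interpretation of the vectorized notation $\E[g]/\sqrt{\E[g^2]}$ and $\E[\epsilon^2] = n^2$ as entrywise operations, which I would state explicitly so the computation of $\E[g_{ij}^2]$ is unambiguous. I would also remark that the resulting bound depends only on the per-coordinate signal-to-noise ratios $\mu_{ij}^2/n_{ij}^2$ and scales like $\sqrt{w}$ whenever the noise does not dominate the signal, which is the qualitative conclusion the proposition is meant to support about adaptive updates inflating spectral norms with the weight dimension.
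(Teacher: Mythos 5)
Your proof is correct and is essentially identical to the paper's: the paper also uses $\sigma(\Delta)\geq \frac{1}{\sqrt{w}}\sqrt{\Tr(\Delta^\top\Delta)}$ (the operator/Frobenius comparison) together with the entrywise evaluation $\Delta_{i,j}=\mu_{i,j}/\sqrt{\mu_{i,j}^2+n_{i,j}^2}$ and the identity $\mu^2/(\mu^2+n^2)=1-n^2/(\mu^2+n^2)$. Your version merely spells out the entrywise interpretation of $\E[g]/\sqrt{\E[g^2]}$ more explicitly, which is a reasonable clarification but not a different argument.
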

The noise second moment $n^2$ is typically  in the order of $\mu^2$, hence \cref{eqn:adaptive_bound} indicates that the spectral norm of the ideal update should be large, growing linearly with $\sqrt{w}$. Moreover, for large batch sizes we would have 
$n^2 \ll1$, resulting in $\sigma(\Delta) \sim \sqrt{w}$
\footnote{This estimation would be exact for full batch optimization.}. 
While such a large spectral norm could be offset by a proper learning rate adjustment, this would be counterproductive since 1) a small learning rate typically induces inferior performance, and 2) architectures with layers of varying sizes, such as the case in Transformers, would require a per layer learning rate tuning. In contrast, \sigmareparam avoids this issue since the spectral norm of each layer is controlled by a single parameter $\gamma$, hence the size of its update does not scale with $w$ and is uniform across layers. 
This indicates \sigmareparam should provide the models of improved robustness with respect to learning rate and other related hyperparameters, by maintaining the spectral norm of the weights (and as a result the attention entropy) in a healthy regime.

\begin{table*}[t]
  \caption{Supervised image classification on ImageNet1k. The B/L/H refer to ViT-B/16, ViT-L/16 and ViT-H/14 variants respectively. The H and L variants have a known overfitting trend on this dataset \citep{he2022masked}. SN corresponds to the spectral normalization baseline \emph{without} the learnable scalar, while WN refers to the WeightNorm baseline. The WN configuration leads to immediate divergence without using pre-LN; we thus only report the result with WN + pre-LN.} \label{tab:vision_summary}
\centering
\small
\setlength{\tabcolsep}{3pt}
\begin{tabular}{l|cccc|cc}
  \toprule
  & DeiT (B) & \sigmareparam (B) & SN (B) & WN (B) & MAE (B/L/H) & \sigmareparam (B/L/H)\\
  \midrule
  Top-1 (\%) & 81.8  &\textbf{ 82.2} &69.81 &77.51 &82.1 / 81.5 / 80.90 & 81.88 / \textbf{82.41} / \textbf{81.09} \\
  Training Epochs & 300 & 300 & \textbf{250} & \textbf{250} & 300 / 200 / 200 & \textbf{250} / 300 / \textbf{170} \\    
  pre-LN     & Yes & \textbf{No} & \textbf{No} & Yes & Yes & \textbf{No} \\
  SGD & No & No & \textbf{Yes (LARS)} & No & No& \textbf{Yes (LARS)} \\
  Cosine Schedule & Yes & Yes & \textbf{No} & \textbf{No} & Yes & \textbf{No} \\
  LR Warmup & Yes & Yes & \textbf{No} & \textbf{No} & Yes & \textbf{No} \\
  Weight Decay & Yes & Yes & \textbf{No} & \textbf{No} & Yes & \textbf{No} \\
  \bottomrule
\end{tabular}
\end{table*}
\begin{figure*}[h!]
  \setlength{\tabcolsep}{3pt}
  \small
  \begin{minipage}{0.45\textwidth}
  \centering
  \includegraphics[width=\textwidth]{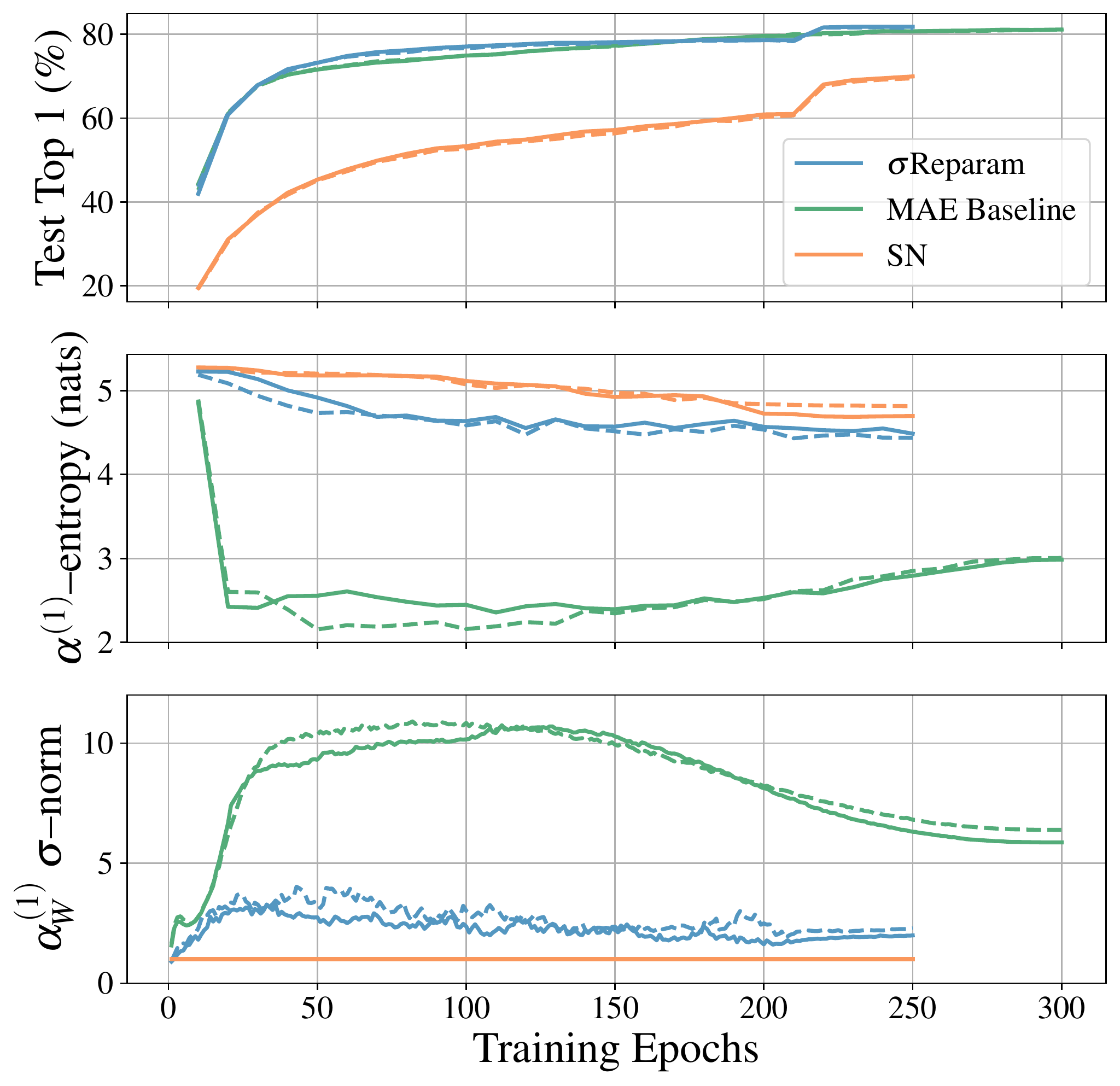}
  \end{minipage}%
  \hspace{0.2cm}
  \begin{minipage}{0.53\textwidth}
    \centering 
    \captionof{table}{Finetuned supervised image classification on ImageNet1k after pretraining on ImageNet21k (11M samples) or larger data. We compare \sigmareparam, trained for 90 epochs against DeiT3 \citep{DBLP:conf/eccv/TouvronCJ22} (trained for 90 [-90E] and 240 [-240E] epochs), an optimized finetuned CLIP \citep{DBLP:journals/corr/abs-2212-06138}, and a scaled supervised ViT-B trained on JFT-3B \citep{DBLP:conf/cvpr/Zhai0HB22}. All models compared use the ViT-B/16 architecture. \sigmareparam presents competitive results and sits in between the DeiT3-90E and DeiT3-240E runs, while not using pre-LN, LR warmup and only requiring a small weight-decay of $10^{-5}$.}\label{tab:imagenet21k}
    \resizebox{\columnwidth}{!}{
    \begin{tabular}{lccccc}
      \toprule
      & DeiT3-240E & DeiT3-90E & CLIP FT & ViT-B & \sigmareparam \\
      \midrule
      Test Top-1 (\%) & 86.7 & 85.2 & 86.6 & 86.6 & 85.84 \\
      EMA Top-1 (\%) & - & - & - & - & 85.87 \\        
      Dataset size & 11M & 11M & 400M & 3B & 11M \\
      Finetuning res & 384 & 224 & 384 & 384 & 384 \\
      pre-LN     & Yes & Yes & Yes & Yes & \textbf{No} \\
      Optimizer & LAMB & LAMB & AdamW & Adafactor & LAMB \\
      LR Schedule & Cos & Cos & Cos & r-sqrt & \textbf{step} \\
      LR Warmup & Yes & Yes & Yes & Yes & \textbf{No}  \\
      Weight Decay & Yes & Yes & Yes & Yes & Yes  \\
      \bottomrule
    \end{tabular}}      
  \end{minipage}
  \caption{\small ImageNet1k test performance, attention entropy, and largest singular value of attention weights of a supervised \sigmareparam ViT-B/16 alongside supervised MAE ViT-B/16 and spectral normalization (SN) baselines. Best (solid line) and worst (dashed line) trials of each method are presented. The MAE ViT-B/16 presents a more constrained attention entropy in contrast to the DeiT formulation from Figure \ref{fig:vit_entropy} due to the longer warmup, lower learning rate and stronger weight decay. While the SN baseline presents stable training, the model substantially under performs \sigmareparam. 
  } 
  \label{fig:joint_sup_vitb_and_21k}
\end{figure*}

\section{Experiments}
\subsection{Supervised Image Classification}\label{sec:supimg}

\textbf{Improved robustness.} We first start from a well tuned recipe with ViT-B on ImageNet1k \citep{deng2009imagenet,touvron2021training}, 
and vary its hyperparameters in the grid 
$\Big[
\texttt{baseLR} \in \{5\times 10^{-4}, 10^{-3}\}$, 
$\texttt{batchSize} \in \{1024, \,2048\}$, 
$\texttt{warmupEpochs} \in \{0, 5\}\Big]
$. 
{\bf 7/8 configurations} lead to divergence except for the default $\Big[5\times 10^{-4},$ $2048,$ $5\Big]$ hyperparameter. We next apply \sigmareparam to all the linear layers (including the initial patch embedding), and remove all the pre-LNs instances. All configurations in the same grid search converge with an average top-1 accuracy of 81.4\% ($\pm$0.52\%) demonstrating improved robustness with respect to \hparams. 

\textbf{Simplified recipe.} $\sigma$Reparam also enables a simplified framework for training ViT-B, ViT-L and ViT-H models, in contrast to state-of-the art ImageNet1k ViT training protocols such as
the fully supervised MAE recipe \citep{he2022masked}
and DeiT 
\citep{touvron2021training}, see Table \ref{tab:vision_summary}. In the case of ViT-B models, we are able to train for a shorter duration, remove all pre-LNs layers, remove learning rate (LR) warmup, remove cosine scheduling (requiring only a simple step schedule at 210 epochs) and use no weight decay. Furthermore, $\sigma$Reparam enables SGD training via LARS \citep{you2017large} (with momentum 0.9) -- something not possible with traditional ViT training protocols \citep{touvron2021training,he2022masked}. These simplifications also have the added benefit of reducing GPU memory overhead\footnote{We observe a 8.2\% memory reduction in full fp32 precision (for a 1:1 comparison) with a batch size of 86 per GPU.}. For the ViT-L model we relax the LR schedule back to cosine and slightly increase the training interval to 300 epochs. All models use FP32 precision on the attention and \sigmareparam operands and keep mixed precision training for the rest of the network. The full set of \hparams is available in 
\Cref{appendix:image_cls}. We note that for larger models like the ViT-L/16 and ViT-H/14 a slight weight decay cosine schedule from $0.0$ to $10^{-5}$ enables easier training.

To further understand the effect of \sigmareparam, we track both the attention entropy, and the largest singular value of the attention weight matrix over the course of training. In Figure \ref{fig:joint_sup_vitb_and_21k}, \sigmareparam maintains lower spectral norms for the attention weight matrices and presents a higher, but monotonically decreasing attention entropy throughout training. The benefit of such smooth and bounded attention entropy curves is reinforced by the accelerated performance observed in Test Top 1 and the 50 epoch reduction in training time for the \sigmareparam ViT-B/16 shown in Figure \ref{fig:joint_sup_vitb_and_21k}.

Finally, we extend \sigmareparam to a much larger 11M sample training dataset, ImageNet21k \citep{ridnik2021imagenet21k}, and train a ViT-B/16. We then finetune this model with ImageNet1k and report the performance in Table \ref{tab:imagenet21k}. We observe that \sigmareparam presents competitive results against ViT-B/16's trained on drastically larger datasets such as JFT3B \citep{DBLP:conf/cvpr/Zhai0HB22} and the 400M sample CLIP pre-training dataset \citep{DBLP:journals/corr/abs-2212-06138}, all the while presenting stable training and not requiring LayerNorm or LR warmup.

\subsection{Self-Supervised Training of Visual Representations}
\label{subsec:ssl}
\begin{table*}[h!]
  \caption{\small (Top) Best SimCLR ImageNet1k trial top 1 linear probe performance training for 300 epochs.
  \emph{\sigmareparam + pre-LN} yields the highest performing run, with \emph{Frozen Patcher} performing competitively. (Bottom) Configuration of the variants used in our stability analysis. The MoCo v3 weight initialization and patch initialization scheme are described in \citet{chen2021empirical}. 
  For full hyperparameters, see \Cref{tab:ssl-hparams} of \Cref{sec:app-ssl-hyperparameters}.}
  \label{tab:ssl-variants}
  \centering
  \small
  \begin{tabular}{lcccc}
    \toprule
    & Baseline & Frozen Patcher & \sigmareparam & \sigmareparam + pre-LN\\
    \midrule
    Top 1 @ 300 (ours) & 72.4 & 74.4 & 73.7 & \textbf{74.5} \\
    \midrule
    Weight Init & MoCo v3  & MoCo v3 & \texttt{trunc\_norm(.02)} & \texttt{trunc\_norm(.02)} \\
    Patcher Init     & MoCo v3 & MoCo v3 & \texttt{trunc\_norm(.02)} & \texttt{trunc\_norm(.02)} \\
    Frozen Patcher & No & Yes & No & No \\
    \sigmareparam     & No & No & Yes & Yes \\
    pre-LN     & Yes & Yes & No & Yes \\
    \bottomrule
  \end{tabular}
  
\end{table*}

In computer vision, SSL has been effective in enabling efficient training on downstream tasks
\citep{assran2022masked}. Most of this progress has been made using convolutional architectures, while works using ViTs often  require specialized training recipes \citep{caron2021emerging}.

\begin{figure}[ht!]
    \centering
        \includegraphics[width=0.6\textwidth]{./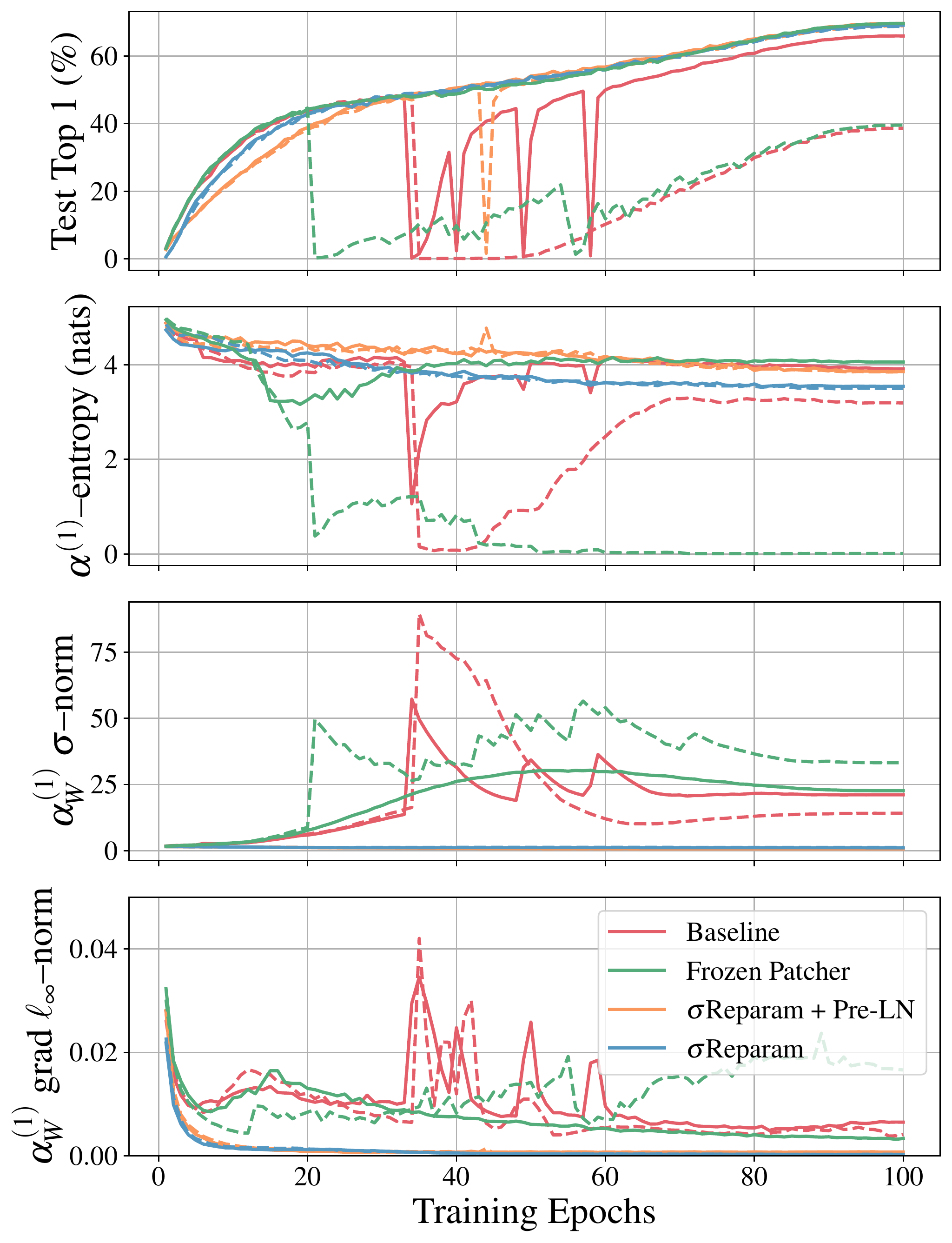}
    \caption{The best (solid line) and worst (dashed line) trials of each method from 10 trials of SimCLR for each method on ImageNet1k with 40 epochs of learning rate warmup. We show classification performance alongside relevant metrics from the first attention layer (top to bottom): attention entropy, the spectral norm of the attention weights, and the $\ell_\infty$--gradient norm of the attention weights.
    We see that the \emph{Frozen Patcher} method functions as intended, regulating its gradient norm, and protecting it from the large gradient norms inducing instability in \emph{Baseline}.
    We also observe a second form of instability during training: the growing spectral norm leads to a poorly behaved attention mechanism, entropy collapse, and a drop in performance as described in \Cref{sec:method}.
    This affects \emph{Baseline}, as well as \emph{Frozen Patcher}, as neither method gives specific protection against this second type of instability (solid and dashed red, and dashed green lines).
    Finally, we see that \sigmareparam with and without pre-LN regulate both the gradient norms, as well as the spectral norms, giving defense against both types of instability.}
    \label{fig:ssl-stats-40}
\end{figure}

Recently, it was found that ViTs suffer from training instabilities in SSL tasks \citep{chen2021empirical}. 
These instabilities can be remedied through a combination of frozen patch embedders, initialization schemes, and longer learning rate warmups; however,
there is an open question whether a general solution providing stable SSL ViT training exists \citep{chen2021empirical}.

Here, we demonstrate that \sigmareparam is a ViT SSL stabilizer.
Taking SimCLR as our SSL method, we investigate four variants. \emph{Baseline} and \emph{Frozen Patcher} were studied in \citet{chen2021empirical}, whereas \emph{\sigmareparam} and  \emph{\sigmareparam + pre-LN} are our solution. 

These methods are detailed in \Cref{tab:ssl-variants}, and their full hyperparameters are given in \Cref{tab:ssl-hparams} of \Cref{sec:app-ssl-hyperparameters}.

\begin{figure}[ht!]
    \centering
        \includegraphics[width=0.6\textwidth]{./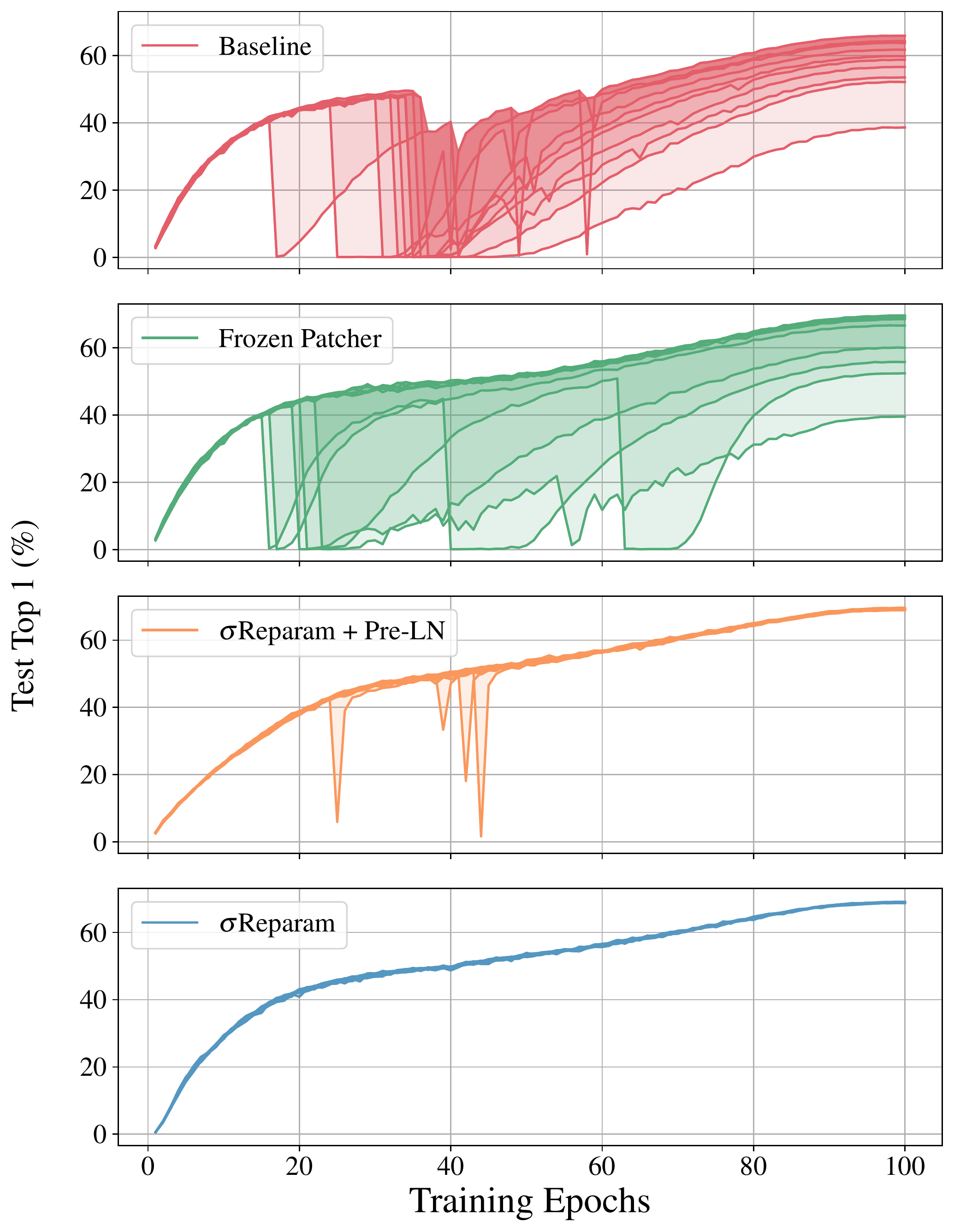}
    \caption{Linear probe performance of each of the 10 trials of SimCLR for each stabilization method.
    We see that \sigmareparam is the most stable method.
    \sigmareparam + pre-LN is also quite stable.
    In the case where it experiences instabilities, we see that it is able to recover much quicker than \emph{Baseline} and \emph{Frozen Patcher}.
    This is due to the regularization of the spectral norm which 1) prevents any arising instability pushing the model too far away from the current solution, and 2) keeps the attention mechanism useful, such that gradients are available for any required correction.}
    \label{fig:ssl-stability-40}
\end{figure}

We observe two types of instability. 
The first, as observed in \citet{chen2021empirical}, is induced by large gradient norms in early layers.
The second, described in \Cref{sec:method}, relates to entropy collapse.
We find that \emph{Frozen Patcher} protects against the first type, but is still susceptible to the second.
\sigmareparam, however, can protect against both types of instability, yielding more reliable training 
(see \Cref{fig:ssl-stability-40,fig:ssl-stats-40}).

\begin{figure}
  \centering
    \includegraphics[width=0.6\textwidth]{./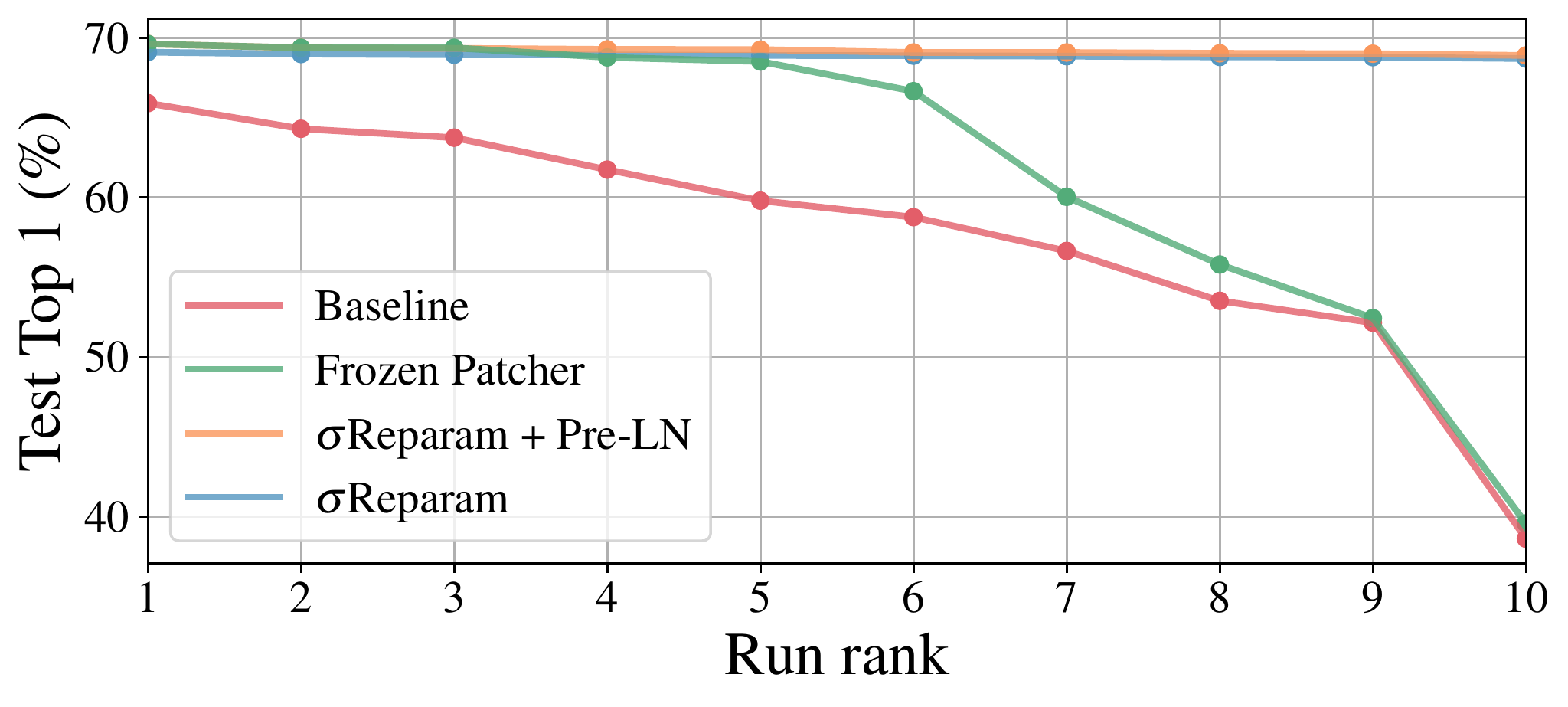}
    \caption{\small Linear probe performance on ImageNet1k at the end of training over 10 trials for each method. 
  Trials are ordered by decreasing performance, with run rank 1 (10) corresponding to the best (worst) trial.
  \emph{Frozen Patcher} and \emph{\sigmareparam + pre-LN} produce the best individual runs, with \emph{\sigmareparam} marginally lower.
  \emph{\sigmareparam + pre-LN} and \emph{\sigmareparam} are the methods most reliably giving good performance, with \emph{Baseline} and \emph{Frozen Patcher} each susceptible to at least one instability type.
  }
  \label{fig:ssl-40-index}
\end{figure}

As noted in \citet{chen2021empirical}, instabilities reduce final performance.
We show the instability impact on performance below.
in \Cref{fig:ssl-40-index}. 
The methods with the best performing individual runs are \emph{Frozen Patcher} and \emph{\sigmareparam + pre-LN}, whereas
the most stable methods are \emph{\sigmareparam + pre-LN} and \emph{\sigmareparam}.

Our main stability experiments use 40 epochs of learning rate warmup, matching the setting of  \citet{chen2021empirical}.
Using \sigmareparam, as in the supervised setting, gives training stability even at the lower learning rate warmup of 10 epochs.
For more details, see \Cref{sec:app-ssl-warmup}.

Finally, we look at the performance attainable when training for a longer duration of 300 epochs in Table \ref{tab:ssl-variants}.
The best performing method run is given by \emph{\sigmareparam + pre-LN}, with \emph{Frozen Patcher} performing almost as well, and both outperforming the reference SimCLR result \citep{chen2021empirical}.

Ultimately, we see while \emph{\sigmareparam} produces the lowest degree of instability, the best overall method for stable training of SimCLR ViTs is \emph{\sigmareparam + pre-LN}, producing both the highest ImageNet1k linear probe performance at 100 epochs (69.6 \%) and 300 epochs (74.5 \%) epochs, as well as very stable training over many trials, both at long and short learning rate warmup.

\subsection{Machine Translation}\label{sec:mt}
\begin{figure}[t!]
    \centering
        \includegraphics[width=0.6\textwidth]{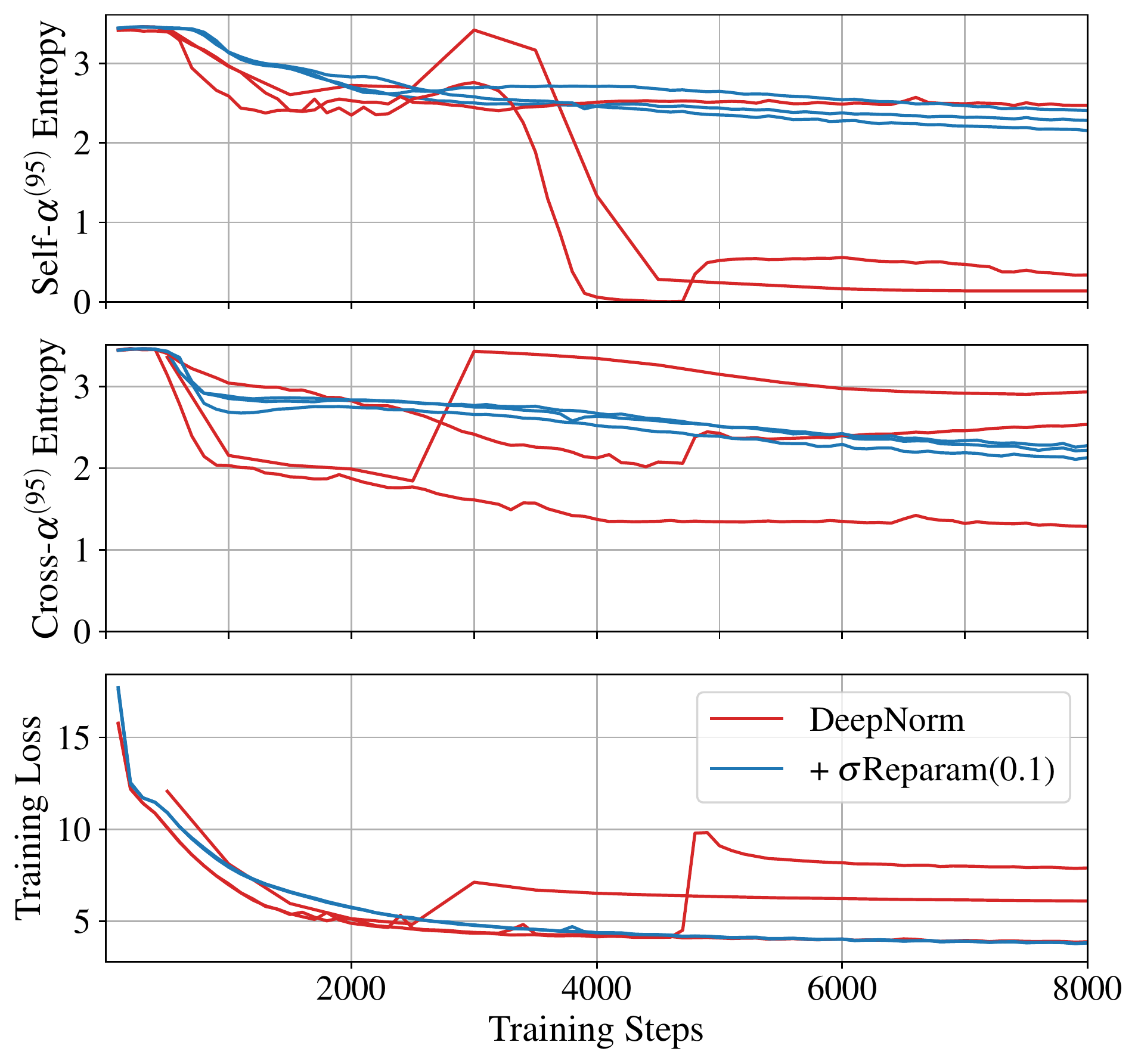}
    \caption{MT training on WMT'17 for 100L-100L DeepNorm and DeepNorm with injected \sigmareparam across 3 runs with different seeds: training loss (bottom), encoder self-attention entropy (top) and encoder-decoder cross-attention entropy (middle) for 95th layers. Attention entropy collapse with further model divergence is observed for DeepNorm, while \sigmareparam is bounding entropy and provides stable training. }
    \label{fig:mt-stability-short}
\end{figure}

In machine translation (MT) stable training of deep encoder-decoder post-LN Transformers is an active research area~\citep{wang2022deepnet,liu2020admin}. Vanishing gradients problem has been reported by many works, leading to different solutions including rescaling residual connections: e.g., \citet{wang2022deepnet} trained a 1000-layer Transformer by properly rescaling residual connections and initialization depending on model depth, dubbed DeepNorm. We examined attention entropy collapse for the deep Transformers in MT and found that they suffer not only from vanishing gradients but also from entropy collapse, both for vanilla post-LN and DeepNorm. By injecting \sigmareparam alongside post-LN/DeepNorm, we empirically show that it is able to bound attention entropy and stabilize training without any divergent training loss growth issues. Details on experiments and all findings are in Appendix~\ref{sec:app:mt}.

\textbf{Empirical setup.} We use standard WMT'17 English-German benchmark with {\it newstest2016} as a validation and {\it newstest2017} as test sets. We consider $N$L-$N$L encoder-decoder models with $N$ encoder and $N$ decoder layers, where $N=6,18,50,100$, for both post-LN and DeepNorm configurations. For all models we report BLEU score on validation and test sets across 3 runs with different seeds. 

\begin{table*}[t!]
    \caption{Results for MT on WMT'17 English-German data for post-LN, with or without additional \sigmareparam, with or without residual rescaling (`DeepNorm' from \citet{wang2022deepnet}). We report average BLEU score and its std across 3 runs with different seeds for a variety of encoder-decoder architectures: 6L-6L, 18L-18L, 50L-50L, and 100L-100L. `DV' states for how many times a model diverges / is not training across runs. With red block we mark unstable baseline training while with blue block -- training stabilized by \sigmareparam.}\label{tab:mt_main}
    \begin{center}
    \setlength\tabcolsep{5pt} 
    \resizebox{\linewidth}{!}{
    \small
    \begin{tabular}{lcccccccccccc}
    \toprule
     \multirow{2}{*}{Models} & \multicolumn{3}{c}{6L-6L} & \multicolumn{3}{c}{18L-18L} & \multicolumn{3}{c}{50L-50L} & \multicolumn{3}{c}{100L-100L} \\
    \cmidrule(lr){2-4} \cmidrule(lr){5-7} \cmidrule(lr){8-10} \cmidrule(lr){11-13}
     & DV & Valid BLEU & Test BLEU & DV & Valid BLEU & Test BLEU & DV & Valid BLEU & Test BLEU & DV & Valid BLEU & Test BLEU \\
    \midrule
    post-LN & 0/3 & 34.2$_{0.2}$ & 27.8$_{0.2}$ & \cellcolor{red!20}{\bf 1/3} & \cellcolor{red!20}35.2$_{0.2}$ & \cellcolor{red!20}29.0$_{0.2}$ & \cellcolor{red!20}{\bf 3/3} & \cellcolor{red!20}- & \cellcolor{red!20}- & \cellcolor{red!20}{\bf 3/3} & \cellcolor{red!20}- & \cellcolor{red!20}- \\
    \,\, + \sigmareparam & 0/3 & 34.3$_{0.3}$ & 27.8$_{0.2}$ & \cellcolor{blue!20}0/3 & \cellcolor{blue!20}35.2$_{0.2}$ & \cellcolor{blue!20}28.7$_{0.2}$ & \cellcolor{blue!20}0/3 & \cellcolor{blue!20}34.9$_{0.3}$ & \cellcolor{blue!20}28.5$_{0.6}$
     & {\cellcolor{red!20}\bf 3/3} & \cellcolor{red!20}- & \cellcolor{red!20}- \\
    DeepNorm & 0/3 & 34.2$_{0.2}$ & 27.9$_{0.2}$ & 0/3 & 35.7$_{0.4}$ & 29.2$_{0.2}$ & 0/3 & 35.7$_{0.2}$ & 29.2$_{0.1}$ & \cellcolor{red!20} {\bf 2/3} & \cellcolor{red!20} 35.2$_{0.0}$ & \cellcolor{red!20} 29.2$_{0.0}$ \\
    \,\, + \sigmareparam & 0/3 & 34.4$_{0.4}$ & 27.7$_{0.2}$ & 0/3 & 35.2$_{0.2}$ & 28.6$_{0.1}$ & 0/3 & 34.8$_{0.4}$ & 28.3$_{0.3}$ & \cellcolor{blue!20} {\bf 0/3} & \cellcolor{blue!20} 34.4$_{0.1}$ & \cellcolor{blue!20} 28.0$_{0.1}$ \\
    \bottomrule
    \end{tabular}
    }
    \end{center}
\end{table*}

\textbf{Attention entropy collapse occurs in deep models.} While we reproduced stable results for 6L-6L post-LN and observed nicely bounded attention entropy behaviour, for 18L-18L configurations, divergence is observed when varying the random seed. By close inspection we observe no vanishing gradients problem, but attention entropy collapse clearly occurs during training. Deeper models, namely 50L-50L and 100L-100L, are unable to train due to vanishing gradients as well as attention entropy collapse for some of the deep layers (\Cref{fig:mt:postln}).
For DeepNorm while we are able to reproduce results for 6L-6L, 18L-18L and 50L-50L depths observing stable training (no any models diverged and training behaved well), yet we observe instability in training of the 100L-100L model, resulting in only 1 over 3 (different seeds) successful run. By closer inspection of the training behaviour we do not see any drastic issue of vanishing gradients, however we see attention entropy collapse, see~\Cref{fig:mt-stability-short,fig:mt:deepnorm}.

\textbf{\sigmareparam resolves entropy collapse in deep models.} To alleviate attention entropy collapse and confirm \sigmareparam effectiveness for deep models we inject \sigmareparam into post-LN and DeepNorm models. As a result, \sigmareparam nicely bounds attention entropy for 18L-18L and 50L-50L post-LN models (\Cref{fig:mt:postln-spectral}), resolving any divergence issues as well as vanishing gradients in the 50L-50L model. \sigmareparam also nicely bounds attention entropy for 18L-18L, 50L-50L, 100L-100L DeepNorm models (\Cref{fig:mt:deepnorm-spectral}), resolving any divergence issues for 100L-100L, see~\Cref{fig:mt-stability-short} (vanishing gradients are not observed as DeepNorm targets it). 
In terms of performance (\Cref{tab:mt_main}), \sigmareparam with post-LN or DeepNorm matches their baselines for 6L-6L and in the same ballpark for 18L-18L. However, \sigmareparam is inferior to DeepNorm for 50L-50L and 100L-100L.

\subsection{Speech Recognition and Language Modeling}
We also conduct empirical analysis of speech recognition in Appendix~\ref{sec:asr} and observe attention entropy collapse for different configurations. \sigmareparam alongside with post-LN (a) stabilizes training of post-LN (b) improves robustness with respect to \hparams and (c) to the best of our knowledge, for the first time allows model training without an adaptive optimizer achieving stable training and comparable performance. For language modeling, see Appendix~\ref{sec:language-modeling}, \sigmareparam simplifies training recipe by removing all LayerNorms and achieves comparable performance to state-of-the-art.

\section{Conclusion}
Transformer training stability is a well acknowledged, but still unsolved problem. This problem comes with many facets, and there are multiple necessary conditions that need to be met in order to guarantee stable and robust training. Our work identifies attention entropy collapse as a unique failure pattern that seems to be commonly observed in a wide range of settings and tasks. We also show that \sigmareparam as a simple reparameterization of the weights can effectively address the entropy collapse problem, which often leads to improved training stability and robustness.

There are also limitations of our work. First of all, it is unclear if there is a causal relationship between entropy collapse and training instability of Transformers. We believe that establishing such a connection will enable a deeper understanding of the challenges of Transformer training from the optimization perspective. Second, \sigmareparam, while effective, is not a panacea. In the practical sense, one might still benefit from combining \sigmareparam with many other useful techniques, including initialization, feature normalization, advanced optimizers, etc. We hope that our work opens new perspectives towards inventing new design and training principles in the future.

\section{Acknowledgement}
We would like to thank Navdeep Jaitly, Vimal Thilak, Russ Webb for their helpful feedback and critical discussions on the experimental part of the work; Samy Bengio, Andy Keller, Russ Webb, Luca Zappella for their help throughout the process of writing this paper; Hassan Babaie, Mubarak Seyed Ibrahim, Li Li, Evan Samanas, Cindy Liu, Guillaume Seguin, Okan Akalin, and the wider Apple infrastructure team for assistance with developing scalable, fault tolerant code; and Shuming Ma for providing details on the DeepNorm reproduction steps. Names are listed in alphabetical order.

\bibliographystyle{natbib.sty}

\clearpage
\appendix
\appendixpage

\section{\texorpdfstring{Proof of \cref{thm:bound} and \cref{prop}}{Lg}}\label{sec:proof}
\spectral*
\begin{proof}
Without loss of generality let $u\in \R^T$ denote the $i$'th row of $A$, $u=A_i$. From the assumptions it holds that $\|u\| \leq \b\sigma$. Let $p = p(u)$ denote the softmax probabilities given by:
\begin{align}
    p_j = \frac{e^{u_j}}{Z},
\end{align}
where $Z = \sum_{k=1}^Te^{u_k}$ is the partition function.
The entropy given $p(u)$ is then:
\begin{align}
    \text{Ent}(u) = -\sum_{j=1}^T \frac{e^{u_j}}{Z} \log \left(\frac{e^{u_j}}{Z}\right) = -\sum_{j=1}^T \frac{u_je^{u_j}}{Z} + \log(Z).
\end{align}

We wish to solve the following minimization problem:
\begin{align}\label{eqn:const}
    \min_u \text{Ent}(u) ~~\text{s.t}~~\|u\|^2 \leq \b\sigma^2,
\end{align}
Define the Lagrangian:
\begin{align}\label{eqn:lagrange}
    \mathcal{L}(u,\lambda) = \text{Ent}(u) + \frac{1}{2}\lambda(\|u\|^2 - \b\sigma^2).
\end{align}

To find all saddle points, we solve the system of equations:
\begin{align}
    \frac{\partial \mathcal{L}(u,\lambda)}{\partial u} = 0,~~~\frac{\partial \mathcal{L}(u,\lambda)}{\partial \lambda} = 0.
\end{align}
Giving rise to the following set of equations:
\begin{align}
    \forall_{1\leq k\leq T},~ \lambda u_k &= \sum_{j=1}^T\frac{e^{u_j}}{Z}\left[\delta_{j,k} - \frac{e^{u_k}}{Z}\right]\left[1 + \log \left(\frac{e^{u_j}}{Z}\right)\right] \label{eqn:lagrange_eqn}\\
    &= p_k[\log(p_k) +\text{Ent}(u)] \label{eqn:lagrange_eqn3}\\
    \|u\|^2 &= \b\sigma^2. \label{eqn:lagrange_eqn2}
\end{align}

As a first step, assume that for the minimizer $u^\star$ of \cref{eqn:const} there exists an index $k^\star$ such that $u^\star_{k^\star} = 0$. Using \cref{eqn:lagrange_eqn3}:
\begin{align}
    0 &= \log(p_k^\star) +\text{Ent}(u) = -\sum_{j=1}^T p_j \log \left(\frac{p_j}{p_{k^\star}}\right) = -\sum_{j=1}^T p_j \log (e^{u_j}) = -\sum_{j=1}^T p_j u_j = -\E u.
\end{align}
From the first set of equations we arrive at the condition:
\begin{align}
    \forall_{u_{j_1} \neq 0,u_{j_2} \neq 0},~p_{j_1}\frac{\log(p_{j_1}) +\text{Ent}(u)}{u_{j_1}} &= p_{j_2}\frac{\log(p_{j_2}) +\text{Ent}(u)}{u_{j_2}}\\
    \longrightarrow p_{j_1} +\frac{\E u}{u_{j_1}} &= p_{j_2} +\frac{\E u}{u_{j_2}}\\
    \longrightarrow p_{j_1} = p_{j_2}.
\end{align}
This however implies that $u_1 = u_2 =...=u_T = 0$, hence a contradiction to \cref{eqn:lagrange_eqn2}.\\
Now, assuming $\forall_{k}~u_k \neq 0$, we have using \cref{eqn:lagrange_eqn3}:
\begin{align}
     \forall_{u_{j_1} \neq u_{j_2}},~&\frac{p_{j_1}}{u_{j_1}}[\log(p_{j_1}) +\text{Ent}(u)] = \frac{p_{j_2}}{u_{j_2}}[\log(p_{j_2}) +\text{Ent}(u)]\\
     &\longrightarrow e^{u_{j_1}}\left(1 - \frac{\E u}{u_{j_1}}\right) = e^{u_{j_2}}\left(1 - \frac{\E u}{u_{j_2}}\right) \label{eqn:equate}.
\end{align}

We now make the following observation: we may assume a solution $u$ to \cref{eqn:const} must contain at least one negative component. To see this, consider $u$ such that $u>0$ component wise, and $\|u\| \leq \b\sigma$. 
We can always move $u$ by some vector $v|\forall_{i,j}~v_i = v_j$ such that $\|u-v\| \leq \b\sigma$ where $u-v$ has at least one negative component. Since all components in $v$ are equal, we have that $\text{Ent}(u) = \text{Ent}(u-v)$. Moreover, without loss of generality we may assume that $\E u >0$ due to the same logic. 

Let $u_{j_1},u_{j_2}<0$, then according to \cref{eqn:equate}:
\begin{align}
e^{u_{j_1}}\left(1 - \frac{\E u}{u_{j_1}}\right) = e^{u_{j_2}}\left(1 - \frac{\E u}{u_{j_2}}\right) >0
\end{align}
Note that $f(x) = e^x(1 - \frac{\gamma}{x})$ is monotonously increasing in $x \in (-\infty,0)$ and $x \in [\gamma,\infty)$ for $\gamma>0$, implying that $u_{j_1} = u_{j_2}$. Similarly, if $u_{j_1}<0$ and $u_{j_2}>0$, then $e^{u_{j_2}}\left(1 - \frac{\E u}{u_{j_2}}\right)>0$ hence $u_{j_2}>\E u_{j_2}$. Since $f(x)= e^x(1 - \frac{\gamma}{x})$ is monotonous in $x$ for both $x<0$ and $x>\gamma$, we conclude that a solution $u$ must contain 2 unique values, one positive and one negative.  
Let the different components be $\alpha,\beta$ such that $\alpha>0,\beta<0$. A minimizer of the  entropy would correspond to a $u$ with $T-1$ components equal to $\beta$, and 1 component equal to $\alpha$, such that: 
\begin{align}
    \alpha = \b\sigma\sqrt{1 - \frac{1}{T}}, ~~~\beta = -\b\sigma\sqrt{\frac{1}{T(T-1)}},
\end{align}
with the corresponding entropy:
\begin{align}
\text{Ent}(u^\star) = \log\left(1 + (T-1)e^{-\b\sigma\sqrt{\frac{T}{T-1}}}\right) + \frac{\b\sigma\sqrt{T(T-1)}e^{-\b\sigma\sqrt{\frac{T}{T-1}}}}{1 + (T-1)e^{-\b\sigma\sqrt{\frac{T}{T-1}}}}.
\end{align}
\end{proof}

\proposition*

\begin{proof}
We have that:
\begin{align}
\sigma(\Delta) \geq \frac{1}{\sqrt{w}}\sqrt{\text{Trace}(\Delta^\top \Delta)} &= \frac{1}{\sqrt{w}}\sqrt{\sum_{i,j=1}^w\frac{\mu_{i,j}^2}{\mu_{i,j}^2 + n_{i,j}^2}} =  \sqrt{w}\sqrt{1 - \frac{1}{w^2}\sum_{i,j=1}^w \frac{n_{i,j}^2}{\mu_{i,j}^2 + n_{i,j}^2}}.
\end{align}
\end{proof}

\clearpage

\section{Relationship Between Entropy Collapse and Training Instability}
\label{sec:app-causal}
\subsection{Experimental Outline}

Here we will investigate the interplay between entropy collapse and training stability by asking: 
\emph{would a model with stable training but not exhibiting entropy collapse have been stable if entropy collapse was induced, all other factors held constant?}
In do-calculus \citep{Pearl09}, this roughly corresponds to checking 
\begin{equation*}
P\left(
\text{stable}=\text{True}|\,
\text{stable}=\text{True},\,
\text{collapse}=\text{False},\,
\text{do}(\text{collapse}=\text{True})
\right)<1.   
\end{equation*}

\paragraph{Inducing entropy collapse} 
Note that logits $\vu\in\R^d$ and temperature $\tau$
give rise to the temperature normalized softmax
\begin{equation}
    p_i(\vu,\tau)=\frac{\exp(u_i/\tau)}{\sum_{j=1}^d\exp(u_j/\tau)}
\end{equation}
and corresponding entropy
\begin{equation}
    H_p(\vu,\tau) = -\frac1d\sum_{i=1}^d p_i(\vu,\tau)\,\log\,p_i(\vu,\tau).
\end{equation}
Holding $\vu$ constant, the entropy is low when 
$\tau\rightarrow 0$, and is high when 
$\tau\rightarrow\infty$.
As entropy collapse is observed in experiments when $H_p(\vu,\tau)\rightarrow 0$, we will attempt to induce entropy collapse by sending $\tau\rightarrow\tau_{\text{target}}$, where $\tau_{\text{target}}\ll1$.

Concretely, for a Transformer model, we normalize the logits of the attention matrix by temperature.
We use the same temperature normalization for every layer, i.e. the Transformer has a \emph{global temperature}.
We start the temperature $\tau=1$ which corresponds to the default Transformer model without temperature normalization.
At a prescribed epoch during training, we perform a \emph{temperature intervention}, where we change the 
temperature from $\tau=1$ to a target temperature $\tau_{\text{target}}$.
The transition is sharp, and happens at the start of the prescribed epoch, which we refer to as the \emph{intervention epoch}.

We use the MAE ViT-B/16 recipe (see \Cref{appendix:image_cls}) for these experiments, and train for a total of 100 epochs on ImageNet1k.
To simplify the analysis, we only use ImageNet1k training augmentations, and use no learning rate decay schedule (i.e. the learning rate is flat after warmup).

\paragraph{Eigenvalues of the Hessian} As properties of the Hessian have been successfully used to gain an understanding of stability of the learning process
\cite{
DBLP:conf/icml/GhorbaniKX19,
DBLP:conf/bigdataconf/YaoGKM20,
cohen2021gradient,
DBLP:journals/corr/abs-2207-14484,
DBLP:journals/corr/abs-2110-04369},
we will also use them in our analysis.
Specifically, we will analyze the magnitude $|\lambda_i|$ of the largest magnitude eigenvalues $\lambda_i$ of the Hessian $H$
\begin{align}
    H_{a,b}
    &
    =\frac{\partial^2 \mathcal L}{\partial \theta^a\partial \theta^b},
    &
    H
    &
    \in
    \R^{P\times P},
    &
    Hv_i
    &=v_i\lambda_i,
    &
    ||v_i||
    &=1,
\end{align}
where $\theta^a$ is the $a$--th parameter, $\mathcal L$ is the scalar loss, $P$ is the number of model parameters, and $v_i$ is the normalized eigenvector corresponding to the eigenvalue $\lambda_i$. 
We take $|\lambda_1| > |\lambda_2| > \ldots > |\lambda_P|$, and call the largest eigenvalue $|\lambda_1|$ the \emph{sharpness}, in line with the stability literature.

Computing and storing the Hessian explicitly is problematic, as it is $O(P^2)$ in time and memory.
Instead, noting that the Hessian Vector Product (HVP) $Hv$ for any vector $v$ can be computed using the Vector Jacobian Product (VJP) or Jacobian Vector Product (JVP), avoiding explicit computation of $H$.
Treating the HVP as a linear operator then allows the use of numerical methods for computing the spectrum
\cite{DBLP:conf/bigdataconf/YaoGKM20,
DBLP:conf/icml/GhorbaniKX19}.
For our iterative method we use the implementation of Lanczos from CuPy \cite{cupy_learningsys2017}.
We compute the 5 largest eigenvalues of $H$ using 32,768 samples from the ImageNet1k  training set, and perform this computation at the end of each training epoch.

\paragraph{The Stability Threshold} 
Different optimization algorithms have a \emph{stability threshold};
under a local quadratic assumption, if any Hessian eigenvalue of the loss exceeds this threshold, iterations of the optimization procedure will diverge
\cite{cohen2021gradient,
DBLP:journals/corr/abs-2207-14484}.
For AdamW, the stability threshold $\Gamma$ is derived in the case of a short time-horizon frozen (i.e. non-fully adaptive) approximation of AdamW, has been shown empirically as a suitable stability threshold for the full algorithm \cite{DBLP:journals/corr/abs-2207-14484}, and is given by
\begin{equation}
    \Gamma = \frac{2 + 2\,\beta_1}{1-\beta_1}\frac1\eta=\frac{38}\eta,
\end{equation}
where $\beta_1=0.9$ is the Adam momentum of the gradient moving average \cite{DBLP:journals/corr/KingmaB14}. 
We include this threshold in our analysis.

\subsection{Results}
\begin{figure*}[h!]
  \centering
  \includegraphics[width=0.9\textwidth]{./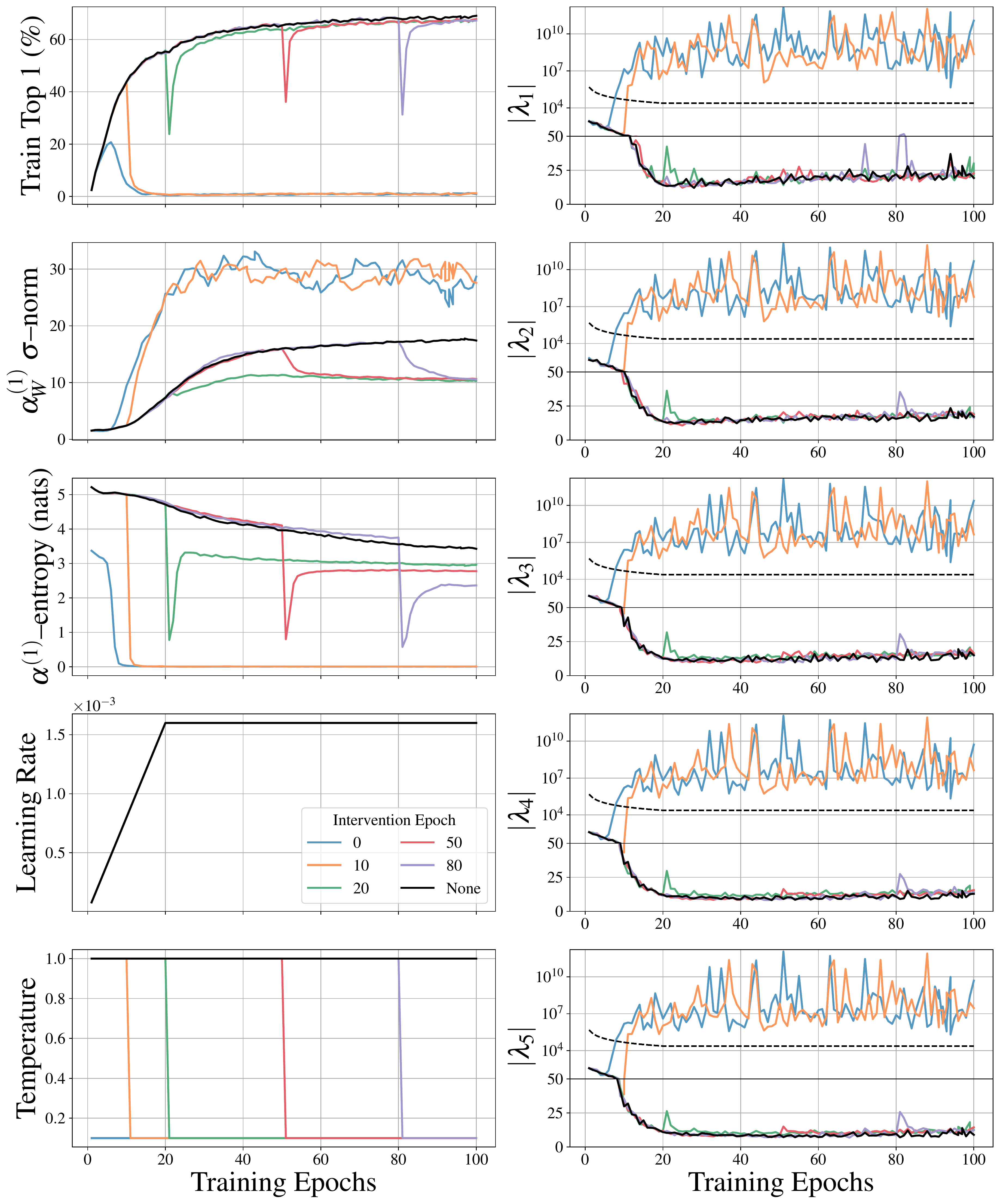}
  \caption{
  Training stability of a Vision Transformer under sharp reductions of its temperature by $10\times$, varying at what epoch in training the intervention occurs.
  We plot (left, top to bottom) training performance, the spectral norm of the first attention projection matrix, the attention entropy of the first attention block, the learning rate and the temperature, (right, top to bottom) the largest to fifth largest singular values of the Hessian by magnitude.
  We see that interventions in the warmup period -- at epochs 10 and 20 -- induce a sharp drop in the entropy $\alpha^{(1)}$ of the attention mechanism in the first Transformer block.
  This is accompanied by an increase in the sharpness $|\lambda_1|$ beyond the stability threshold \cite{cohen2021gradient,
DBLP:journals/corr/abs-2207-14484} (black dashed), resulting in training instability.
Interventions afterwards, at epochs 20, 30, 50 and 80 all induce a drop in attention entropy, but no entropy collapse.
  These models also recover as the sharpness does not exceed the stability threshold.
  We also show the performance of an unintervened Transformer (None).
  }
  \label{fig:causal-temp-start}
\end{figure*}

\begin{figure*}[ht!]
    \centering
    \includegraphics[width=0.9\textwidth]{./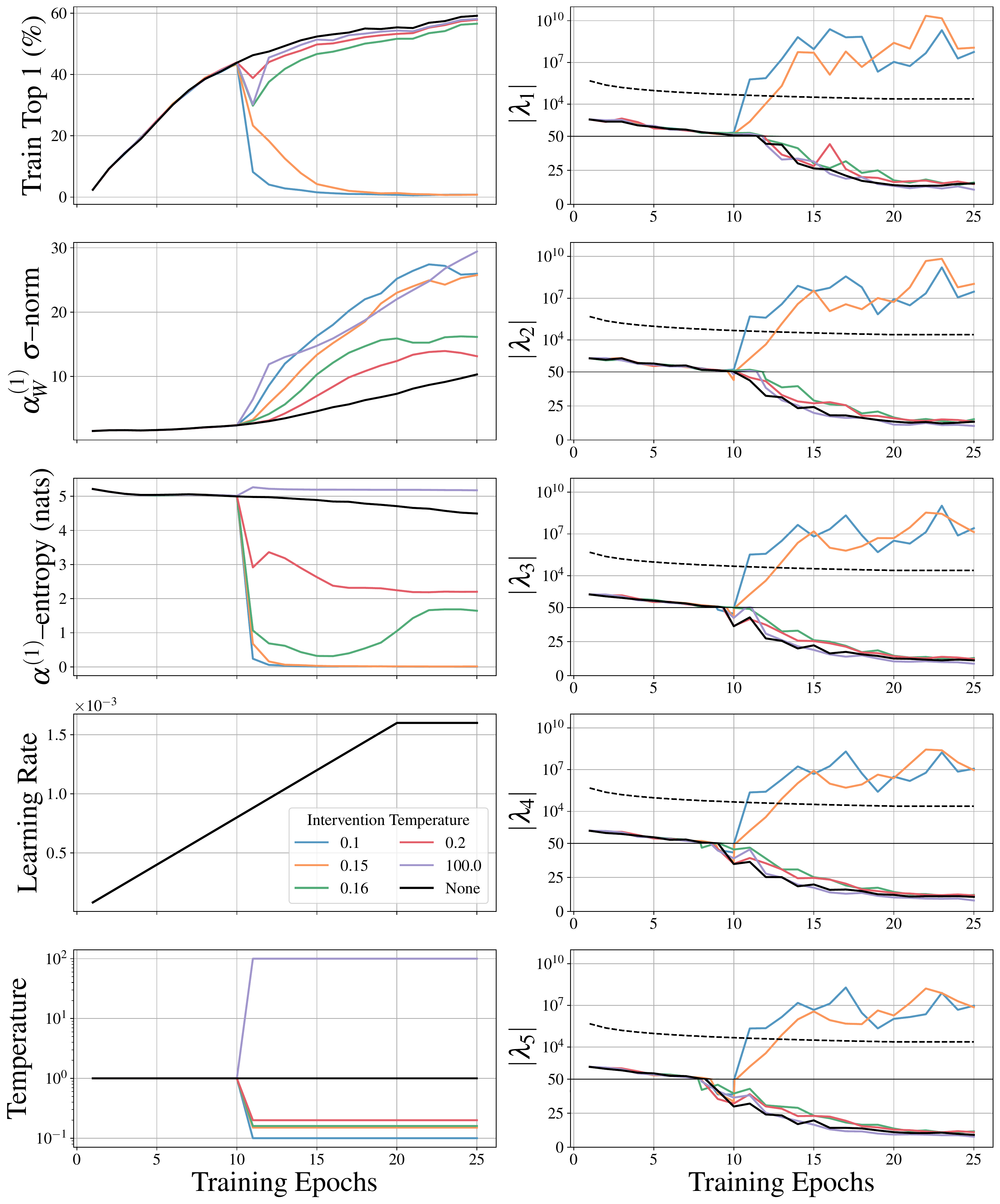}
    \caption{
    Training stability of a Vision Transformer under modifications of its temperature at epoch 10 in training.
    We plot (left, top to bottom) training performance, the spectral norm of the first attention projection matrix, the attention entropy of the first attention block, the learning rate and the temperature, (right, top to bottom) the largest to fifth largest singular values of the Hessian by magnitude.
    We see that reducing the temperature to below 0.15 causes a sharp drop in the entropy $\alpha^{(1)}$ of the attention mechanism in the first Transformer block and an increase in the sharpness $|\lambda_1|$ beyond the stability threshold \cite{cohen2021gradient,
DBLP:journals/corr/abs-2207-14484} (black dashed), resulting in training instability.
Temperatures larger than 0.16 but lower than 1 do not induce training as they do not cross the stability threshold, although these interventions cause a moderate drop in attention entropy before recovery.
We also investigated increasing the temperature, to ensure we were not just ``shocking'' the system, and in fact it is a drop in temperature that is particularly problematic.
Setting the temperature to 100 increases the entropy as expected, but also induces a drop in performance.
    These models also recover as the sharpness does not exceed the stability threshold.
    }
    \label{fig:causal-temp-value}
\end{figure*}

\clearpage
\section{Implementation of \texorpdfstring{\sigmareparam}{Lg}}\label{sec:app:impl}

To compute the spectral norm of the current matrix we use the power method as an approximation method to speed up computations. See Algorithm~\ref{alg:sigmareparam} for a sketch implementation\footnote{By default we use one step of power iteration per gradient update step, similar to~\citep{miyato2018spectral}. Empirically we found no difference in performance when using multiple power iteration steps.}. Note that in practice fp32 precision is typically required for numerical stability. 
We have experimented with various configurations applying \sigmareparam to key and query weights, and/or in other parts (e.g., all other linear layers in the model). While we found that the performance is robust to the configurations, applying it to all the layers amounts to the simplest implementation and also works well in practice, e.g., allowing the removal of LN layers. 
\sigmareparam does not bring any overhead compared to pre-LN or post-LN configurations, see Table~\ref{tab:speed}.

\begin{algorithm}[ht!]
\caption{Pseudo code of $\sigma$Reparam in a PyTorch-like style.}
\label{alg:sigmareparam}
\definecolor{codeblue}{rgb}{0.25,0.5,0.5}
\lstset{
  backgroundcolor=\color{white},
  basicstyle=\fontsize{7.2pt}{7.2pt}\ttfamily\selectfont,
  columns=fullflexible,
  breaklines=true,
  captionpos=b,
  commentstyle=\fontsize{7.2pt}{7.2pt}\color{codeblue},
  keywordstyle=\fontsize{7.2pt}{7.2pt},
}
\begin{lstlisting}[language=python]
# Parameters. W: weight matrix, shape (d, c); gamma: the learned spectral norm, shape (1,) 
# Buffers. u: shape (d,), v: shape (c,), the left and right singular vectors of W
if init: # initialize u, v as random unit vectors and gamma to 1
    u = randn(d)
    u = u / u.norm(dim=0)
    v = randn(c)
    v = v / v.norm(dim=0)
    gamma = ones(1)
if training: # if in the training mode, perform one step of power iteration first
    with torch.no_grad():
        u = W.mv(v)
        u = u / u.norm(dim=0)
        v = W.T.mv(u)
        v = v / v.norm(dim=0)
sigma = einsum('d,dc,c->', u, W, v)
W_hat = gamma / sigma * W # the effective spectral norm of W_hat would be gamma

\end{lstlisting}
\end{algorithm}
\begin{table}[hb]
\caption{Time for one training step for different normalizations in different domains.}\label{tab:speed}
\begin{center}
\small
\setlength\tabcolsep{5pt} 
\begin{tabular}{lcccc}
\toprule
Model & ASR (ms) & MT 8L-18L (ms) \\
\midrule
post-LN & 450 & 1700 \\
pre-LN & 450 & 1800 \\
$\sigma$Reparam & 450 & 2200 \\
\qquad + post-LN & 510 & 2300 \\
\bottomrule
\end{tabular}
\end{center}
\end{table}

\clearpage
\section{Self-Supervised Training of Visual Representations}
\subsection{Hyperparameters}
\label{sec:app-ssl-hyperparameters}

Here we outline the hyperparameters of our experimental setup for SimCLR+ViT stability.
For the variations, alongside their default hyperparameters see \Cref{tab:ssl-hparams}.
These hyperparameters are used in all SimCLR runs unless stated otherwise.

\paragraph{Augmentations} We use SimCLR augmentations throughout, however, we run at half ColorJitter strength, equal to the ColorJitter strength of MoCo v3. 
For completeness, we provide our training augmentation here, our testing augmentation is the standard resize, center crop and normalize.
Half color strength corresponds to \texttt{color\_jitter\_strength = 0.5}.
Setting \texttt{color\_jitter\_strength = 1.0} recovers the base SimCLR training augmentations.

\definecolor{codeblue}{rgb}{0.25,0.5,0.5}
\lstset{
  backgroundcolor=\color{white},
  basicstyle=\fontsize{7.2pt}{7.2pt}\ttfamily\selectfont,
  columns=fullflexible,
  breaklines=true,
  captionpos=b,
  commentstyle=\fontsize{7.2pt}{7.2pt}\color{codeblue},
  keywordstyle=\fontsize{7.2pt}{7.2pt},
}
\begin{lstlisting}[language=python]
[
    transforms.RandomResizedCrop(
        image_size_override, scale=crop_scale, interpolation=Image.BICUBIC
    ),
    transforms.RandomHorizontalFlip(p=0.5),
    transforms.RandomApply(
        [
            transforms.ColorJitter(
                brightness=0.8 * color_jitter_strength,
                contrast=0.8 * color_jitter_strength,
                saturation=0.8 * color_jitter_strength,
                hue=0.2 * color_jitter_strength,
            )
        ],
        p=0.8,
    ),
    transforms.RandomGrayscale(p=0.2),
    transforms.RandomApply([M.GaussianBlur([0.1, 2.0])], p=0.5),
    transforms.ToTensor(),
    IMAGENET_NORMALIZE,
]
\end{lstlisting}

\begin{table}
  \caption{\small Default hyperparameters of the variants of SimCLR used in our stability analysis. 
  The MoCo v3 weight initialization and patch initialization scheme are described in \citet{chen2021empirical}. SinCos refers to stacked 2D SinCos positional encodings \cite{vaswani2017attention}. 
  The table is divided vertically into hyperparameters that differ across methods (top) and hyperparameters shared across methods (bottom).}
  \label{tab:ssl-hparams}
  \centering
  \small
  \begin{tabular}{lcccc}
    \toprule
    & Baseline & Frozen Patcher & \sigmareparam & \sigmareparam + pre-LN\\
    \midrule
    \sigmareparam     & No & No & Yes & Yes \\
    Frozen Patcher & No & Yes & No & No \\    
    Layer Norm     & Yes & Yes & No & Yes \\    
    Patcher Init     & MoCo v3 & MoCo v3 & \texttt{trunc\_norm(.02)} & \texttt{trunc\_norm(.02)} \\    
    Weight Init & MoCo v3  & MoCo v3 & \texttt{trunc\_norm(.02)} & \texttt{trunc\_norm(.02)} \\
    \midrule
    Architecture & ViT-B/16 & ViT-B/16 & ViT-B/16 & ViT-B/16 \\
    Batch Size & 4096 & 4096 & 4096 & 4096\\
    ColorJitter Strength & 0.5 & 0.5 & 0.5 & 0.5 \\    
    Learning Rate & $2\times 10^{-4}$ & $2\times 10^{-4}$ & $2\times 10^{-4}$ & $2\times 10^{-4}$ \\
    Learning Rate Sched & Cosine & Cosine & Cosine & Cosine \\    
    Learning Rate Warmup & 40 Epochs & 40 Epochs & 40 Epochs & 40 Epochs \\    
    Optimizer & AdamW & AdamW & AdamW & AdamW \\
    Positional Encoding & SinCos & SinCos & SinCos & SinCos \\
    Weight Decay & 0.1 & 0.1 & 0.1 & 0.1 \\
    \bottomrule
  \end{tabular}
\end{table}

\begin{figure}[t!]
    \centering
    \begin{subfigure}[b]{0.495\textwidth}
        \centering
        \includegraphics[width=\textwidth]{./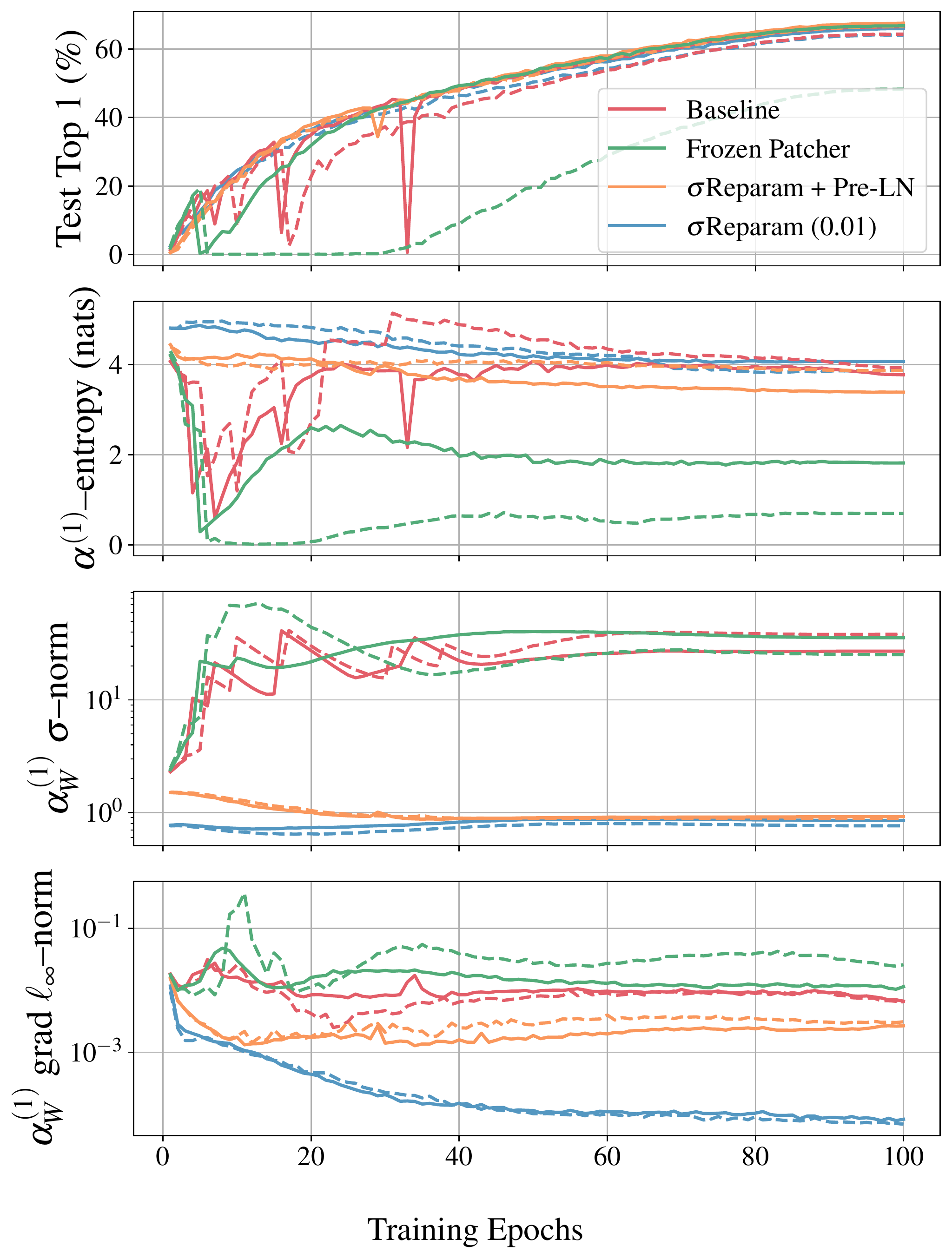}
        \caption{Statistics of best and worst trials per method.}
        \label{fig:ssl-stats-10}
    \end{subfigure}
    \hfill
    \begin{subfigure}[b]{0.495\textwidth}
        \centering
        \includegraphics[width=\textwidth]{./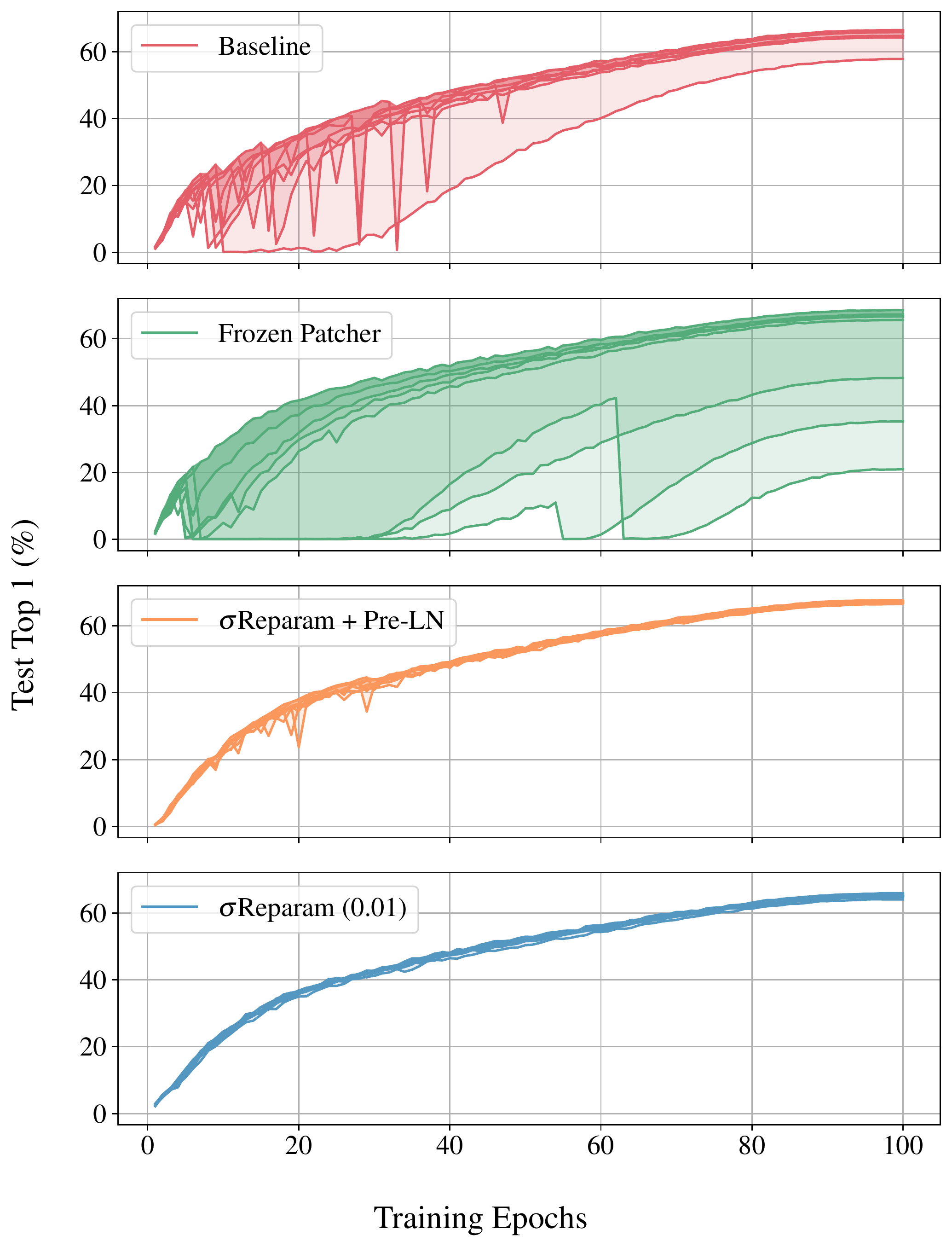}
        \caption{Stability over 8 trials per method.}
        \label{fig:ssl-stability-10}
    \end{subfigure}
    \caption{\small Eight trials of SimCLR for each method on ImageNet1k with 10 epochs of learning rate warmup. \textbf{(a)}
    Linear probe performance for the best (solid line) and worst (dashed line) trials of each method, against relevant metrics from the first attention layer (top to bottom): attention entropy, the spectral norm of the attention weights, and the $\ell_\infty$--gradient norm of the attention weights.
    Our observations are consistent with those of the longer warmup of 40 epochs investigated in \Cref{fig:ssl-stability-40}, except that here, \emph{Frozen Patcher} is less able to tame early layer gradient norms than it was in the longer warmup (dashed green line).
    \textbf{(b)} Linear probe performance of every trial.
    Observations are again consistent with the longer warmup; \sigmareparam with and without pre-LN are the most stable methods.
    \emph{\sigmareparam (0.01)} refers to a \sigmareparam with an initialization scheme of \texttt{trunc\_normal(.01)} instead of \texttt{trunc\_normal(.02)}, with the former showing some signs of instability. Understanding the source of this instability will be the subject of future work. \emph{\sigmareparam + pre-LN} uses the default \texttt{trunc\_normal(.02)}.
    }
    \label{fig:ssl-10}
\end{figure}

\subsection{Reduced Learning Rate Warmup}
\label{sec:app-ssl-warmup}

In \citet{chen2021empirical} the authors noted that the learning rate warmup period needed extending from its typical ImageNet1k default of 10 epochs to 40 epochs, enhancing the stability of the method.
We observe that using \sigmareparam, either with or without pre-LN, we are able to achieve stable SimCLR+ViT training at the original warmup period of 10 epochs. As with our analysis at the longer warmup period, we also investigate the performance distribution across the trials, giving a sense of how instability impacts the final model (see \Cref{fig:ssl-10,fig:ssl-10-index}). 
\begin{figure}
\centering
 \includegraphics[width=0.49\textwidth]{./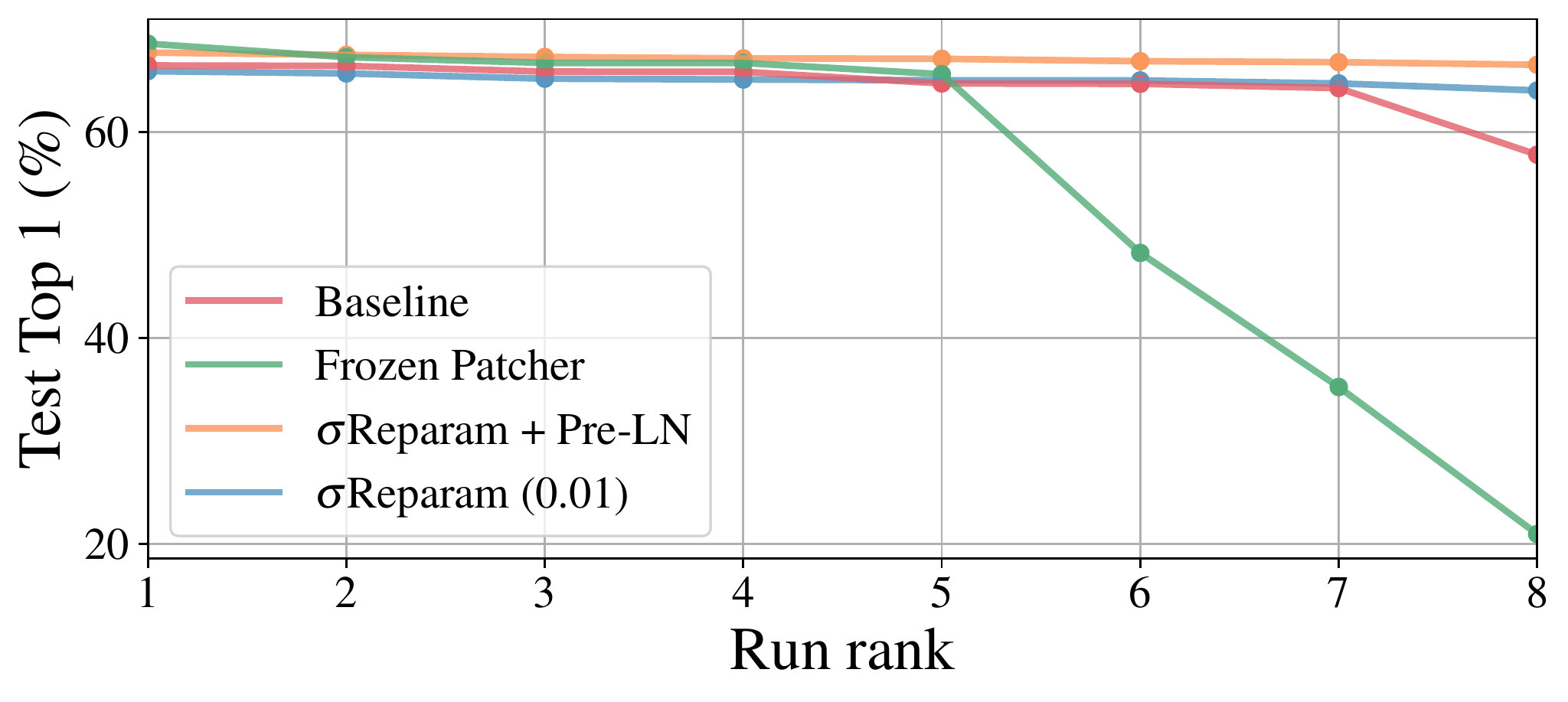}
  \caption{\small Linear probe performance on ImageNet1k at the end of training over 8 trials for each method. 
  Trials are ordered by decreasing performance, with run rank 1 (8) corresponding to the best (worst) trial.
  \emph{Frozen Patcher} produces the best individual, with all other methods marginally lower.
  \emph{\sigmareparam + pre-LN} and \emph{\sigmareparam} are the methods most reliably giving good performance, with \emph{Baseline} and \emph{Frozen Patcher} each susceptible to at least one instability type.
  }
  \label{fig:ssl-10-index}
\end{figure}

\clearpage
\section{Automatic Speech Recognition (ASR)}\label{sec:asr}
In this section we focus on empirical investigation of Transformer training stability and attention entropy collapse phenomenon for automatic speech recognition (ASR) task.

\subsection{Experimental Outline}

{\bf Data}~~~All experiments are performed on the \librispeech{} dataset~\cite{panayotov2015librispeech} where audio paired with transcriptions is available. 
The standard \librispeech{} validation sets (\devclean{} and \devother{}) are used to tune all \hparams, as well as to select the best models. 
Test sets (\testclean{} and \testother{}) are used only to report final word error rate (WER) performance without an external language model.
We keep the original 16kHz sampling rate and compute log-mel filterbanks with 80 coefficients for a 25ms sliding window, strided by 10ms, later normalized to zero mean and unit variance per input sequence.

{\bf Acoustic Model}~~~We stick to a vanilla Transformer model trained with Connectionist Temporal Classification~\citep{graves2006connectionist} loss for simplicity of analysis where only encoder is used (no decoder). We use current, to the best of our knowledge, state-of-the-art vanilla Transformer model configuration and training recipe from~\citet{likhomanenko2020slimipl,likhomanenko2020rethinking}: the model consists of (a) 1D convolution to perform striding (kernel of 7 with stride of 3), (b) Transformer encoder with 36 layers, post-LayerNorm (post-LN), 4 heads, embedding dimension of 768 and MLP dimension of 3072, and (c) a final linear layer to map to the output number of tokens\footnote{The token set consists of the 26 English alphabet letters augmented with the apostrophe and a word boundary token.}. To speed up the model training (2-3x) and decrease memory usage we are using CAPE positional embedding~\citep{likhomanenko2021cape} instead of relative one~\citep{shaw2018self}: both models perform in the same ballpark.

{\bf Training}~~~We follow a training recipe from~\citet{likhomanenko2020slimipl,likhomanenko2020rethinking}. As they, we use SpecAugment~\citep{park2019specaug} which is activated right at the beginning of the training (no difference is found if it is used after 5k training steps): two frequency masks with frequency mask parameter $F=30$, ten time masks with maximum time-mask ratio $p=0.1$ and time mask parameter $T=50$ are used; time warping is not used. We also use Adagrad~\citep{duchi2011adaptive} if not specified otherwise, and learning rate (LR) decaying by 2 each time the WER reaches a plateau on the validation set. We use dynamic batching of 240s audio per GPU and train with tensor cores fp32 on 8 Ampere A100 (40GB) GPUs for 350-500k updates. No weight decay is used. Default warmup is set to 64k and can be varied if stated so. The default LR is 0.03 and is optimized across models. We also apply gradient clipping of 1.

\subsection{Training Stability, Robustness and Generalization}

We start with exploring training stability of the baseline model described above using both pre-LayerNorm (pre-LN) and post-LayerNorm (post-LN) configurations trained on small-scale data, namely 100h of \librispeech{} (\tco).
By varying different \hparams, such as learning rate, warmup, and gradient clipping, post-LN models fail to train. By inspecting the gradient norms per layer and per each parameters' matrix we find a similar vanishing gradients problem as reported, e.g., by~\citet{liu_deep_2020,liu2020admin,wang2022deepnet} for deep Transformers ($>12$ layers) in machine translation domain.
At the same time, pre-LN is stable as reported by, e.g., \citet{nguyen2019transformers,wang2022deepnet,liu2020admin}: we are able to reduce warmup from 64k to 16k, increase learning rate from 0.03 to 0.5, and obtain better results than the training setting from the post-LN baseline. 
However, stable training of pre-LN leads to a degradation in performance compared to post-LN in ASR, similarly as reported in the aforementioned works: validation WER is worse while training loss is lower, see top of Table~\ref{tab:speech-100h}. 
By varying, e.g., learning rate and warmup \hparams and deeper inspecting training stability of pre-LN models we observe that attention entropy is not bounded and can collapse leading to the model divergence with training loss growing, see Figure~\ref{fig:speech:pre-ln-collapse}.

\begin{figure}[t!]
    \centering
    \includegraphics[scale=0.4]{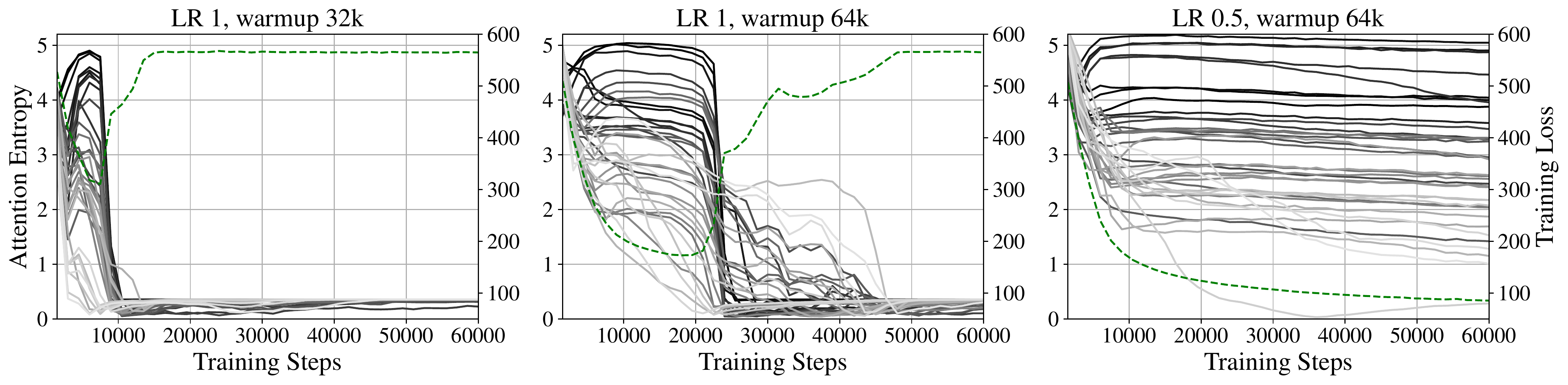}
    \caption{Attention entropy collapse is observed for pre-LN ASR models trained on 100h of \librispeech{} when \hparams, learning rate and warmup, are varied. For every \hparams configuration we plot training loss (dashed, green) and attention entropy for every of 36 layers (solid): a lighter color corresponds to a deeper layer. The right plot (LR 0.5, warmup 64k) gives stable training and the best performance while left (LR 1, warmup 64k) and middle (LR 1 and warmup 32k) have attention entropy collapse phenomenon.}
    \label{fig:speech:pre-ln-collapse}
\end{figure}

\begin{figure}[t!]
    \centering
    \includegraphics[scale=0.345]{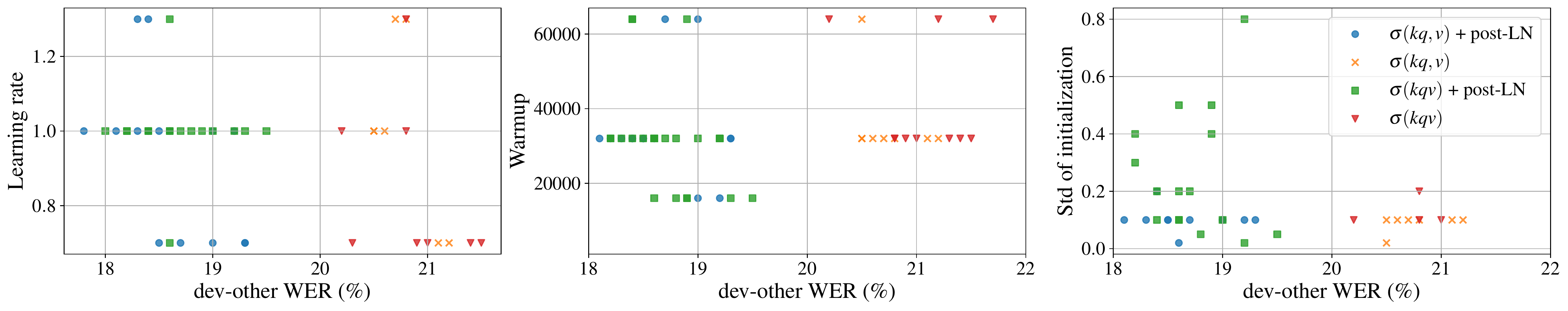}
    \caption{Robustness of \sigmareparam with respect to different \hparams for ASR models trained on 100h of \librispeech{}: learning rate (left), warmup (middle), and initialization std value (right). We report word error rate (WER, x-axis) on the validation {\it dev-other} set.}
    \label{fig:asr_robustness}
\end{figure}

As discussed above in Section~\ref{sec:method}, we now investigate how \sigmareparam affects the training stability and controls the attention entropy bound. 
First, by removing all LayerNorms (pre-LN or post-LN) and switching to \sigmareparam for all linear layers in Transformer blocks and in the final linear layer, we observe (a) stable training similar to pre-LN with no vanishing gradients issue; (b) accepting a wider range of \hparams (Figure~\ref{fig:asr_robustness}) than pre-LN; (c) no attention entropy collapse phenomenon. 
While \sigmareparam significantly outperforms a pre-LN model with the baseline \hparams used for post-LN, it performs worse than an optimized version of a pre-LN model as well as an unstable post-LN model (see top of Table~\ref{tab:speech-100h}). 
However, combining \sigmareparam with post-LN brings two worlds together: stable training similar to pre-LN and generalization similar to post-LN. In summary, {\it \sigmareparam with post-LN achieves (a) similar performance on the validation and test sets and lower training loss (Table~\ref{tab:speech-100h}); (b) no vanishing gradients are observed as for post-LN; (c) the model accepts a wide range of \hparams (Figure~\ref{fig:asr_robustness}) compared to unstable post-LN and stable pre-LN.}

To demonstrate the necessity of \sigmareparam in the form presented in Section~\ref{sec:method}, we compare it with spectral normalization (SN) where $\gamma$ is set to 1 and is not learnable, and WeightNorm~\citep{salimans2016weight} baselines. Both SN and WN perform poorly compared to \sigmareparam (with or without post-LN), see Table~\ref{tab:speech-100h}.

\begin{table}[t!]
\caption{Results for ASR training on 100h of \librispeech{} with \sigmareparam and/or different normalizations: post-layer (post-LN), pre-layer (pre-LN), spectral (SN), weight (WN). We report training loss and word error rate (WER, \% $\downarrow$) for the best models for each configuration: with warmup and Adagrad optimizer (top), and with no warmup and LARS optimizer (bottom). DV states for model divergence. For bottom part: \sigmareparam performs reparametrization for joint matrix for key, queries and values in self-attention, and we are not able to train SN with post-LN configuration.}\label{tab:speech-100h}
\begin{center}
\setlength\tabcolsep{5pt} 
\small
\begin{tabular}{lrrrrrrrrr}
\toprule
 & post-LN & pre-LN & pre-LN & SN & SN & WN & WN & \sigmareparam & \sigmareparam \\
 & & (same) & (optimized) &  & +post-LN & & +post-LN & & +post-LN \\
\midrule
Training loss & 37.7 & 35.3 & 37.2 & 160.4 & 120.3 & 35.6 & 35.4 & 37.5 & 34.9 \\
\midrule
dev-clean WER & 5.9 & 6.9 & 6.2 & 42.6 & 20.3 & 7.0 & 6.3 & 6.4 & 6.1 \\
dev-other WER & 17.7 & 21.3 & 19.1 & 62.9 & 42.7 & 22.3 & 19.4 & 20.5 & 17.8 \\
test-clean WER & 6.2 & 7.1 & 6.3 & 42.4 & 20.4 & 7.3 & 6.7 & 6.8 & 6.4 \\
test-other WER & 17.8 & 21.6 & 19.3 & 63.6 & 43.6 & 22.6 & 19.5 & 21.0 & 18.0 \\
\midrule
\midrule
Training loss & 64.5 & - & 29.4 & 160.0 & DV & 59.1 & 63.2 & 51.1 & 34.2 \\
\midrule
dev-clean WER & 8.1 &  - & 5.9 & 49.8 & DV & 8.3 & 7.1 & 7.2 & 5.8\\
dev-other WER & 25.0 & - &  18.9 & 69.6 & DV & 25.9 & 22.0 & 22.8 & 18.1 \\
test-clean WER & 8.6 &  - & 6.4 & 49.4 & DV & 8.7 & 7.5 & 7.5 & 6.2 \\
test-other WER & 25.6 & - &  19.2 & 70.9 & DV & 26.4 & 22.1 & 23.2 & 18.7 \\
\bottomrule
\end{tabular}
\end{center}
\end{table}

\begin{figure}[t!]
    \centering
    \includegraphics[scale=0.33]{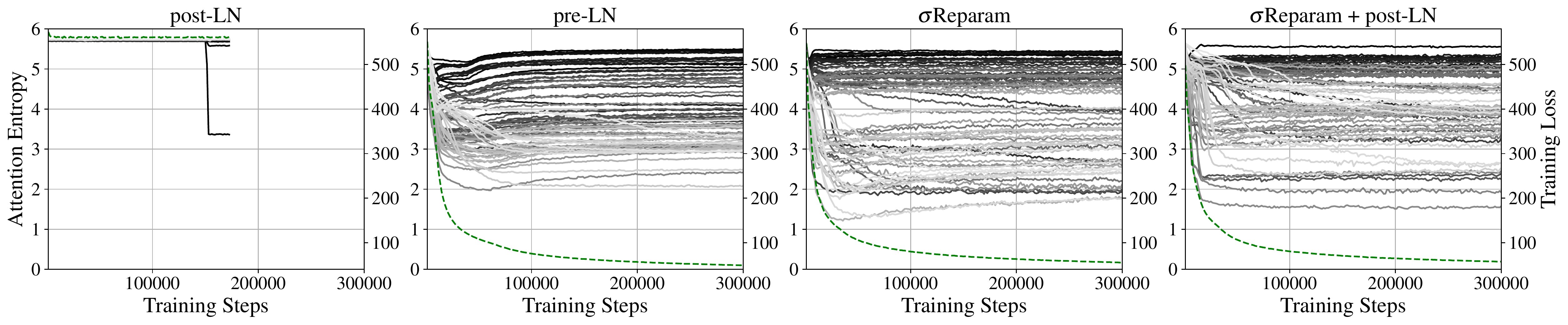}
    \caption{Deep, 72 layers, ASR models trained on 100h of \librispeech{} with different normalizations (from left to right): with post-LN, pre-LN, \sigmareparam, \sigmareparam with post-LN. We plot training loss (dashed, green) and attention entropy for every of 72 layers (solid): a lighter color corresponds to a deeper layer.}
    \label{fig:speech:deep-72}
\end{figure}

\begin{figure}[t!]
    \centering
    \includegraphics[scale=0.39]{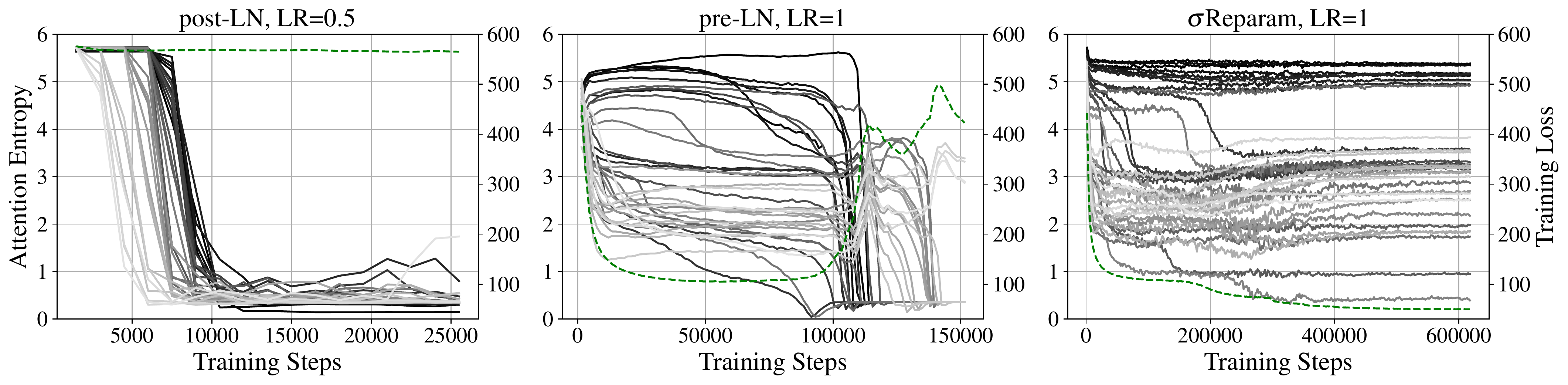}
    \caption{ASR models trained on 100h of \librispeech{} with different normalizations (from left to right: with post-LN, pre-LN, \sigmareparam) and LARS optimizer. We plot training loss (dashed, green) and attention entropy for every of 36 layers (solid): a lighter color corresponds to a deeper layer. Post-LN and pre-LN models have attention entropy collapse when learning rate is increased to 0.5 and 1, correspondingly, while \sigmareparam has no issue.}
    \label{fig:speech:lars}
\end{figure}

We further investigate training behaviour if we increase the model depth by 2x resulting in 72 encoder layers\footnote{The total batch size is reduced by 2x to use the same amount of computational resources.}. In such setting we are unable to train a post-LN model (vanishing gradients are observed) while pre-LN, \sigmareparam and \sigmareparam with post-LN are training out of the box\footnote{Deeper models perform worse compared to smaller ones, however we did not optimize deep models and this is out of scope of the current work.} and have bounded attention entropy throughout the training with no vanishing gradients problem, see Figure~\ref{fig:speech:deep-72}.

\subsection{Training with SGD}\label{sec:asr_lars}
Vanishing gradients and unbalanced gradients can be one of the reasons why the standard SGD fails in training Transformers, especially for deeper architectures, and one needs adaptive optimizers. E.g., \citet{li2022robust} report also another issue with SGD -- ability for generalization, and propose Transformer components modification to improve generalization with SGD training. 

To confirm prior findings, we first experiment with baseline models, pre-LN and post-LN, and SGD optimizer. While post-LN is not training, a pre-LN model can be trained but has a poor generalization. The same holds for \sigmareparam and \sigmareparam with post-LN: the gradient magnitude between the first and last layers can differ not drastically as in post-LN, but generalization is still poor. Similarly to vision experiments, we switch to the LARS \citep{you2017large} (with momentum 0.9) optimizer which normalizes gradients by their magnitudes and thus provides balanced gradients. By carefully tuning only the learning rate from 0.1 to 1.5 (the rest stays the same as for the adaptive optimizer except warmup which is set to 0k) we are able to train pre-LN and post-LN, see bottom of Table~\ref{tab:speech-100h}. 

In our experiments post-LN is more unstable (many learning rates are diverging or not training) and gives significantly worse results than pre-LN. Furthermore, pre-LN is still behind the baseline that uses an adaptive optimizer. 
However, if we switch to $\sigma$Reparam (key, queries and values are represented as one matrix) we observe stable training with respect to learning rate changes, and combined together with post-LN it achieves similar performance to the best results from top of Table~\ref{tab:speech-100h} while keeping the training loss low\footnote{For the separate reparametrization for (keys, queries) and values, we observe less stable training with LARS and no warmup relative to reparametrizing them together.}. {\it To the best of our knowledge, this is the first ASR Transformer model trained without an adaptive optimizer achieving stable training and comparable performance.} Regarding attention entropy collapse, we observe it with LARS training also, see Figure~\ref{fig:speech:lars}: \sigmareparam controls the bound resulting in wider range of accepted \hparams for stable training (models can be trained with learning rate up to 1, while pre-LN and post-LN result in model divergence).

\subsection{\Hparams}

We present \hparams for our ASR experiments on 100h of \librispeech{} in \Cref{tab:speech-hyp}.

\begin{table}[h!]
  \caption{\Hparams comparison for ASR training on 100h of \librispeech{} for models from  Table~\ref{tab:speech-100h}.}
  \label{tab:speech-hyp}
  \centering
  \small

  \begin{tabular}{lccccc}
    \toprule
    & post-LN & pre-LN & \sigmareparam & \sigmareparam + post-LN \\
    \midrule
    dev-clean & 5.9 & 6.2 & 6.4 & 6.1 \\
    dev-other & 17.7 & 19.1 & 20.5 & 17.8 \\
    \midrule
    Weight Init & \texttt{uniform(.036)} & \texttt{uniform(.036)} & \texttt{trunc\_normal(.1)} & \texttt{trunc\_normal(.1)} \\
    \sigmareparam & No & No & Yes & Yes \\
    LayerNorm & Yes & Yes & No & Yes \\ 
    Base LR & 0.03 & 0.5 & 1 & 1 \\
    Optimizer &  \multicolumn{4}{c}{Adagrad}  \\
    LR schedule & \multicolumn{4}{c}{step(330k, 0.5)} \\
    Batch size & \multicolumn{4}{c}{240s x 8}\\
    Weight decay & \multicolumn{4}{c}{none} \\
    Warmup steps & \multicolumn{4}{c}{64k} \\
    Training steps & \multicolumn{4}{c}{500k} \\
    Dropout & \multicolumn{4}{c}{0.3} \\
    Stoch. Depth & \multicolumn{4}{c}{0.3} \\
    SpecAugment & \multicolumn{4}{c}{$F=30$, $T=50$, $p=0.1$, $fmask=2$, $tmask=10$} \\
    Grad. clip & \multicolumn{4}{c}{1} \\
    \midrule
    \midrule
    dev-clean & 8.1 & 5.9 & 7.2 & 5.8 \\
    dev-other & 25 & 18.9 & 22.8 & 18.1 \\
    \midrule
    Weight Init & \texttt{uniform(.036)} & \texttt{uniform(.036)} & \texttt{trunc\_normal(.1)} & \texttt{trunc\_normal(.1)} \\
    \sigmareparam & No & No & Yes & Yes \\
    LayerNorm & Yes & Yes & No & Yes \\
    Base LR & 0.1 & 0.5 & 1 & 0.3 \\
    Optimizer &  \multicolumn{4}{c}{LARS}  \\
    Momentum & \multicolumn{4}{c}{0.9}\\
    LR schedule & \multicolumn{4}{c}{step(300k, 0.5)} \\
    Batch size & \multicolumn{4}{c}{240s x 8}\\
    Weight decay & \multicolumn{4}{c}{none} \\
    Warmup steps & \multicolumn{4}{c}{0k} \\
    Training steps & \multicolumn{4}{c}{500k} \\
    Dropout & \multicolumn{4}{c}{0.3} \\
    Stoch. Depth & \multicolumn{4}{c}{0.3} \\
    SpecAugment & \multicolumn{4}{c}{$F=30$, $T=50$, $p=0.1$, $fmask=2$, $tmask=10$} \\
    Grad. clip & \multicolumn{4}{c}{1} \\
    \bottomrule
  \end{tabular}
\end{table}

\subsection{Large-Scale Experiments: 1k Hours of \librispeech{}}
We also evaluate \sigmareparam for large-scale data: for further experiments we take all $\sim$1k hours of \librispeech{} as the training data. We consider again the Adagrad optimizer with two schedules on learning rate: cosine (with 1 phase of 500k iterations) and step-wise decaying as before for \tco{} experiments. We use exactly the same architecture and \hparams as for small-scale experiments from top of Table~\ref{tab:speech-hyp} except dropout and layer drop which are decreased to 0.1 to decrease model regularization effect. For all models we tune only the learning rate. As before, spectral reparametrization of keys and queries is done separately from values. We also use the learning rate on gamma to be twice bigger than the main learning rate. Similarly to small-scale experiments, training on \librispeech{} shows (see Table~\ref{tab:speech-full-ls}) that {\it $\sigma$Reparam accompanied with post-LN can match the post-LN baseline, while having robustness to the \hparam changes} (e.g. it allows larger learning rate values without any stability issues).

\begin{table}[t!]
\caption{Results for ASR training on full \librispeech{} with \sigmareparam and/or different normalizations: post-layer (post-LN), pre-layer (pre-LN). We report word error rate (WER, \% $\downarrow$) for the best models for each configuration: with step-wise (top) and cosine (bottom) learning rate schedules.}\label{tab:speech-full-ls}
\begin{center}
\setlength\tabcolsep{5pt} 
\small
\begin{tabular}{lrrrrrrrrr}
\toprule
 & post-LN & post-LN & pre-LN & pre-LN & \sigmareparam & \sigmareparam \\
 & \citep{likhomanenko2020rethinking} & & (same) & (optimized) &  & +post-LN \\
\midrule
dev-clean WER & 2.6 & 2.6 & 2.9 & 2.6 & 2.7 & 2.8 \\
dev-other WER & 7.0 & 6.9 & 7.7 & 6.8 & 7.2 & 7.1 \\
test-clean WER & 2.7 & 2.7 & 3.0 & 2.8 & 2.9 & 2.9 \\
test-other WER & 6.9 & 6.9 & 7.8 & 6.8 & 7.3 & 7.0 \\
\midrule
\midrule
dev-clean WER & - & 2.6 & 2.6 & - & 2.8 & 2.7 \\
dev-other WER & - & 7.1 & 6.9 & - & 7.6 & 7.3 \\
test-clean WER & - & 2.9 & 2.8 & - & 3.0 & 2.9 \\
test-other WER & - & 7.2 & 7.0 & - & 7.7 & 7.2 \\
\bottomrule
\end{tabular}
\end{center}
\end{table}

\clearpage
\section{Machine Translation (MT)}\label{sec:app:mt}
In this section we focus on empirical investigation of training stability and attention entropy collapse in deep Transformers for machine translation (MT) with an encoder-decoder architecture. We track attention entropy for the encoder self-attention, the decoder cross-attention and the encoder-decoder self-attention separately to study the entropy collapse phenomenon. The goal of this section is to understand {\it how varying the model depth for the well-established recipes affects the training stability}.

\subsection{Experimental Outline}

We build our experiments on top of the open-sourced code\footnote{\url{https://github.com/microsoft/torchscale}} and baseline recipes provided by \citet{wang2022deepnet}. We follow their instructions\footnote{\url{https://github.com/microsoft/torchscale/tree/main/examples/fairseq\#example-machine-translation}} and \hparams given in \citet{wang2022deepnet}.

{\bf Data}~~~Following~\citet{wang2022deepnet} we perform all experiments on standard WMT'17 English-German benchmark\footnote{\url{https://www.statmt.org/wmt17/translation-task.html}}: we use all provided training data for English-German pair, {\it newstest2016} set as a validation set and {\it newstest2017} as a test set for final evaluation purpose only. We use Fairseq~\citep{ott2019fairseq} script to preprocess data: it uses Byte Pair Encoding (BPE) vocabulary jointly for source and target language  resulting in 41k subword tokens.

{\bf Models}~~~We consider both regular and deep configurations for a vanilla encoder-decoder Transformer model with $N$ encoder and $N$ decoder layers where $N$ is taken as 6 (6L-6L), 18 (18L-18L), 50 (50L-50L), and 100 (100L-100L). Every Transformer layer in each configuration has an embedding dimension of 512, MLP dim of 2048, and 8 heads. Sinusoidal absolute positional embedding~\citep{vaswani2017attention} is used for both encoder and decoder. 

{\bf Training}~~~ We strictly follow the same training recipe from~\citet{wang2022deepnet} (without using back-translation or other domain-specific augmentations) with detailed \hparams in \Cref{tab:mt-hyp}. All models are trained on 8 GPUs of A100 80GB with mixed precision computations and dynamic batching resulting in total batch size of 524288 tokens: for each architecture we pack maximum tokens per GPU and use gradient accumulation (4 for 6L-6L and 18L-18L, 8 for 50L-50L and 16 for 100L-100L).

\begin{table}[h!]
  \caption{\Hparams comparison for MT training on WMT'17 for models from  Table~\ref{tab:mt}.}
  \label{tab:mt-hyp}
  \centering
  \small
  \begin{tabular}{lccc}
    \toprule
    & pre-LN/post-LN/DeepNorm & \sigmareparam + post-LN & \sigmareparam + deepnorm \\
    \midrule
    Weight Init & Fairseq & \texttt{trunc\_normal(.1/.01)} & \texttt{trunc\_normal(.1/.01)} \\
    \sigmareparam & No & Yes & Yes \\
    LayerNorm & Yes & Yes & Yes \\ 
    Base LR & 1.4e-3 & 4.5e-3 & 4.5e-3 \\
    Optimizer &  \multicolumn{3}{c}{Adam}  \\
    LR schedule & \multicolumn{3}{c}{inverse sqrt} \\
    Batch size & \multicolumn{3}{c}{4096 tokens x 8 GPUs x 16 gradient accumulation}\\
    Weight decay & \multicolumn{3}{c}{0.0001} \\
    Warmup steps & \multicolumn{3}{c}{4k} \\
    Warmup init LR & \multicolumn{3}{c}{1e-7} \\
    Training steps & \multicolumn{3}{c}{100k} \\
    Dropout & \multicolumn{3}{c}{0.4} \\
    Grad. clip & \multicolumn{3}{c}{0} \\
    Adam $\epsilon$ & \multicolumn{3}{c}{1e-8} \\
    Adam $\beta$ & \multicolumn{3}{c}{(0.9, 0.98)} \\
    Label smoothing  & \multicolumn{3}{c}{0.1} \\
    \bottomrule
  \end{tabular}
\end{table}

{\bf Evaluation}~~~As it is not specified in~\citet{wang2022deepnet} how the best checkpoint is selected on the validation set, we decided to stick to simple rule: checkpoint with best perplexity on the validation set is selected and further evaluated on both validation and test sets for BLEU score computation which is reported throughout the paper. BLEU is computed by in-built BLEU scripts of Fairseq with the beam size of 5. As reported in prior works we also observe a strong correlation between perplexity and BLEU score: improved perplexity leads to better BLEU score. However BLEU scores on validation and test sets are less correlated and high variation is observed. For that reason we often perform 3 runs with different seeds to estimate standard deviation (std) of the BLEU score.

\subsection{Training Stability of Deep Models}

\begin{table}[t!]
    \caption{Results for MT on WMT'17 English-German data for post-LN, with or without additional \sigmareparam, with or without residual rescaling (`DeepNorm' from \citet{wang2022deepnet}). We report average BLEU score and its std across 3 runs with different seeds for a variety of encoder-decoder architectures: 6L-6L, 18L-18L, 50L-50L, and 100L-100L. `DV' states for how many times a model diverges / is not training across runs. With red block we mark unstable baseline training while with blue block -- training stabilized by \sigmareparam.}\label{tab:mt}
    \begin{center}
    \setlength\tabcolsep{5pt} 
    \resizebox{\linewidth}{!}{
    \small
    \begin{tabular}{lcccccccccccc}
    \toprule
     \multirow{2}{*}{Models} & \multicolumn{3}{c}{6L-6L} & \multicolumn{3}{c}{18L-18L} & \multicolumn{3}{c}{50L-50L} & \multicolumn{3}{c}{100L-100L} \\
    \cmidrule(lr){2-4} \cmidrule(lr){5-7} \cmidrule(lr){8-10} \cmidrule(lr){11-13}
     & DV & Valid BLEU & Test BLEU & DV & Valid BLEU & Test BLEU & DV & Valid BLEU & Test BLEU & DV & Valid BLEU & Test BLEU \\
    \midrule
    pre-LN & 0/3 & 34.2$_{0.1}$ & 27.4$_{0.1}$ & 0/3 & 35.3$_{0.1}$ & 28.8$_{0.1}$ & 0/3 & 34.9$_{0.1}$ & 28.5$_{0.1}$ & 0/3 & 34.7$_{0.1}$ & 28.3$_{0.1}$ \\
    \midrule
    post-LN & 0/3 & 34.2$_{0.2}$ & 27.8$_{0.2}$ & \cellcolor{red!20}{\bf 1/3} & \cellcolor{red!20}35.2$_{0.2}$ & \cellcolor{red!20}29.0$_{0.2}$ & \cellcolor{red!20}{\bf 3/3} & \cellcolor{red!20}- & \cellcolor{red!20}- & \cellcolor{red!20}{\bf 3/3} & \cellcolor{red!20}- & \cellcolor{red!20}- \\
    \,\, + \sigmareparam & 0/3 & 34.3$_{0.3}$ & 27.8$_{0.2}$ & \cellcolor{blue!20}0/3 & \cellcolor{blue!20}35.2$_{0.2}$ & \cellcolor{blue!20}28.7$_{0.2}$ & \cellcolor{blue!20}0/3 & \cellcolor{blue!20}34.9$_{0.3}$ & \cellcolor{blue!20}28.5$_{0.6}$
     & {\cellcolor{red!20}\bf 3/3} & \cellcolor{red!20}- & \cellcolor{red!20}- \\
     \midrule
    DeepNorm & 0/3 & 34.2$_{0.2}$ & 27.9$_{0.2}$ & 0/3 & 35.7$_{0.4}$ & 29.2$_{0.2}$ & 0/3 & 35.7$_{0.2}$ & 29.2$_{0.1}$ & \cellcolor{red!20} {\bf 2/3} & \cellcolor{red!20} 35.2$_{0.0}$ & \cellcolor{red!20} 29.2$_{0.0}$ \\
    \,\, + \sigmareparam & 0/3 & 34.4$_{0.4}$ & 27.7$_{0.2}$ & 0/3 & 35.2$_{0.2}$ & 28.6$_{0.1}$ & 0/3 & 34.8$_{0.4}$ & 28.3$_{0.3}$ & \cellcolor{blue!20} {\bf 0/3} & \cellcolor{blue!20} 34.4$_{0.1}$ & \cellcolor{blue!20} 28.0$_{0.1}$ \\
    \bottomrule
    \end{tabular}
    }
    \end{center}
\end{table}
    
We start with exploring training stability of the baseline model described in~\citet{wang2022deepnet} with pre-LayerNorm (pre-LN) and post-LayerNorm (post-LN) across different depths (all \hparams stay the same except depth is varied). Note that post-LN is a popular design choice for MT tasks due to its good generalization properties. 

For pre-LN models, we reproduced stable results and convergence, however the BLEU score we get is better (\Cref{tab:mt}) than reported by~\citet{wang2022deepnet}. 
We also observed the same trend of decreasing model performance with increasing the model depth. 
Attention entropy is nicely bounded across all depths similarly to ASR\footnote{Note, we did not do any \hparams search to investigate how models behave with, e.g., wider range of learning rates as we did for ASR models.}, see \Cref{fig:mt:preln}.

\begin{figure}[h!]
    \centering
    \includegraphics[scale=0.33]{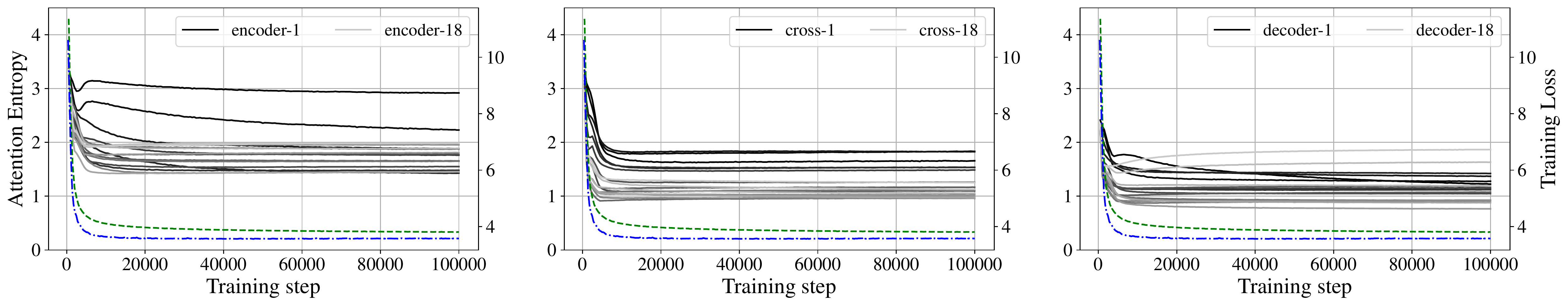}
    \includegraphics[scale=0.33]{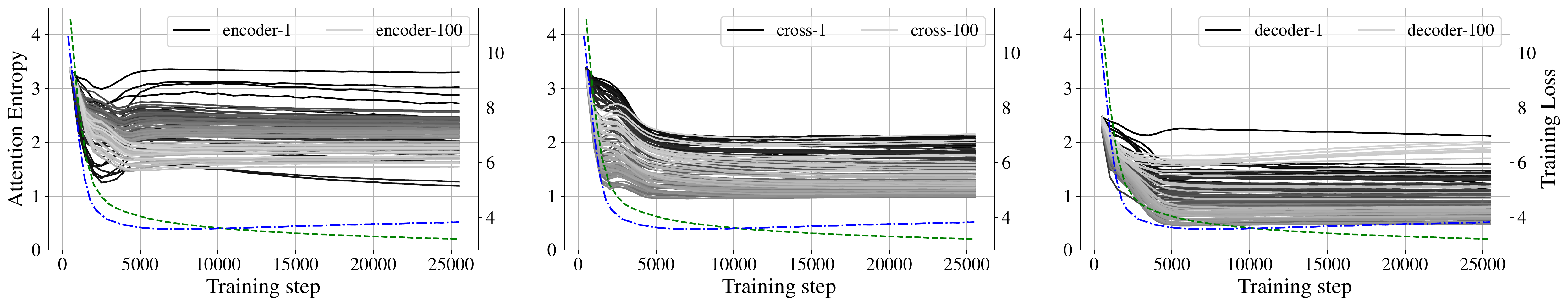}
    \caption{Attention entropy behaviour for MT models trained on WMT'17 with pre-LN for 18L-18L (top) and 100L-100L (bottom): encoder self-attention (left), encoder-decoder cross-attention (middle) and decoder self-attention (right). We plot training (dashed, green) and validation (dot-dashed, blue) losses and attention entropy across all Transformer layers (solid): a lighter color corresponds to a deeper layer. Both deep and shallow pre-LN models have nicely bounded attention entropy and no instability issues are observed across runs with different seeds.}
    \label{fig:mt:preln}
\end{figure}

For post-LN models, we reproduced stable results for 6L-6L depth and observe nicely bounded attention entropy behaviour. However for 18L-18L configurations, divergence is observed when varying the random seed. By close inspection we observe no vanishing gradients problem while attention entropy collapse clearly occurs during training (compare top and middle in~\Cref{fig:mt:postln}) in the encoder attention and the encoder-decoder cross-attention. Deeper models, namely 50L-50L and 100L-100L, are unable to train and we observe the same vanishing gradients problem as reported by~\citet{wang2022deepnet,liu2020admin} as well as attention entropy collapse for some of the deep layers across the board, see bottom plot in \Cref{fig:mt:postln}.

\begin{figure}[h!]
    \centering
    \includegraphics[scale=0.33]{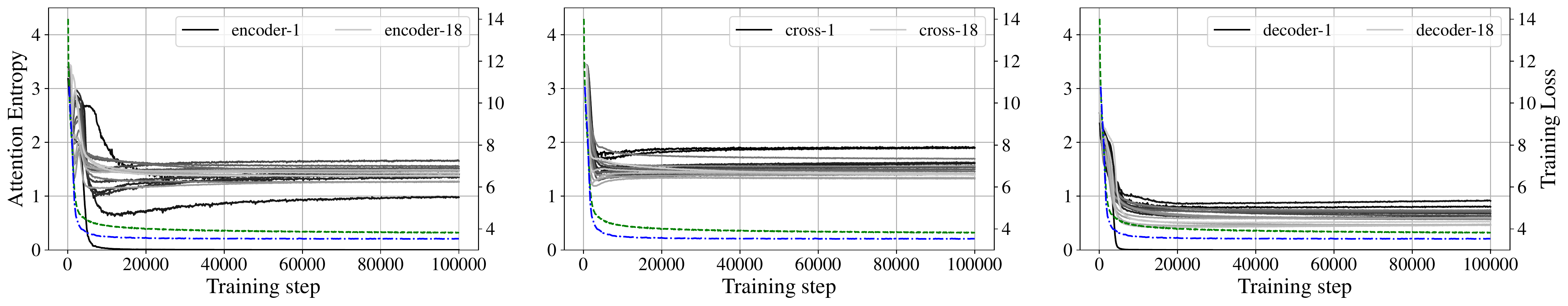}
    \includegraphics[scale=0.33]{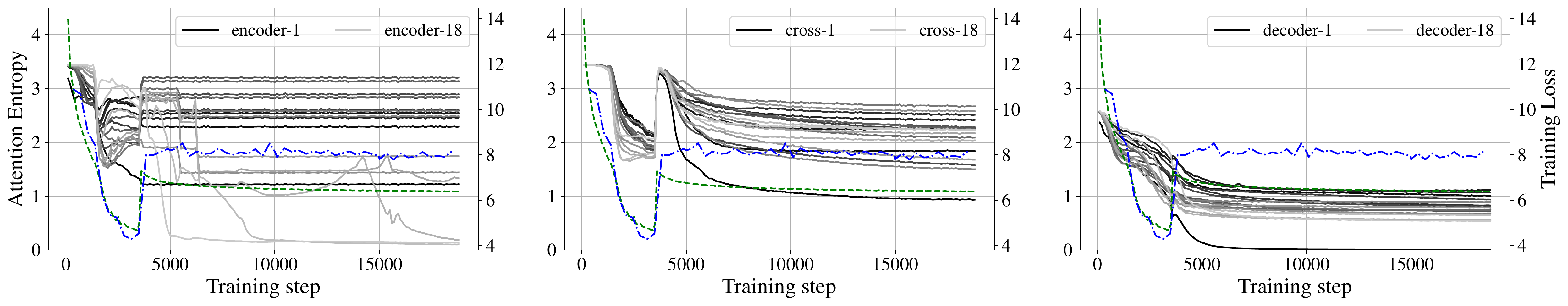}
    \includegraphics[scale=0.33]{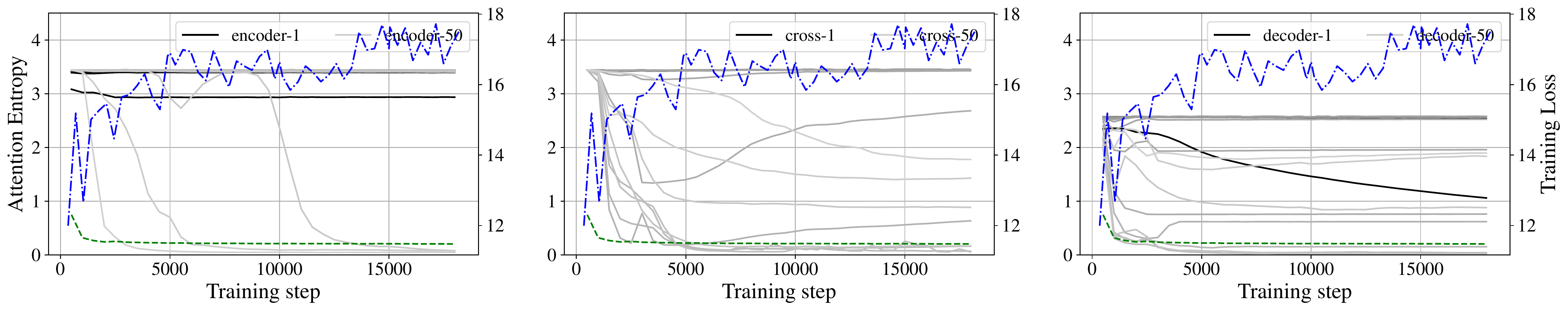}
    \caption{Attention entropy behaviour for MT models trained on WMT'17 with post-LN for 18L-18L (top, middle) with two seeds and 50L-50L (bottom): encoder self-attention (left), encoder-decoder cross-attention (middle) and decoder self-attention (right). We plot training (dashed, green) and validation (dot-dashed, blue) losses and attention entropy across all Transformer layers (solid): a lighter color corresponds to a deeper layer. While 6L-6L (not shown in Figure) has stable training, deeper models experience different issues in training: 18L-18L can be stable (top) or can diverge with attention entropy collapse phenomenon (middle) for the same \hparams but different seed, while 50L-50L has vanishing gradients and layers are not training resulting in constant attention entropy.}
    \label{fig:mt:postln}
\end{figure}

\citet{wang2022deepnet, liu2020admin} are recent works that proposed to rescale residual connections. To stabilize training and resolve vanishing gradients problem in deep post-LN models to preserve post-LN generalization properties. We focus in this paper on~\citet{wang2022deepnet}, DeepNorm, solution (it uses post-LN and rescale residual connections depending on the initial model depth) as they reported ability to train up to 1000-depth Transformer models. We are able to reproduce DeepNorm results for 6L-6L, 18L-18L and 50L-50L depths observing stable training (no any models diverged and training went nicely). However we see no  performance gain of a 50L-50L depth model over a 18L-18L model. Furthermore, we observe instability in training of the 100L-100L model resulting in only 1 successful run among 3 (only seed is varied) while 2 others are diverging after some time (training loss is growing). By close inspection of the training behaviour we do not see any drastic issue of vanishing gradients, however we see the attention entropy collapse happening, see \Cref{fig:mt:deepnorm}. First of all, attention entropy is not bounded for DeepNorm even in 18L-18L and 50L-50L similarly to what we observed in post-LN models. Also a tiny attention entropy collapse happens in 50L-50L (see top plot in~\Cref{fig:mt:deepnorm}) though it does not lead to any divergence. Second, attention entropy collapse is clearly pronounced for 100L-100L models (second, third, and forth rows of~\Cref{fig:mt:deepnorm}) leading  to 2/3 seeds divergence and one with worse performance than 50L-50L models\footnote{From our empirical observations in other domains it could be that deeper models are worse as any attention entropy collapse degrades optimization process resulting in worse generalization.}. Finally, it is interesting to note that attention entropy collapse in 100L-100L can happen for different layers, first and / or last, and with different regimes for the encoder/decoder self-attention and the encoder-decoder cross-attention.

\begin{figure}[t!]
    \centering
    \includegraphics[scale=0.33]{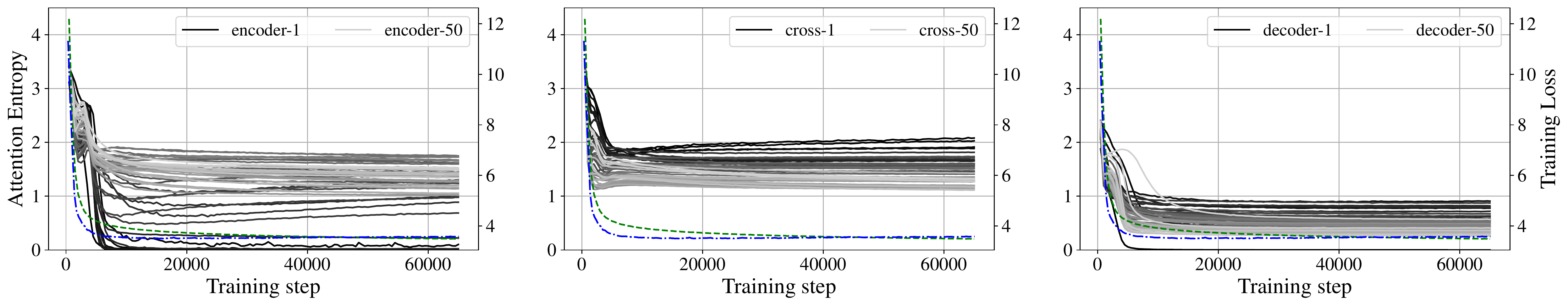}
    \includegraphics[scale=0.33]{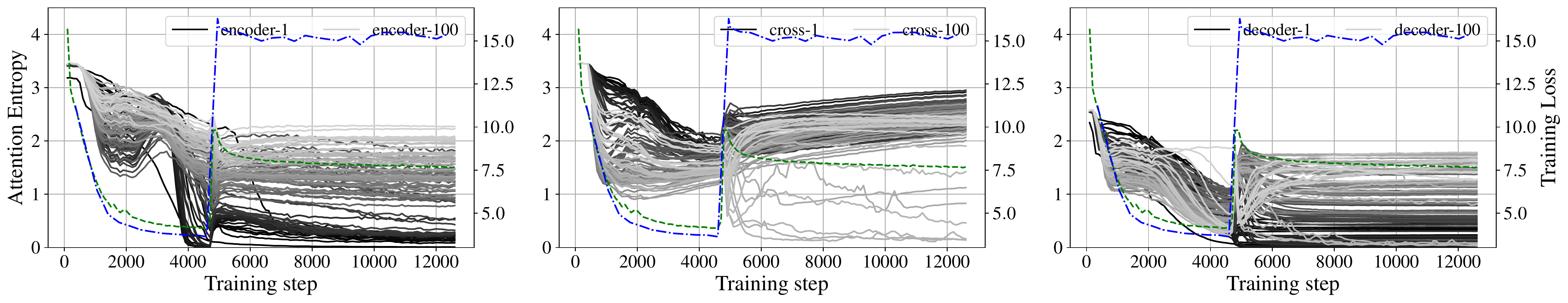}
    \includegraphics[scale=0.33]{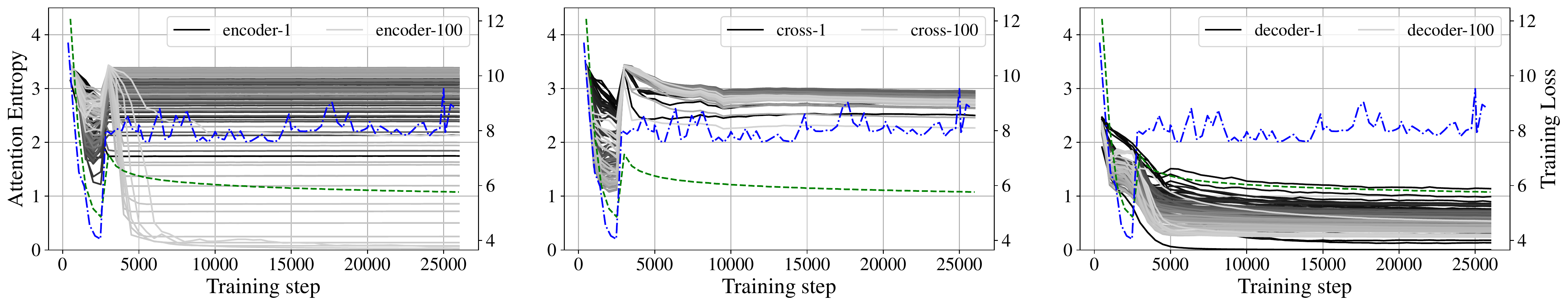}
    \includegraphics[scale=0.33]{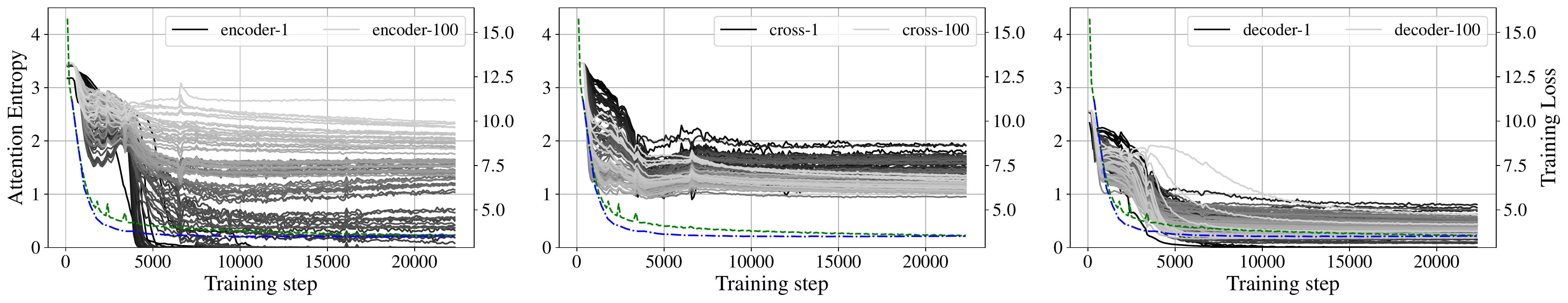}
    \caption{Attention entropy behaviour for MT models trained on WMT'17 with DeepNorm~\citep{wang2022deepnet} (residual rescaling and post-LN) for 50L-50L (row 1) and 100L-100L with three seeds (2-4 rows): encoder self-attention (left), encoder-decoder cross-attention (middle) and decoder self-attention (right). We plot training (dashed, green) and validation (dot-dashed, blue) losses and attention entropy across all Transformer layers (solid): a lighter color corresponds to a deeper layer. While DeepNorm solves vanishing gradient problem for deep models we observe attention entropy collapse phenomenon in both 50L-50L and 100L-100L models. While the 50L-50L model can recover from attention entropy collapse (happens in encoder layers) and nicely converge, 100L-100L suffers from it and can diverge. While~\citet{wang2022deepnet} reported stable training for 100L-100L we are unable to reproduce their results and observe 2/3 runs with different seeds (the rest of \hparams are the same as reported in the paper) diverge with attention entropy collapse.}
    \label{fig:mt:deepnorm}
\end{figure}

\begin{figure}[t!]
    \centering
    \includegraphics[scale=0.33]{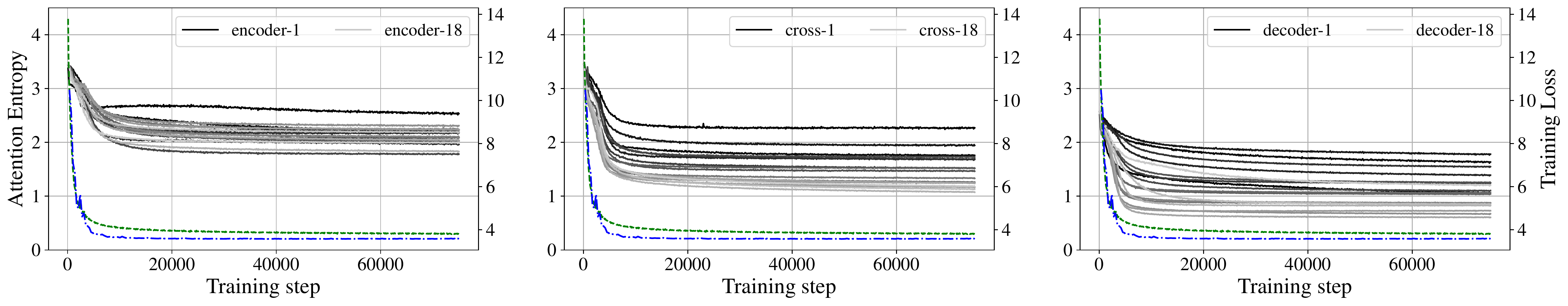}
    \includegraphics[scale=0.33]{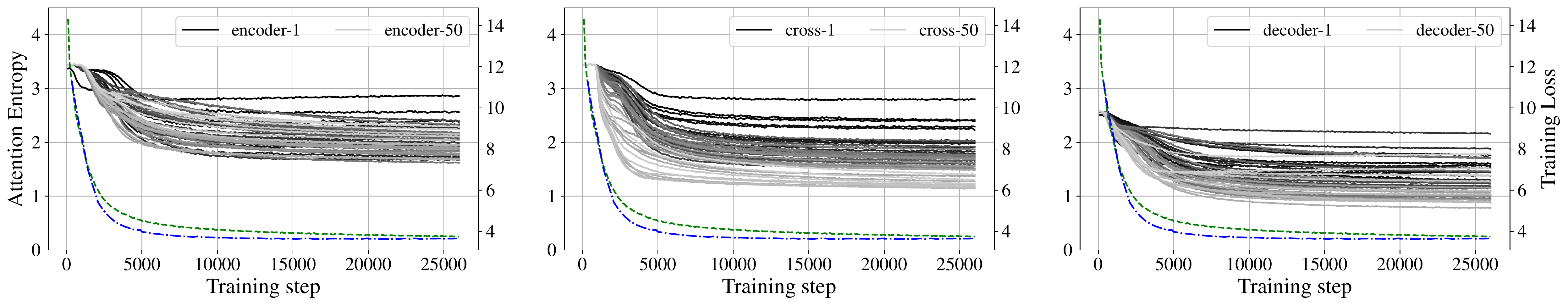}
    \caption{Attention entropy behaviour for MT models trained on WMT'17 with post-LN and \sigmareparam together for 18L-18L (top) and 50L-50L (bottom): encoder self-attention (left), encoder-decoder cross-attention (middle) and decoder self-attention (right). We plot training (dashed, green) and validation (dot-dashed, blue) losses and attention entropy across all Transformer layers (solid): a lighter color corresponds to a deeper layer. While 18L-18L is unstable for post-LN models (see top and middle in~\Cref{fig:mt:postln}), adding \sigmareparam nicely bounds attention entropy and stabilize training across different seeds and \hparams (we did not observe any instability or model divergence for $>10$ runs) allowing training with larger learning rates. While 50L-50L experiences vanishing gradients problem for post-LN models (see bottom in~\Cref{fig:mt:postln}), adding \sigmareparam balances gradients across layers and nicely bounds attention entropy: training is stable across different seeds and \hparams accepting larger learning rates.}
    \label{fig:mt:postln-spectral}
\end{figure}

\begin{figure}[t!]
    \centering
    \includegraphics[scale=0.33]{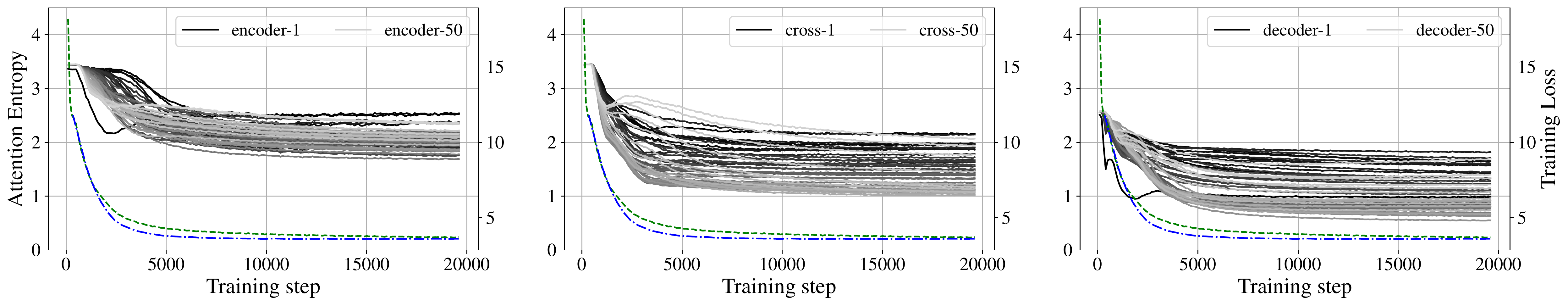}
    \includegraphics[scale=0.33]{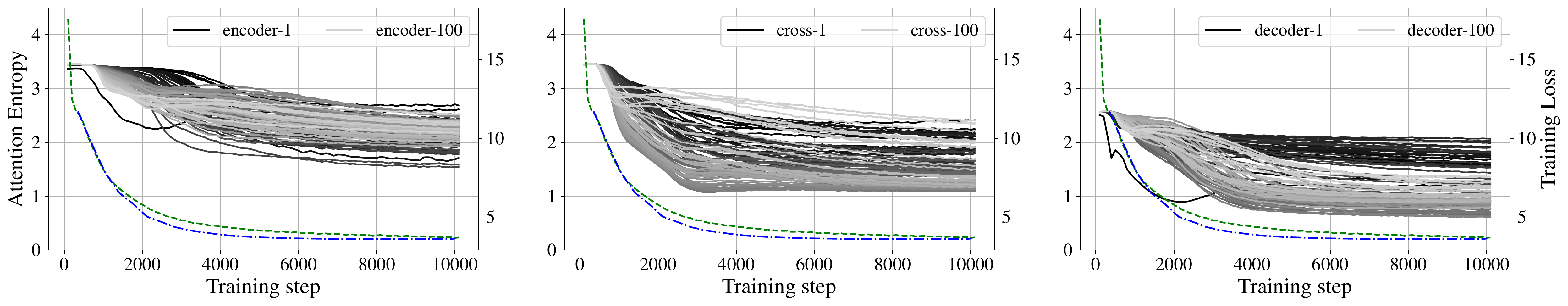}
    \caption{Attention entropy behaviour for MT models trained on WMT'17 with DeepNorm~\citep{wang2022deepnet} (residual rescaling and post-LN) and \sigmareparam together for 50L-50L (top) and 100L-100L (bottom): encoder self-attention (left), encoder-decoder cross-attention (middle) and decoder self-attention (right). We plot training (dashed, green) and validation (dot-dashed, blue) losses and attention entropy across all Transformer layers (solid): a lighter color corresponds to a deeper layer. Usage of \sigmareparam allows to nicely bound attention entropy (across encoder, decoder and cross-attention) for both 50L-50L and 100L-100L and fully stabilize training (across $>10$ runs with different \hparams and seeds we did not observe any instability and model divergence) of DeepNorm alone for 100L-100L allowing even larger learning rate values.}
    \label{fig:mt:deepnorm-spectral}
\end{figure}

All models performance on validation and test sets across depths as well as the number of successful runs are reported in \Cref{tab:mt}.

\subsection{\texorpdfstring{\sigmareparam}{Lg} for Deep Models}

We now experiment with injection of \sigmareparam into post-LN and DeepNorm models to alleviate attention entropy collapse and confirm \sigmareparam effectiveness for deep models. \sigmareparam is used for every linear layer in the encoder and decoder Transformer blocks alongside with post-LN. With DeepNorm we also apply its rescaling of initialization and residual connections.

\sigmareparam nicely bounds attention entropy for 18L-18L and 50L-50L post-LN models, resolving any divergence issues as well as vanishing gradient in the 50L-50L model, see \Cref{fig:mt:postln-spectral}. However, 100L-100L is still experiencing a vanishing gradient problem and only careful initialization of std for \sigmareparam can resolve it: for that reason we report that model training is not stable. In terms of performance, \sigmareparam with post-LN matches post-LN for 6L-6L, in the same ballpark for 18L-18L and performs the same as 18L-18L for 50L-50L. Note, that we did not do any \hparams search except tuning learning rate as \sigmareparam has different learning rate scales.

\sigmareparam also nicely bounds attention entropy for 18L-18L, 50L-50L, 100L-100L DeepNorm models, resolving any divergence issues for 100L-100L (vanishing gradient is not observed as DeepNorm targets it), see \Cref{fig:mt:postln-spectral}. In terms of performance \sigmareparam with DeepNorm matches DeepNorm for 6L-6L, in the same ballpark as DeepNorm for 18L-18L and inferior to DeepNorm for 50L-50L and 100L-100L.

\clearpage
\section{Language Modeling (LM)}\label{sec:language-modeling}
As we discussed above encoder Transformer for vision and speech domains and encoder-decoder for machine translation, in this section we focus on the pure decoder architecture in language model task to verify if \sigmareparam is effective for stable training and can simplify a training recipe there too.

\subsection{Experimental Outline}
We use the WikiText-103 language model (LM) benchmark, which consists of 103M tokens sampled from English Wikipedia \citep{wiki103}. Our baseline is a highly optimized Transformer \citep{Baevski2019AdaptiveIR} with 32 layers, 8 heads, 128 head dimensions, 1024 model dimensions, 4096 fully connected dimensions and post-LayerNorm (post-LN). The word embedding and softmax matrices are tied \citep{Press2017UsingTO}.
We partition the training data into non-overlapping blocks of 512 contiguous tokens and train the model to autoregressively predict each token \citep{Baevski2019AdaptiveIR}.
Validation and test perplexity is measured by predicting the last 256 words out of the input of 512 consecutive words to avoid evaluating tokens in the beginning with limited context (\textit{early token curse}, \citealp{shortformer}). We integrate \sigmareparam implementation into the open-sourced code and recipe for the baseline\footnote{\url{https://github.com/facebookresearch/fairseq/blob/main/examples/language_model/README.adaptive_inputs.md}}. All models are trained in full precision on 8 GPUs of A100 40GB.

\subsection{Results}
We do not experience training instability with the baseline Transformer, likely because the masked attention in autoregressive models makes entropy collapse less likely to occur. This is consistent and in line with observations in machine translation where entropy collapse is observed in the encoder and cross-attention. Nonetheless, we experimented with \sigmareparam to test its generality on a different modality/problem. We apply \sigmareparam to all linear layers of the Transformer while removing all post-LNs, and search for learning rate in a grid [1, 1.5, 2, 2.5] and weight decay in the grid [1e-3, 1e-4, 0]. All other \hparams are kept the same as the baseline, including Nesterov SGD optimizer\footnote{Note, this is different from other domains where a standard recipe includes only adaptive optimizers.}. The results are shown in Table \ref{tab:lm_results}. We see that even in the absence of LayerNorm, \sigmareparam shows strong performance in convergence and validation/test performance. With a mild weight decay, \sigmareparam also outperforms the baseline wrt the validation/test PPL. In summary, {\it while there is no observed entropy collapse in language model training, \sigmareparam can simplify a training recipe by removing all post-LNs}.

\begin{table}[ht!]
\caption{
WikiText-103 language modeling results in perplexity.}
\label{tab:lm_results}
\centering
\addtolength{\tabcolsep}{-2.7pt}  
\begin{tabular}{@{} lcccccc @{}}
\toprule

\multirow{2}{*}{Model} &  \multicolumn{3}{c}{PPL$\downarrow$} \\
\cmidrule(lr){2-4}
& train & valid  & test \\
\midrule
$\sigma$Reparam w/ weight decay  & 16.5 & \textbf{17.9} & \textbf{18.6} \\
$\sigma$Reparam w/o weight decay  & \textbf{12.9} & 18.5 & 19.3 \\
\midrule
post-LN \citet{Baevski2019AdaptiveIR}  & 15.4 & 18.1 & 18.7 \\
\bottomrule
\end{tabular}
\end{table}

\clearpage
\section{\Hparams for Supervised Vision}\label{appendix:hps}
\label{appendix:image_cls}
As mentioned in Section \ref{sec:supimg} we compare \sigmareparam against DeiT \citep{touvron2021training} and MAE \citep{he2022masked} supervised training recipes for vision Transformers. In Table \ref{tab:sup-variants} we highlight the differences between DeiT, MAE supervised and \sigmareparam. \sigmareparam presents a simplified and stable training objective for ViT-B variants. In Table \ref{tab:sup-variants-l} we present the same comparing the ViT-L variants. There is no exact 1:1 comparison for a ViT-L with the DeiT training framework so we only compare against the MAE supervised model. 
\begin{table}[h]
  \caption{Training \hparams comparison for supervised ViT-B/16.}
  \label{tab:sup-variants}
  \centering
  \small
  \begin{tabular}{lcccc}
    \toprule
    & DeiT & MAE & \sigmareparam\\
    \midrule
    Top-1 & 81.8\% & 82.1\% & 81.88\% \\
    EMA Top-1 & - & 82.3\% & 82.37\% \\
    \midrule
    Weight Init & \texttt{trunc\_normal(.02)} & \texttt{trunc\_normal(.02)} & \texttt{trunc\_normal(.02)} \\
    Patcher Init     & \texttt{trunc\_normal(.02)} & \texttt{trunc\_normal(.02)} & \texttt{trunc\_normal(.02)} \\
    \sigmareparam     & No & No & Yes \\
    Layer Norm     & Yes & Yes & \textbf{No} \\
    Optimizer & AdamW($\beta_1$=0.9, $\beta_2$=0.95) & AdamW($\beta_1$=0.9, $\beta_2$=0.95) & \textbf{LARS(mom=0.9)} \\
    Base LR & $5\times 10^{-4}$ & $1\times 10^{-4}$ & 0.1 \\
    LR schedule & cosine & cosine & \textbf{step(210, 0.1)} \\
    Batch size & 1024 & 4096 & 4096 \\
    Weight decay & 0.05 & 0.3 & \textbf{0.0} \\
    Warmup epochs & 5 & 20 & \textbf{0} \\
    Training epochs & 300 & 300 & \textbf{250} \\
    Label smoothing & 0.1 & 0.1 & 0.1 \\
    Stoch. Depth & 0.1 & 0.1 & 0.1 \\
    Repeated Aug. & 2 & 2 & 2 \\
    RandAug & 9/0.5 & 9/0.5 & 9/0.5 \\
    Mixup prob. & 0.8 & 0.8 & 0.8 \\
    Cutmix prob. & 1.0 & 1.0 & 1.0 \\
    Erasing prob. & 0.25 & 0.25 & 0.25 \\
    \bottomrule
  \end{tabular}
\end{table}

\begin{table}[ht!]
  \caption{Training \hparams comparison for supervised ViT-L/16.}
  \label{tab:sup-variants-l}
  \centering
  \small
  \begin{tabular}{lcccc}
    \toprule
    & MAE & \sigmareparam\\
    \midrule
    Top-1 & 81.5\% & 82.41\% \\
    EMA Top-1 & 82.6\% & 82.48\% \\
    \midrule
    Weight Init & \texttt{trunc\_normal(.02)} & \texttt{trunc\_normal(.01)} \\
    Patcher Init     &  \texttt{trunc\_normal(.02)} & \texttt{trunc\_normal(.0025)} \\
    \sigmareparam     & No & Yes \\
    Layer Norm     & Yes & \textbf{No} \\
    Optimizer & AdamW($\beta_1$=0.9, $\beta_2$=0.95) & \textbf{LARS(mom=0.9)} \\
    Base LR & $1\times 10^{-4}$ & 0.15 \\
    LR schedule & cosine & cosine \\
    Batch size & 4096 & 4096 \\
    Weight decay & 0.3 & \textbf{0.0} \\
    Warmup epochs & 20 & \textbf{0} \\
    Training epochs & 300 & 300 \\
    Label smoothing & 0.1 & 0.1 \\
    Stoch. Depth & 0.2 & 0.2 \\
    Repeated Aug. & 2 & 2 \\
    RandAug & 9/0.5 & 9/0.5 \\
    Mixup prob. & 0.8 & 0.8 \\
    Cutmix prob. & 1.0 & 1.0 \\
    Erasing prob. & 0.25 & 0.25 \\
    \bottomrule
  \end{tabular}
\end{table}

\clearpage

\section{Ablations}

\paragraph{Initialization for \texorpdfstring{$\sigma$}{}Reparam}

First, we found that it is better to initialize $\gamma$ as 1 and not compute it from the initialized kernel as there could be different values for spectral norm depending on the initialization of the kernel. 
In this case we observed values greater than 1 for the spectral norm which cause divergence / no training. 
We compared different initializations for the kernel and we did not see any differences in initialization (e.g. uniform, normal).
The only factor that influences training behavior is the standard deviation (std) of the initialization pdf, which also influences effective learning rate. 
In speech recognition we found that training is robust with respect to changes of std (Figure~\ref{fig:asr_robustness}), however larger std performs better and sweet spot is 0.2-0.3. In machine translation models are also robust to the choice of std, however some architectures perform better with std of 0.01 while others with 0.1 std. In language modeling we observed robust performance with respect to initialization, and we use the default initialization from the Transformer baseline for all experiments.

In vision we initialize the \sigmareparam $\gamma$ term using the first singular value, computed with the SVD at weight initialization. We then use one power iteration for all further updates. We provide weight and patcher initializations for the ViT-B/16 in Table \ref{tab:sup-variants} and the ViT-L/16 and ViT-H/14 in Table \ref{tab:sup-variants-l}.

\paragraph{Separate \texorpdfstring{$\sigma$}{}Reparam for key, queries and values}
We found that they behave more or less similar while separate normalization allows to achieve lower training loss due to larger capacity ability which provides potential to scale. However, for ASR training with LARS it is better to have joint reparametrization to achieve stable training and comparable results with adaptive optimizers, see Section~\ref{sec:asr_lars}.

\section{Discussion}
We believe that our experiments have covered representative domains, architectures and training losses for typical Transformer applications. The key factors that affect training stability are the initial token embedding layer (where for vision and speech tasks they are based on convolution projections, and for MT and language modeling are with word embeddings), topology of architecture (encoder mode for vision and speech, encoder-decoder for MT and decoder for language modeling), and the training loss (cross-entropy loss, contrastive and CTC loss). While each of these design choices may bring their own challenges for training, we show that entropy collapse is a common symptom accompanying instability and \sigmareparam is a general fix compatible with all settings.

\section{Contributions}
All authors contributed into writing the manuscript, designing experiments and discussion of all results at every stage of the project.

\paragraph{Attention Entropy Collapse Observations}
All initial experiments and findings of the attention entropy collapse phenomenon for ViT models on ImageNet are done by Shuangfei Zhai. Preliminary theoretical analysis and proposal to use \sigmareparam as a solution is also done by Shuangfei Zhai.

\paragraph{Theory} All theoretical results, Appendix~\ref{sec:proof}, are done by Etai Littwin. Review of proofs is done by Tatiana Likhomanenko.

\paragraph{Causality Analysis} Simulated case-control study investigation and all related experimental work done by Dan Busbridge.
Tatiana Likhomanenko, Etai Littwin, Jason Ramapuram, Russ Webb and Shuangfei Zhai helped with designing the experimental setting for intervention methodology.

\paragraph{Supervised Learning in Vision} Shuangfei Zhai conducted the initial \sigmareparam vision experiments with DeiT and made the initial observations of relaxing / removing weight decay from \sigmareparam. Jason Ramapuram scaled and conducted the remaining supervised vision experiments and analysis (including the MAE, weight-norm and spectral-norm baselines) over ImageNet1k (Table \ref{tab:vision_summary}) and Imagenet21k (Table \ref{tab:imagenet21k}) and enabled the drastically simplified \sigmareparam vision transformer recipe from Table \ref{tab:vision_summary}. This simplified \sigmareparam recipe enables SGD (LARS) training over fewer epochs and completely removes \{weight-decay, pre-LN layers, LR warmup and the LR cosine schedule\}.

\paragraph{Self-Supervised Learning in Vision}
Known issues with SimCLR~\citep{chen2020simple} stability that were observed in~\citet{chen2021empirical} pointed out by Vimal Thilak.
All investigations, experiments and related analysis done by Dan Busbridge.

\paragraph{Automatic Speech Recognition}
All speech recognition experiments are done by Tatiana Likhomanenko. Shuangfei Zhai and Jason Ramapuram advised to have also large scale results.

\paragraph{Machine Translation}
Initial implementation and experiments of \sigmareparam applicability to MT were done on WMT'14 by Jiatao Gu. 
Later, Tatiana Likhomanenko pushed to investigate deep transformer models and their stability.
Jason Ramapuram pointed to the deepnorm~\citep{wang2022deepnet} results to probe for entropy collapse phenomenon.
All later experiments, Section~\ref{sec:mt} and Appendix~\ref{sec:app:mt}, with deep transformers and deepnorm are done by Tatiana Likhomanenko.

\paragraph{Language Modeling}
Initial implementation and preliminary results on applicability of \sigmareparam to the language modeling, Appendix~\ref{sec:language-modeling}, are done by Yizhe Zhang with help from Jiatao Gu. Shuangfei Zhai contributed to the experiments and obtained the final results.

\paragraph{Implementation Details and Ablations}
Investigation into how initialization influences \sigmareparam is done in parallel in different domains and experiments by Jason Ramapuram, Shuangfei Zhai and Tatiana Likhomanenko. Investigation of different variants (with stop gradient, with different matrices) of \sigmareparam was done by Shuangfei Zhai, Tatiana Likhomanenko and Jason Ramapuram. Investigation of full precision training vs mixed precision training was done by Tatiana Likhomanenko, Dan Busbridge and Jason Ramapuram. 

Implementation is done in 2 frameworks (PyTorch and Jax) and in 5 codebases. The initial implementation of \sigmareparam module is done in PyTorch by Shuangfei Zhai, with further reimplementation in Jax by Tatiana Likhomanenko. Later the implementation was integrated and adopted into other baseline toolboxes by Jason Ramapuram, Dan Busbridge, Yizhe Zhang, Tatiana Likhomanenko and Jiatao Gu.


\begin{thebibliography}{55}
\providecommand{\natexlab}[1]{#1}
\providecommand{\url}[1]{\texttt{#1}}
\expandafter\ifx\csname urlstyle\endcsname\relax
  \providecommand{\doi}[1]{doi: #1}\else
  \providecommand{\doi}{doi: \begingroup \urlstyle{rm}\Url}\fi

\bibitem[Assran et~al.(2022)Assran, Caron, Misra, Bojanowski, Bordes, Vincent,
  Joulin, Rabbat, and Ballas]{DBLP:journals/corr/abs-2204-07141}
Mahmoud Assran, Mathilde Caron, Ishan Misra, Piotr Bojanowski, Florian Bordes,
  Pascal Vincent, Armand Joulin, Michael~G. Rabbat, and Nicolas Ballas.
\newblock Masked siamese networks for label-efficient learning.
\newblock \emph{CoRR}, abs/2204.07141, 2022.
\newblock \doi{10.48550/arXiv.2204.07141}.
\newblock URL \url{https://doi.org/10.48550/arXiv.2204.07141}.

\bibitem[Ba et~al.(2016)Ba, Kiros, and Hinton]{ba2016layer}
Jimmy~Lei Ba, Jamie~Ryan Kiros, and Geoffrey~E Hinton.
\newblock Layer normalization.
\newblock \emph{arXiv preprint arXiv:1607.06450}, 2016.

\bibitem[Bachlechner et~al.(2021)Bachlechner, Majumder, Mao, Cottrell, and
  McAuley]{bachlechner2021rezero}
Thomas Bachlechner, Bodhisattwa~Prasad Majumder, Henry Mao, Gary Cottrell, and
  Julian McAuley.
\newblock Rezero is all you need: Fast convergence at large depth.
\newblock In \emph{Uncertainty in Artificial Intelligence}, pp.\  1352--1361.
  PMLR, 2021.

\bibitem[Baevski \& Auli(2019)Baevski and Auli]{Baevski2019AdaptiveIR}
Alexei Baevski and Michael Auli.
\newblock Adaptive input representations for neural language modeling.
\newblock In \emph{Proc.\ of ICLR}, 2019.
\newblock URL \url{https://arxiv.org/abs/1809.10853}.

\bibitem[Caron et~al.(2021)Caron, Touvron, Misra, J{\'{e}}gou, Mairal,
  Bojanowski, and Joulin]{DBLP:conf/iccv/CaronTMJMBJ21}
Mathilde Caron, Hugo Touvron, Ishan Misra, Herv{\'{e}} J{\'{e}}gou, Julien
  Mairal, Piotr Bojanowski, and Armand Joulin.
\newblock Emerging properties in self-supervised vision transformers.
\newblock In \emph{2021 {IEEE/CVF} International Conference on Computer Vision,
  {ICCV} 2021, Montreal, QC, Canada, October 10-17, 2021}, pp.\  9630--9640.
  {IEEE}, 2021.
\newblock \doi{10.1109/ICCV48922.2021.00951}.
\newblock URL \url{https://doi.org/10.1109/ICCV48922.2021.00951}.

\bibitem[Chen et~al.(2020)Chen, Kornblith, Norouzi, and
  Hinton]{DBLP:conf/icml/ChenK0H20}
Ting Chen, Simon Kornblith, Mohammad Norouzi, and Geoffrey~E. Hinton.
\newblock A simple framework for contrastive learning of visual
  representations.
\newblock In \emph{Proceedings of the 37th International Conference on Machine
  Learning, {ICML} 2020, 13-18 July 2020, Virtual Event}, volume 119 of
  \emph{Proceedings of Machine Learning Research}, pp.\  1597--1607. {PMLR},
  2020.
\newblock URL \url{http://proceedings.mlr.press/v119/chen20j.html}.

\bibitem[Chen et~al.(2021{\natexlab{a}})Chen, Hsieh, and Gong]{Chen2021WhenVT}
Xiangning Chen, Cho-Jui Hsieh, and Boqing Gong.
\newblock When vision transformers outperform resnets without pretraining or
  strong data augmentations.
\newblock \emph{ArXiv}, abs/2106.01548, 2021{\natexlab{a}}.

\bibitem[Chen et~al.(2021{\natexlab{b}})Chen, Xie, and He]{chen2021empirical}
Xinlei Chen, Saining Xie, and Kaiming He.
\newblock An empirical study of training self-supervised vision transformers.
\newblock In \emph{Proceedings of the IEEE/CVF International Conference on
  Computer Vision}, pp.\  9640--9649, 2021{\natexlab{b}}.

\bibitem[Cohen et~al.(2021)Cohen, Kaur, Li, Kolter, and
  Talwalkar]{DBLP:conf/iclr/CohenKLKT21}
Jeremy~M. Cohen, Simran Kaur, Yuanzhi Li, J.~Zico Kolter, and Ameet Talwalkar.
\newblock Gradient descent on neural networks typically occurs at the edge of
  stability.
\newblock In \emph{9th International Conference on Learning Representations,
  {ICLR} 2021, Virtual Event, Austria, May 3-7, 2021}. OpenReview.net, 2021.
\newblock URL \url{https://openreview.net/forum?id=jh-rTtvkGeM}.

\bibitem[Cohen et~al.(2022)Cohen, Ghorbani, Krishnan, Agarwal, Medapati,
  Badura, Suo, Cardoze, Nado, Dahl, and
  Gilmer]{DBLP:journals/corr/abs-2207-14484}
Jeremy~M. Cohen, Behrooz Ghorbani, Shankar Krishnan, Naman Agarwal, Sourabh
  Medapati, Michal Badura, Daniel Suo, David Cardoze, Zachary Nado, George~E.
  Dahl, and Justin Gilmer.
\newblock Adaptive gradient methods at the edge of stability.
\newblock \emph{CoRR}, abs/2207.14484, 2022.
\newblock \doi{10.48550/arXiv.2207.14484}.
\newblock URL \url{https://doi.org/10.48550/arXiv.2207.14484}.

\bibitem[Deng et~al.(2009)Deng, Dong, Socher, Li, Li, and
  Fei-Fei]{deng2009imagenet}
Jia Deng, Wei Dong, Richard Socher, Li-Jia Li, Kai Li, and Li~Fei-Fei.
\newblock Imagenet: A large-scale hierarchical image database.
\newblock In \emph{2009 IEEE conference on computer vision and pattern
  recognition}, pp.\  248--255. Ieee, 2009.

\bibitem[Ding et~al.(2021)Ding, Zhang, Ma, Han, Ding, and Sun]{ding2021repvgg}
Xiaohan Ding, Xiangyu Zhang, Ningning Ma, Jungong Han, Guiguang Ding, and Jian
  Sun.
\newblock Repvgg: Making vgg-style convnets great again.
\newblock In \emph{Proceedings of the IEEE/CVF Conference on Computer Vision
  and Pattern Recognition}, pp.\  13733--13742, 2021.

\bibitem[Dong et~al.(2022)Dong, Bao, Zhang, Chen, Gu, Zhang, Yuan, Chen, Wen,
  and Yu]{DBLP:journals/corr/abs-2212-06138}
Xiaoyi Dong, Jianmin Bao, Ting Zhang, Dongdong Chen, Shuyang Gu, Weiming Zhang,
  Lu~Yuan, Dong Chen, Fang Wen, and Nenghai Yu.
\newblock {CLIP} itself is a strong fine-tuner: Achieving 85.7{\%} and 88.0{\%}
  top-1 accuracy with vit-b and vit-l on imagenet.
\newblock \emph{CoRR}, abs/2212.06138, 2022.
\newblock \doi{10.48550/arXiv.2212.06138}.
\newblock URL \url{https://doi.org/10.48550/arXiv.2212.06138}.

\bibitem[Dong et~al.(2021)Dong, Cordonnier, and Loukas]{dong2021attention}
Yihe Dong, Jean-Baptiste Cordonnier, and Andreas Loukas.
\newblock Attention is not all you need: Pure attention loses rank doubly
  exponentially with depth.
\newblock In \emph{International Conference on Machine Learning}, pp.\
  2793--2803. PMLR, 2021.

\bibitem[Dosovitskiy et~al.(2020)Dosovitskiy, Beyer, Kolesnikov, Weissenborn,
  Zhai, Unterthiner, Dehghani, Minderer, Heigold, Gelly,
  et~al.]{dosovitskiy2020image}
Alexey Dosovitskiy, Lucas Beyer, Alexander Kolesnikov, Dirk Weissenborn,
  Xiaohua Zhai, Thomas Unterthiner, Mostafa Dehghani, Matthias Minderer, Georg
  Heigold, Sylvain Gelly, et~al.
\newblock An image is worth 16x16 words: Transformers for image recognition at
  scale.
\newblock \emph{arXiv preprint arXiv:2010.11929}, 2020.

\bibitem[Duchi et~al.(2011)Duchi, Hazan, and Singer]{duchi2011adaptive}
John Duchi, Elad Hazan, and Yoram Singer.
\newblock Adaptive subgradient methods for online learning and stochastic
  optimization.
\newblock \emph{Journal of machine learning research}, 12\penalty0
  (Jul):\penalty0 2121--2159, 2011.

\bibitem[Foret et~al.(2020)Foret, Kleiner, Mobahi, and
  Neyshabur]{Foret2020SharpnessAwareMF}
Pierre Foret, Ariel Kleiner, Hossein Mobahi, and Behnam Neyshabur.
\newblock Sharpness-aware minimization for efficiently improving
  generalization.
\newblock \emph{ArXiv}, abs/2010.01412, 2020.

\bibitem[Ghorbani et~al.(2019)Ghorbani, Krishnan, and
  Xiao]{DBLP:conf/icml/GhorbaniKX19}
Behrooz Ghorbani, Shankar Krishnan, and Ying Xiao.
\newblock An investigation into neural net optimization via hessian eigenvalue
  density.
\newblock In Kamalika Chaudhuri and Ruslan Salakhutdinov (eds.),
  \emph{Proceedings of the 36th International Conference on Machine Learning,
  {ICML} 2019, 9-15 June 2019, Long Beach, California, {USA}}, volume~97 of
  \emph{Proceedings of Machine Learning Research}, pp.\  2232--2241. {PMLR},
  2019.
\newblock URL \url{http://proceedings.mlr.press/v97/ghorbani19b.html}.

\bibitem[Gilmer et~al.(2021)Gilmer, Ghorbani, Garg, Kudugunta, Neyshabur,
  Cardoze, Dahl, Nado, and Firat]{DBLP:journals/corr/abs-2110-04369}
Justin Gilmer, Behrooz Ghorbani, Ankush Garg, Sneha Kudugunta, Behnam
  Neyshabur, David Cardoze, George~E. Dahl, Zachary Nado, and Orhan Firat.
\newblock A loss curvature perspective on training instability in deep
  learning.
\newblock \emph{CoRR}, abs/2110.04369, 2021.
\newblock URL \url{https://arxiv.org/abs/2110.04369}.

\bibitem[Graves et~al.(2006)Graves, Fern{\'a}ndez, Gomez, and
  Schmidhuber]{graves2006connectionist}
Alex Graves, Santiago Fern{\'a}ndez, Faustino Gomez, and J{\"u}rgen
  Schmidhuber.
\newblock Connectionist temporal classification: labelling unsegmented sequence
  data with recurrent neural networks.
\newblock In \emph{Proceedings of the 23rd international conference on Machine
  learning}, pp.\  369--376, 2006.

\bibitem[He et~al.(2022)He, Chen, Xie, Li, Doll{\'a}r, and
  Girshick]{he2022masked}
Kaiming He, Xinlei Chen, Saining Xie, Yanghao Li, Piotr Doll{\'a}r, and Ross
  Girshick.
\newblock Masked autoencoders are scalable vision learners.
\newblock In \emph{Proceedings of the IEEE/CVF Conference on Computer Vision
  and Pattern Recognition}, pp.\  16000--16009, 2022.

\bibitem[Huang et~al.(2020)Huang, Perez, Ba, and Volkovs]{huang2020improving}
Xiao~Shi Huang, Felipe Perez, Jimmy Ba, and Maksims Volkovs.
\newblock Improving transformer optimization through better initialization.
\newblock In \emph{International Conference on Machine Learning}, pp.\
  4475--4483. PMLR, 2020.

\bibitem[Kingma \& Ba(2015)Kingma and Ba]{DBLP:journals/corr/KingmaB14}
Diederik~P. Kingma and Jimmy Ba.
\newblock Adam: {A} method for stochastic optimization.
\newblock In Yoshua Bengio and Yann LeCun (eds.), \emph{3rd International
  Conference on Learning Representations, {ICLR} 2015, San Diego, CA, USA, May
  7-9, 2015, Conference Track Proceedings}, 2015.
\newblock URL \url{http://arxiv.org/abs/1412.6980}.

\bibitem[Li et~al.(2022)Li, Bhojanapalli, Zaheer, Reddi, and
  Kumar]{li2022robust}
Zhiyuan Li, Srinadh Bhojanapalli, Manzil Zaheer, Sashank Reddi, and Sanjiv
  Kumar.
\newblock Robust training of neural networks using scale invariant
  architectures.
\newblock In \emph{International Conference on Machine Learning}, pp.\
  12656--12684. PMLR, 2022.

\bibitem[Likhomanenko et~al.(2021{\natexlab{a}})Likhomanenko, Xu, Kahn,
  Synnaeve, and Collobert]{likhomanenko2020slimipl}
Tatiana Likhomanenko, Qiantong Xu, Jacob Kahn, Gabriel Synnaeve, and Ronan
  Collobert.
\newblock slimipl: Language-model-free iterative pseudo-labeling.
\newblock \emph{Proc. Interspeech}, 2021{\natexlab{a}}.

\bibitem[Likhomanenko et~al.(2021{\natexlab{b}})Likhomanenko, Xu, Pratap,
  Tomasello, Kahn, Avidov, Collobert, and Synnaeve]{likhomanenko2020rethinking}
Tatiana Likhomanenko, Qiantong Xu, Vineel Pratap, Paden Tomasello, Jacob Kahn,
  Gilad Avidov, Ronan Collobert, and Gabriel Synnaeve.
\newblock Rethinking evaluation in asr: Are our models robust enough?
\newblock \emph{Proc. Interspeech}, 2021{\natexlab{b}}.

\bibitem[Likhomanenko et~al.(2021{\natexlab{c}})Likhomanenko, Xu, Synnaeve,
  Collobert, and Rogozhnikov]{likhomanenko2021cape}
Tatiana Likhomanenko, Qiantong Xu, Gabriel Synnaeve, Ronan Collobert, and Alex
  Rogozhnikov.
\newblock Cape: Encoding relative positions with continuous augmented
  positional embeddings.
\newblock \emph{Advances in Neural Information Processing Systems}, 34,
  2021{\natexlab{c}}.

\bibitem[Liu et~al.(2020{\natexlab{a}})Liu, Liu, Gao, Chen, and
  Han]{liu2020admin}
Liyuan Liu, Xiaodong Liu, Jianfeng Gao, Weizhu Chen, and Jiawei Han.
\newblock Understanding the difficulty of training transformers.
\newblock In \emph{Proceedings of the 2020 Conference on Empirical Methods in
  Natural Language Processing (EMNLP 2020)}, 2020{\natexlab{a}}.

\bibitem[Liu et~al.(2020{\natexlab{b}})Liu, Duh, Liu, and Gao]{liu_deep_2020}
Xiaodong Liu, Kevin Duh, Liyuan Liu, and Jianfeng Gao.
\newblock Very deep transformers for neural machine translation.
\newblock In \emph{arXiv:2008.07772 [cs]}, 2020{\natexlab{b}}.

\bibitem[Merity et~al.(2017)Merity, Xiong, Bradbury, and Socher]{wiki103}
Stephen Merity, Caiming Xiong, James Bradbury, and Richard Socher.
\newblock Pointer sentinel mixture models.
\newblock In \emph{Proc.\ of ICLR}, 2017.
\newblock URL \url{https://arxiv.org/abs/1609.07843}.

\bibitem[Mises \& Pollaczek-Geiringer(1929)Mises and
  Pollaczek-Geiringer]{mises1929praktische}
RV~Mises and Hilda Pollaczek-Geiringer.
\newblock Praktische verfahren der gleichungsaufl{\"o}sung.
\newblock \emph{ZAMM-Journal of Applied Mathematics and Mechanics/Zeitschrift
  f{\"u}r Angewandte Mathematik und Mechanik}, 9\penalty0 (1):\penalty0 58--77,
  1929.

\bibitem[Miyato et~al.(2018)Miyato, Kataoka, Koyama, and
  Yoshida]{miyato2018spectral}
Takeru Miyato, Toshiki Kataoka, Masanori Koyama, and Yuichi Yoshida.
\newblock Spectral normalization for generative adversarial networks.
\newblock In \emph{International Conference on Learning Representations}, 2018.

\bibitem[Nguyen \& Salazar(2019)Nguyen and Salazar]{nguyen2019transformers}
Toan~Q Nguyen and Julian Salazar.
\newblock Transformers without tears: Improving the normalization of
  self-attention.
\newblock In \emph{Proceedings of the 16th International Conference on Spoken
  Language Translation}, 2019.

\bibitem[Noci et~al.(2022)Noci, Anagnostidis, Biggio, Orvieto, Singh, and
  Lucchi]{noci2022signal}
Lorenzo Noci, Sotiris Anagnostidis, Luca Biggio, Antonio Orvieto, Sidak~Pal
  Singh, and Aurelien Lucchi.
\newblock Signal propagation in transformers: Theoretical perspectives and the
  role of rank collapse.
\newblock \emph{arXiv preprint arXiv:2206.03126}, 2022.

\bibitem[Okuta et~al.(2017)Okuta, Unno, Nishino, Hido, and
  Loomis]{cupy_learningsys2017}
Ryosuke Okuta, Yuya Unno, Daisuke Nishino, Shohei Hido, and Crissman Loomis.
\newblock Cupy: A numpy-compatible library for nvidia gpu calculations.
\newblock In \emph{Proceedings of Workshop on Machine Learning Systems
  (LearningSys) in The Thirty-first Annual Conference on Neural Information
  Processing Systems (NIPS)}, 2017.
\newblock URL \url{http://learningsys.org/nips17/assets/papers/paper_16.pdf}.

\bibitem[Ott et~al.(2019)Ott, Edunov, Baevski, Fan, Gross, Ng, Grangier, and
  Auli]{ott2019fairseq}
Myle Ott, Sergey Edunov, Alexei Baevski, Angela Fan, Sam Gross, Nathan Ng,
  David Grangier, and Michael Auli.
\newblock fairseq: A fast, extensible toolkit for sequence modeling.
\newblock In \emph{Proceedings of the 2019 Conference of the North American
  Chapter of the Association for Computational Linguistics (Demonstrations)},
  pp.\  48--53, 2019.

\bibitem[Panayotov et~al.(2015)Panayotov, Chen, Povey, and
  Khudanpur]{panayotov2015librispeech}
Vassil Panayotov, Guoguo Chen, Daniel Povey, and Sanjeev Khudanpur.
\newblock Librispeech: an asr corpus based on public domain audio books.
\newblock In \emph{2015 IEEE International Conference on Acoustics, Speech and
  Signal Processing (ICASSP)}, pp.\  5206--5210. IEEE, 2015.

\bibitem[Park et~al.(2019)Park, Chan, Zhang, Chiu, Zoph, Cubuk, and
  Le]{park2019specaug}
Daniel~S Park, William Chan, Yu~Zhang, Chung-Cheng Chiu, Barret Zoph, Ekin~D
  Cubuk, and Quoc~V Le.
\newblock Specaugment: A simple data augmentation method for automatic speech
  recognition.
\newblock \emph{Proc. Interspeech 2019}, pp.\  2613--2617, 2019.

\bibitem[Pearl(2009)]{Pearl09}
Judea Pearl.
\newblock \emph{Causality}.
\newblock Cambridge University Press, Cambridge, UK, 2 edition, 2009.
\newblock ISBN 978-0-521-89560-6.
\newblock \doi{10.1017/CBO9780511803161}.

\bibitem[Press \& Wolf(2017)Press and Wolf]{Press2017UsingTO}
Ofir Press and Lior Wolf.
\newblock Using the output embedding to improve language models.
\newblock In \emph{Proc.\ of EACL}, 2017.
\newblock URL \url{https://arxiv.org/abs/1608.05859}.

\bibitem[Press et~al.(2021)Press, Smith, and Lewis]{shortformer}
Ofir Press, Noah~A. Smith, and Mike Lewis.
\newblock Shortformer: Better language modeling using shorter inputs, 2021.
\newblock URL \url{https://arxiv.org/abs/2012.15832}.

\bibitem[Radford et~al.(2019)Radford, Wu, Child, Luan, Amodei, Sutskever,
  et~al.]{radford2019language}
Alec Radford, Jeffrey Wu, Rewon Child, David Luan, Dario Amodei, Ilya
  Sutskever, et~al.
\newblock Language models are unsupervised multitask learners.
\newblock \emph{OpenAI blog}, 1\penalty0 (8):\penalty0 9, 2019.

\bibitem[Ridnik et~al.(2021)Ridnik, Ben-Baruch, Noy, and
  Zelnik-Manor]{ridnik2021imagenet21k}
Tal Ridnik, Emanuel Ben-Baruch, Asaf Noy, and Lihi Zelnik-Manor.
\newblock Imagenet-21k pretraining for the masses, 2021.

\bibitem[Salimans \& Kingma(2016)Salimans and Kingma]{salimans2016weight}
Tim Salimans and Durk~P Kingma.
\newblock Weight normalization: A simple reparameterization to accelerate
  training of deep neural networks.
\newblock \emph{Advances in neural information processing systems}, 29, 2016.

\bibitem[Shaw et~al.(2018)Shaw, Uszkoreit, and Vaswani]{shaw2018self}
Peter Shaw, Jakob Uszkoreit, and Ashish Vaswani.
\newblock Self-attention with relative position representations.
\newblock In \emph{Proceedings of the 2018 Conference of the North American
  Chapter of the Association for Computational Linguistics: Human Language
  Technologies, Volume 2 (Short Papers)}, pp.\  464--468, 2018.

\bibitem[Shleifer et~al.(2021)Shleifer, Weston, and
  Ott]{shleifer2021normformer}
Sam Shleifer, Jason Weston, and Myle Ott.
\newblock Normformer: Improved transformer pretraining with extra
  normalization.
\newblock \emph{arXiv preprint arXiv:2110.09456}, 2021.

\bibitem[Touvron et~al.(2021)Touvron, Cord, Douze, Massa, Sablayrolles, and
  J{\'e}gou]{touvron2021training}
Hugo Touvron, Matthieu Cord, Matthijs Douze, Francisco Massa, Alexandre
  Sablayrolles, and Herv{\'e} J{\'e}gou.
\newblock Training data-efficient image transformers \& distillation through
  attention.
\newblock In \emph{International Conference on Machine Learning}, pp.\
  10347--10357. PMLR, 2021.

\bibitem[Touvron et~al.(2022)Touvron, Cord, and
  J{\'{e}}gou]{DBLP:conf/eccv/TouvronCJ22}
Hugo Touvron, Matthieu Cord, and Herv{\'{e}} J{\'{e}}gou.
\newblock Deit {III:} revenge of the vit.
\newblock In Shai Avidan, Gabriel~J. Brostow, Moustapha Ciss{\'{e}},
  Giovanni~Maria Farinella, and Tal Hassner (eds.), \emph{Computer Vision -
  {ECCV} 2022: 17th European Conference, Tel Aviv, Israel, October 23-27, 2022,
  Proceedings, Part {XXIV}}, volume 13684 of \emph{Lecture Notes in Computer
  Science}, pp.\  516--533. Springer, 2022.
\newblock \doi{10.1007/978-3-031-20053-3\_30}.
\newblock URL \url{https://doi.org/10.1007/978-3-031-20053-3\_30}.

\bibitem[Vaswani et~al.(2017)Vaswani, Shazeer, Parmar, Uszkoreit, Jones, Gomez,
  Kaiser, and Polosukhin]{vaswani2017attention}
Ashish Vaswani, Noam Shazeer, Niki Parmar, Jakob Uszkoreit, Llion Jones,
  Aidan~N Gomez, {\L}ukasz Kaiser, and Illia Polosukhin.
\newblock Attention is all you need.
\newblock In \emph{Advances in neural information processing systems}, pp.\
  5998--6008, 2017.

\bibitem[Wang et~al.(2022)Wang, Ma, Dong, Huang, Zhang, and
  Wei]{wang2022deepnet}
Hongyu Wang, Shuming Ma, Li~Dong, Shaohan Huang, Dongdong Zhang, and Furu Wei.
\newblock Deepnet: Scaling transformers to 1,000 layers.
\newblock \emph{arXiv preprint arXiv:2203.00555}, 2022.

\bibitem[Yang et~al.(2022)Yang, Hu, Babuschkin, Sidor, Liu, Farhi, Ryder,
  Pachocki, Chen, and Gao]{yang2022tensor}
Greg Yang, Edward~J Hu, Igor Babuschkin, Szymon Sidor, Xiaodong Liu, David
  Farhi, Nick Ryder, Jakub Pachocki, Weizhu Chen, and Jianfeng Gao.
\newblock Tensor programs v: Tuning large neural networks via zero-shot
  hyperparameter transfer.
\newblock \emph{arXiv preprint arXiv:2203.03466}, 2022.

\bibitem[Yao et~al.(2020)Yao, Gholami, Keutzer, and
  Mahoney]{DBLP:conf/bigdataconf/YaoGKM20}
Zhewei Yao, Amir Gholami, Kurt Keutzer, and Michael~W. Mahoney.
\newblock Pyhessian: Neural networks through the lens of the hessian.
\newblock In Xintao Wu, Chris Jermaine, Li~Xiong, Xiaohua Hu, Olivera Kotevska,
  Siyuan Lu, Weija Xu, Srinivas Aluru, Chengxiang Zhai, Eyhab Al{-}Masri,
  Zhiyuan Chen, and Jeff Saltz (eds.), \emph{2020 {IEEE} International
  Conference on Big Data {(IEEE} BigData 2020), Atlanta, GA, USA, December
  10-13, 2020}, pp.\  581--590. {IEEE}, 2020.
\newblock \doi{10.1109/BigData50022.2020.9378171}.
\newblock URL \url{https://doi.org/10.1109/BigData50022.2020.9378171}.

\bibitem[You et~al.(2017)You, Gitman, and Ginsburg]{you2017large}
Yang You, Igor Gitman, and Boris Ginsburg.
\newblock Large batch training of convolutional networks.
\newblock \emph{arXiv preprint arXiv:1708.03888}, 2017.

\bibitem[Zhai et~al.(2022)Zhai, Kolesnikov, Houlsby, and
  Beyer]{DBLP:conf/cvpr/Zhai0HB22}
Xiaohua Zhai, Alexander Kolesnikov, Neil Houlsby, and Lucas Beyer.
\newblock Scaling vision transformers.
\newblock In \emph{{IEEE/CVF} Conference on Computer Vision and Pattern
  Recognition, {CVPR} 2022, New Orleans, LA, USA, June 18-24, 2022}, pp.\
  1204--1213. {IEEE}, 2022.
\newblock \doi{10.1109/CVPR52688.2022.01179}.
\newblock URL \url{https://doi.org/10.1109/CVPR52688.2022.01179}.

\bibitem[Zhang et~al.(2018)Zhang, Dauphin, and Ma]{zhang2018fixup}
Hongyi Zhang, Yann~N Dauphin, and Tengyu Ma.
\newblock Fixup initialization: Residual learning without normalization.
\newblock In \emph{International Conference on Learning Representations}, 2018.

\end{thebibliography}
\end{document}